\begin{document}

%

%

\twocolumn[
\aistatstitle{Operator Models for Continuous-Time Offline Reinforcement Learning}


\runningauthor{Hoischen, Bevanda, Beier, Sosnowski, Houska, and Hirche}
\aistatsauthor{%
  Nicolas Hoischen \\ TU Munich \And
  Petar Bevanda \\ TU Munich \And
  Max Beier \\ TU Munich \AND
  Stefan Sosnowski \\ TU Munich \And
  Boris Houska \\ ShanghaiTech University \And
  Sandra Hirche \\ TU Munich
}
\aistatsaddress{}
]

\begin{abstract}
Continuous-time stochastic processes underlie many natural and engineered systems. In healthcare, autonomous driving, and industrial control, direct interaction with the environment is often unsafe or impractical, motivating offline reinforcement learning from historical data. However, there is limited statistical understanding of the approximation errors inherent in learning policies from offline datasets.
We address this by linking reinforcement learning to the Hamilton–Jacobi–Bellman equation and proposing an operator-theoretic algorithm based on a simple dynamic programming recursion. Specifically, we represent our world model in terms of the infinitesimal generator of controlled diffusion processes learned in a reproducing kernel Hilbert space. By integrating statistical learning methods and operator theory, we establish global convergence of the value function and derive finite-sample guarantees with bounds tied to system properties such as smoothness and stability.
Our theoretical and numerical results indicate that operator-based approaches may hold promise in solving offline reinforcement learning using continuous-time optimal control.  

\end{abstract}
\begin{figure}[!t]
    \centering
    \includegraphics[width=0.48\textwidth]{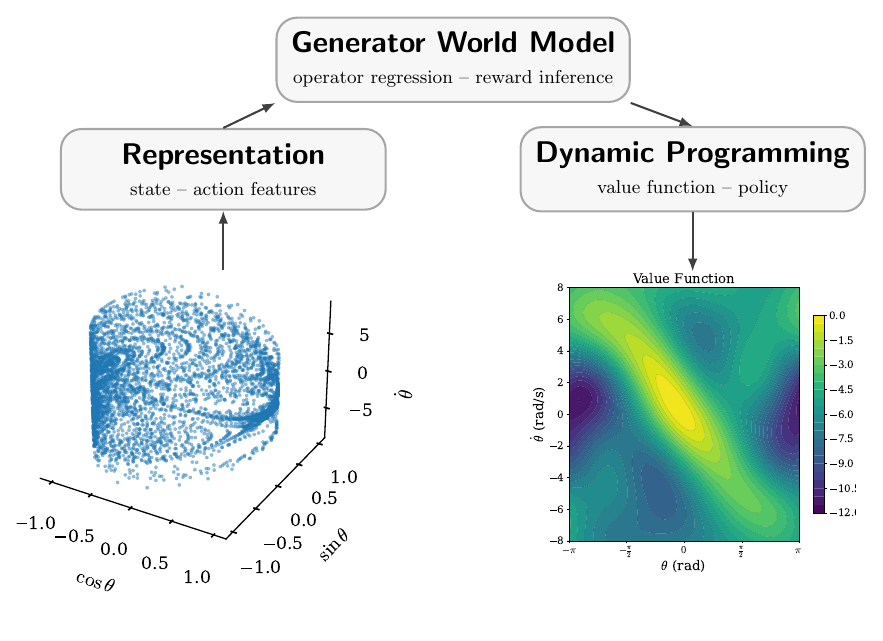}
    \vspace{-1em}
    \caption{Overview of the $\textsf{O-CTRL}$ algorithm: a generator world model based on an RKHS representation of state–action data enables dynamic programming for optimal value functions, illustrated on the swing-up pendulum task from \texttt{Gymnasium} \citep{towers2024gymnasium}.}
    \label{fig:pendulumPlot}
\end{figure}
\section{INTRODUCTION}
A wide variety of phenomena, from the motion of molecules, the nerve activation in our brains, the value of a stock in the financial market, to the dynamics of robotic systems, can be modeled as random processes governed by a stochastic differential equation~\citep{sarkka2019applied}. 
Consequently, making decisions in these processes that optimize a quantity of interest is a common task across various disciplines. Examples include the expected return-on-investment in finance, the distance to a goal in robotics, or the yield in chemical engineering. Tools that have been immensely successful are optimal control when accurate reduced models are available or reinforcement learning, when interaction with the environment (or a simulator) is possible. However, some problems do not allow for interaction and have no accurate simulators or surrogate models. 
Offline reinforcement learning (RL) attempts to build a decision-making policy given a reward signal from historical observation data. This enables policy learning without requiring online environment interaction.
The paradigm seeks to transfer the data-first approach behind breakthroughs in computer vision and natural language to decision-making contexts in which online interaction is costly, time-consuming, or infeasible.
\paragraph{Offline RL}
Despite substantial empirical progress in offline reinforcement learning \citep{levine2020offline, prudencio2023survey}, its theory is well-developed only for discrete-time Markov Decision Processes (MDPs)~\citep{williams2017information, yu2020mopo, yu2021combo}. 
In terms of the natural SDE description, we lack an end-to-end learning-theoretic understanding of offline RL for these systems:
\begin{quote}
    \textit{How much data is necessary? When is reliable offline RL in principle possible?}
\end{quote}
In the discrete-time setting, various offline RL methods exist both for the model-based~\citep{yu2020mopo, kidambi2020morel, yu2021combo, williams2017information} and model-free case~\citep{fujimoto2019off, kostrikov2021offline, kostrikov2022offline, burns2023offline}. The available methods for continuous-time are mostly model-free~\citep{jia2023q, wiltzer2024action} or do not provide a learning theory~\citep{holt2023neural}. For a comprehensive review of offline RL, we refer to~\citep{prudencio2023survey}.

\paragraph{Optimal Control}
Under certain technical assumptions, performing RL is equivalent to solving an optimal control problem. It is, therefore, natural to turn to optimal control, as it is a well-studied field. The connection between RL and the numerical analysis of the optimal control problem has been recognized and investigated in~\cite{munos2000study, doya2000reinforcement}. However, the communities have diverged significantly, with recent works trying to reconnect them still focusing on the LQR case~\citep{wang2020reinforcement} or requiring interaction with the environment~\citep{jia2022policya}. Meanwhile, numerical schemes constructed to solve optimal control problems lack a statistical learning theory explaining their dependence on the problem and sample complexity ~\citep{blessing2025convergence,lutter2020hjb,Lutter2023,shilova2024learning,halperin2024distributional,Meng2024}. 

\paragraph{Operator  Learning}
Operator models provide an approach to learning relationships in feature space~\citep{GrunewalderLBPGPJ2012, li2022optimal}. They enable a fundamentally different perspective on reinforcement learning and optimal control: rather than modeling the system dynamics through stochastic differential equations or transition kernels, they model the evolution of probability densities via Markov semigroups~\citep{engel2000one,korda2018convergence,brunton2022modern,Kostic2022LearningSpaces,kostic2024learningGenerator,kostic_laplace_2024}. This shift is attractive because it directly captures the distributional evolution of states, which is central in policy evaluation and optimization/planning, and it allows one to exploit tools from functional analysis and operator theory.
Recent operator-learning approaches highlight the promise of modeling conditional expectations/distributions for control and reinforcement learning \citep{distribBellman}. In discrete time, this has led to policy mirror descent~\citep{novelli2024operator} and LQ control extensions~\citep{Caldarelli2024},
However, the above discrete-time methods do not extend naturally to infinite action spaces or beyond quadratic rewards and linear models.
In continuous time,~\citep{bevanda2025} proposed a kernel-based formulation that yields promising results but puts unrealistic assumptions on data collection and lacks a systematic error analysis. Our work overcomes these limitations while maintaining the strengths of operator modeling.

\paragraph{Contribution}
In this paper, we adopt an operator-based perspective to decompose the offline continuous-time RL problem, including world model operator learning and policy optimization. This separation enables us to construct an Operator-based Continuous-Time offline Reinforcement Learning ($\textsf{O-CTRL}$) algorithm (\Cref{fig:pendulumPlot}) that solves the RL problem for the optimal value function. Utilizing statistical learning theory for linear operator learning and convex optimization, we establish global convergence of the value function and derive finite-sample guarantees with bounds tied to system properties such as smoothness and stability.

\paragraph{Organization of the Paper}
Section \ref{sec:background-and-problem} reviews the necessary background and formalizes the continuous offline RL setting. In Section \ref{sec:OfflineRLMeetsOptimalControl}, we present the key steps in deriving $\textsf{O-CTRL}$ and give a theoretical analysis in section~\ref{sec:end-to-end-bounds}. Section \ref{sec:implementation-examples} illustrates the theory with numerical examples. The appendix contains additional related work, proofs, and more details on experimental results.

\section{PRELIMINARIES}\label{sec:background-and-problem}
\paragraph{Notation}
We write $[n] \coloneqq \{1,\dots,n\}$ for integers, $\odot$ for the Hadamard product. Let the state space be $\spX \subseteq \mathbb{R}^{n_s}$, the action space be $\spU \subseteq \mathbb{R}^{n_a}$ and $\spZ\defeq \spX{\times}\spU$. The set of $k$-times continuously differentiable functions is denoted by $C^k(\IN)$. For a measure $\mu$ on $\IN$, we denote by $\LIN(\IN)$ the space of square-integrable functions and by $H^k(\IN)$ the Sobolev space of $k$-times weakly differentiable functions with square-integrable derivatives. $\HS{\mathcal{F},\mathcal{Y}}$ denotes the space of Hilbert–Schmidt operators with norm $\|A\|_{\HS{\mathcal{F},\mathcal{Y}}}^2 \coloneqq \sum_i \|Ae_i\|_{\mathcal{Y}}^2$ for any orthonormal basis $\{e_i\}$ of $\mathcal{F}$. We write $\innerprod{\cdot, \cdot}$ for the inner product. Finally, the symbols $\nabla_{\bx}$ and $\Delta_{\bx}$ denote the gradient and Laplacian with respect to the vector-valued variable $\bx$.

\paragraph{Continuous-Time Markov Decision Processes}
A continuous-time Markov Decision Process is described by an SDE influenced by actions $\bu_t$, typically on an unbounded domain $\IN=\R^{n_s}$,
\begin{align}
     d\bX_t &= \bigl( \bm{f}(\bX_t) + \bm{G}(\bX_t)\, \bu_t \bigr) \, dt + \sqrt{2\epsilon}\, dW_t, \label{eq:rl_sde}
\end{align}
for $t \geq 0$, where $W_t$ is a standard Wiener process and $\epsilon$ the diffusion parameter. Here, we focus on action-affine systems with state-independent process noise~\eqref{eq:rl_sde}, because they are often sufficient to capture the behavior of many continuous-time cyber-physical systems \citep{Nijmeijer96}. Additionally, we introduce a regularity assumption:

\begin{assumption}[Dynamics] We assume that \label{asm:dynamics} $\bm{f} \in C^1(\IN)^{n_s}$, $\bm{G} \in C^1(\IN)^{n_s \times n_a}$ and $\epsilon > 0$.
\end{assumption}

\paragraph{Operator Models}
Conditional expectations of observables $\phi \in \LIN(\spX)$ are described by conditional expectation operators~\citep{kallenberg1997foundations}. The conditional expectation operators with respect to $\bX_t$ form an evolution family $\Gamma_{u,t}$ where ${u\ge t\ge0}$.
\begin{align}
    [\Gamma_{u,t} \phi](\bx, \bu (\tau)) = \expect \left [\phi(\bX_u) \mid \bX_t {=} \bx, \bu (\tau) \right], \tau{\in}[u, t]
\end{align}
Its infinitesimal generator $\mathcal{G}_t$ at time $t$ is defined on a domain with sufficient regularity $D(\mathcal{G})$~\citep{engel2000one}. In particular, the generator associated with~\eqref{eq:rl_sde} is
\begin{equation}
  [\IG_t \phi](\bx, \bu) = [\AbIG \phi] (\bx)+[\BbIG_t \phi](\bx, \bu),
  \label{eq:infgen_rl}
\end{equation}
which splits into the autonomous and action-dependent dynamics, respectively:
\begin{subequations}\label{eq:gen-split}
\begin{align}
    [\AbIG \phi] (\bx)
    &= \nabla_{\bx} \phi (\bx)^\top \bm{f}(\bx) + \epsilon\, \Delta_{\bx}\phi(\bx),
    \label{eq:autonmGen}\\
    [\BbIG_t \phi](\bx,\bu)
    &= \nabla_{\bx}\phi(\bx)^\top \bm{G}(\bx)\,\bu_t.
    \label{eq:ctrlGen}
\end{align}
\end{subequations}
As the time dependence in $\mathcal{G}_t$ is not from the transition dynamics but from actions $\bu_t$, we drop their subscripts $t$ when in the infinitesimal setting.
\paragraph{Reinforcement Learning}
For any reward $r(\bx, \bu): \R^{n_s} \times \R^{n_a} \to \R$  over state-action pairs, the goal of reinforcement learning is to find the stationary policy $\bpi$ maximizing the expected discounted return -- the value function,
\begin{align}\label{eq:expected-return} 
     V(\bx)  \defeq  \expect \left[\textstyle\int_0^\infty e^{-\rho t} r\left( \bX_t, \bpi(\bX_t) \right) \, dt \Big| \bX_0=\bx\right],
\end{align}
with a discount exponent $\rho>0$.
To connect~\eqref{eq:expected-return} with the operator model in~\eqref{eq:infgen_rl} we introduce the substitution operator $P^\bpi: \LOUT(\spX \times \spU) \to \LIN (\spX)$. It replaces open-loop actions with the output of a policy $\bu = \bpi(\bx)$
\begin{align}
    [P^\bpi r](\bx) = r(\bx, \bpi (\bx)).
\end{align}
Hence, the generator of~\eqref{eq:rl_sde} under a policy reads
\begin{equation}\label{eq:infgen_under_policy}
  P^\bpi\IG = \AbIG + P^\bpi\BbIG
  .
\end{equation}

Now, instead of sampling the SDE \eqref{eq:rl_sde} and approximating the expected return~\eqref{eq:expected-return}, we can use Fubini's theorem to model the conditional expectation directly.
\begin{align}
     V(\bx)&= \textstyle\int_0^\infty  e^{-\rho t} \left[ P^\bpi \Gamma_{t,0} (P^\bpi r)
    \right] (\bx)  dt\label{eq:expected-return2} \\
    &=  \big[\big( \rho I -P^\bpi\mathcal{G} \big)^{-1} (P^\bpi r)
    \big](\bx)  \label{eq:ResolventValFun}
\end{align}
The inverse in \eqref{eq:ResolventValFun} is exactly the definition of the \textit{resolvent operator}~\citep[1.10]{engel2000one}
$$R_{\rho}^\bpi(\mathcal{G})\defeq \big( \rho I -P^{\bpi} \mathcal{G} \big)^{-1}$$
applied to the reward function under any policy $\bpi$ for which the SDE~\eqref{eq:rl_sde} admits a well-defined solution.
As the resolvent \textit{linearly maps} the reward to the value function, this establishes the infinitesimal generator as a fundamental object in continuous-time Markov decision processes.

\paragraph{Hamilton-Jacobi-Bellman Equation}
Before connecting to optimal control theory, we recognize that the identity in~\eqref{eq:ResolventValFun} characterizes $V$ for any \emph{fixed} $\bpi$. To obtain the \textit{optimal} value function, we optimize over policies. The resulting optimal value function $\Vstar$ must satisfy the stationary discounted HJB
\begin{equation}
  \max_{{
    \|\bpi\|_{L^\infty} < \infty}}\Big\{\mathcal [P^\bpi\mathcal{G}\Vstar](\bx) + r(\bx,\bpi(s))\Big\}=\rho\Vstar(\bx)
  ,
  \label{eq:discountHJB1}
\end{equation}
expressed in terms of the generator and the substitution operator. Precise conditions under which the HJB \eqref{eq:discountHJB1} has a well-defined solution $\Vstar$ can be found in \cite{houska2025convex, fleming2006controlled}.

\subsection{Problem Statement} 
Given the recorded data of a continuous-time system in infinitesimal form
\begin{align}
    \Set{D}_{N} {=} \bigl\{\, \dot{\bx}^{(i)},\, (\bm{s}^{(i)}, \bm{a}^{(i)})\,\bigr\}_{i\in[N]}, 
    \label{eq:dataset} 
\end{align}

our objective is to find the optimal value function
\begin{equation}
    \Vstar \defeq  \max_{{
    \|\bpi\|_{L^\infty} < \infty}} \big( \rho I -P^\bpi \mathcal{G} \big)^{-1} P^\bpi r,
    \label{eq:optValFun}
\end{equation}
which maximizes the expected cumulative discounted rewards in \eqref{eq:expected-return}. We aim to derive an algorithm $\textsf{O-CTRL}$ based on the following properties:
\begin{enumerate}[label={$\mathbf{(P1)}$},leftmargin=7ex,nolistsep] \item\label{eq:jointUpdate} \textbf{World Model:} By learning an approximation of the infinitesimal generator $\mathcal{G}$, the knowledge of the \textit{reward is required only at inference}.\end{enumerate} 
\begin{enumerate}[label={$\mathbf{(P2)}$},leftmargin=7ex,nolistsep] \item\label{eq:Cvg} \textbf{Formal Guarantees:} Provide explicit, \textit{interpretable error bounds} on the algorithm’s output. \end{enumerate} \begin{enumerate}[label={$\mathbf{(P3)}$},leftmargin=7ex,nolistsep] \item\label{eq:Ope} \textbf{Simple Algorithm:} Optimization is carried out through recursive dynamic programming updates, \textit{implemented by a single for-loop} (scan).

\end{enumerate}

In a nutshell, our goal is an end-to-end pipeline: we start from dynamical system data to construct a value-function approximation with guarantees \ref{eq:Cvg}. These are obtained by relating operator world model \ref{eq:jointUpdate} learning errors to value-function errors. Building on RKHS-based operator models, we formulate a dynamic-programming recursion that jointly updates value and policy and consists only of a single for-loop scan \ref{eq:Ope}. The introduced building blocks are key in making our approach modular: reward shaping or task changes become plug-and-play without retraining the world model.

\section{OFFLINE RL MEETS OPTIMAL CONTROL}\label{sec:OfflineRLMeetsOptimalControl}
After defining the policy evaluation as a linear operator \eqref{eq:ResolventValFun}, we aim to optimize the policy, thereby maximizing the value function. However, this is challenging for several reasons.

\paragraph{Policy Optimization is Nonlinear} Unlike the reward-to-value relationship, the mapping of policies to value functions
$\bpi \to V$ is nonlinear, making it challenging to optimize in general. Moreover, for our continuous-time setting, the Q-function is ill-defined as the discretization vanishes \citep{tallec2019making, kim2021hamilton}. Using operator theory and optimizing in Hilbert spaces, we will eliminate any dependence on an arbitrarily chosen time increment or surrogate Q-function \citep{doya2000reinforcement, tallec2019making, jia2023q}. The aforementioned is due to an inherent connection to Hamilton-Jacobi-Bellman (HJB) equations that are the continuous-time analogues of the Bellman equation.

\subsection{The Optimal Control Perspective}
Under the conditions on \Cref{asm:dynamics} and a positive discount $\rho>0$, \eqref{eq:discountHJB1} has a unique viscosity solution that coincides with the optimal value function~\citep{fleming2006controlled}.
\paragraph{Optimal Control for Policy Optimization}

To compute the steady state $\Vstar$ solving~\eqref{eq:discountHJB1}, we evolve the value-iteration flow for $t \geq 0$ on $\mathbb{S}=\mathbb{R}^{n_s}$,
\begin{align}
  \!\!\!\!\!  \dot{V}(t,\bx)=\mathcal{T}\big(V(t,\cdot)\big)(\bx), 
     V(0,\bx)=V_0(\bx)\in H^1,
    \label{eq:hjb_evolution} 
\end{align}
under standard regularity assumptions. To pave the way for a simple DP recursion and to enable a joint value–policy update \ref{eq:Ope}, we require a closed-form expression for the operator $\mathcal T:\HIN(\spX)\to\LIN(\spX)$. For this purpose, and to guarantee that $\mathcal T$ is well-defined, we impose the reward structure stated in the following Assumption. This choice ensures that the maximization in \eqref{eq:discountHJB1} can be expressed as a Fenchel-conjugate term \citep{houska2025convex}.

\begin{assumption}[Reward]\label{asm:reward}
We model the reward as a function
\(r(\bx,\bu)=r_\bx(\bx)-c_\bu(\bx,\bu)\),
which is continuously differentiable in both \(\bx\) and \(\bu\). We require strong-convexity of the action penalty $c_\bu(\bx, \bu)$ w.r.t. actions. The state reward \(r_\bx(\bx)\) is either known/defined a priori
or unknown but provided in the dataset~\eqref{eq:dataset}.
\end{assumption}
This reward structure is ubiquitous, with quadratic or smoothed $p$-norm penalties on control effort \citep{anderson2007optimal,tassa2014control} and separable state terms \citep{doya2000reinforcement,lillicrap2015continuous}. It is widely used in physics-based benchmarks \citep{todorov2012mujoco,towers2024gymnasium} and modern RL tasks such as locomotion and racing \citep{hwangbo2019learning,kaufmann2023champion}.

Using Assumption ~\ref{asm:reward}, we get an explicit \textit{policy update rule} from the Fenchel conjugate of the action penalty\footnote{Also commonly written as $c_{\bu}^*(\bl)$ in the literature.}
\begin{align}\label{eq:PolicyUpdate}
    \mathcal{D}_\bu(\bl)  \coloneqq \max_{\bu} \{\langle \bu, \bm{\lambda} \rangle- c_{\bu}(\bx, \bu)\},
\end{align}
which is well-defined and admits an unique maximizer $\bu^\star(\bm{\lambda}) = \nabla \mathcal{D}_\bu(\bl)$ \citep{boyd2004convex}.

After isolating the state reward from the action maximization and substituting the Fenchel conjugate $\mathcal{D}_{\bu}$ in \eqref{eq:discountHJB1},  we get an \textit{infinitesimal HJB formulation}
\begin{align}\label{eq:HJB}
    \boxed{\mathcal T(V)= -\big(\rho I - \mathcal A\big)V + r_{\bx} + \mathcal D_{\bu}\!\big(\mathcal B V\big)}
\end{align}
for $V \in \HIN(\spX)$, where we also leveraged the action-affixnity of the dynamics \eqref{eq:rl_sde}.
As the maximum in \eqref{eq:PolicyUpdate} is attained at $\bu^\star(\bm{\lambda})$, we find a, perhaps unsurprising, structure 
\begin{align}\label{eq:RLunpack}
    \mathcal{D}_\bu(\bl)  = \underbrace{\langle \bu^\star(\bm{\lambda}), \bm{\lambda} \rangle}_{{\substack{\text{improvement w/} \\ \text{optimal policy}}}}- \underbrace{c_{\bu}(\bx, \bu^\star(\bm{\lambda}))}_{\text{regularizer/penalty}},
\end{align}
where the \textit{costate} $\bm{\lambda} \defeq \mathcal{B} V$ is the continuous-time analogue of the advantage signal in discrete-time RL. The first term in \eqref{eq:RLunpack} describes the improvement under the optimal policy, a \textit{deterministic, continuous-time, analogue} to expected advantage. The second term, on the other hand, serves as the regularization and constraint-enforcing term for the actions.
A structural analogue to \eqref{eq:RLunpack} is found in many policy optimization schemes, such as policy mirror descent (PMD) \citep{tomar2022mdpo}, proximal policy optimization (PPO) \citep{schulman2017ppo}, trust region policy optimization (TRPO) \citep{schulman2015trpo}.

\subsection{Infinitesimal World Models}
\label{sec:operator-models}
To build our world model and solve the HJB via a simple dynamic programming recursion \ref{eq:Ope}, we seek a data-driven approximation of the infinitesimal generator $\IG: D(\IG) \to  \LOUT$, with $D(\IG)$ a space with sufficient regularity, such as $\HIN$. This, in turn, allows for obtaining data-based surrogates of the operators $\mathcal{A}$ and $\mathcal{B}$ \eqref{eq:autonmGen}-\eqref{eq:ctrlGen}. To ensure computational tractability, we restrict both the value function and policy to reproducing kernel Hilbert spaces, enabling tractable computation within a rich (infinite-dimensional) parameterization.
\paragraph{Reproducing kernel Hilbert spaces}

We consider RKHSs $\spIN$/$\spOUT$ that are a subset of $\HIN$/$\LOUT$-integrable functions \citep[Chapter 4.3]{IngoSteinwart2008SupportMachines} with associated canonical feature maps $\featx: \mathbb{\IN} \rightarrow \spIN$ and $\featz: \mathbb{\OUT} \rightarrow \spOUT$. To perform the differential calculus required for infinitesimal generators, we consider $k: \IN \times \IN \rightarrow \Set{R}$ to be a symmetric and positive definite kernel function such that $k\in C^{2,2}(\spX\times\spX)$ and $\spIN$ the corresponding RKHS \citep{IngoSteinwart2008SupportMachines}, with norm denoted as $\|\cdot\|_{\spIN}=\sqrt{\langle \cdot,\cdot\rangle_{\spIN}}$. Moreover, $\forall \bx, \bx' \in \spX$, we have that $k(\bx, \bx') = \innerprod{\featx(\bx),\featx({\bx'})}_{\spIN}=\innerprod{k(\cdot,\bx),k(\cdot,{\bx'})}_{\spIN}$ and the reproducing property $h(\bx) = \innerprod{h,k(\cdot,\bx)}_{\spIN}$ holds for all $\bx \in \IN$ and all observables $h \in \spIN$. We further assume that we are working with universal kernels and that $k(\bx, \bx') < \infty$.

Specifically, we look for an RKHS approximation $G: \spIN \to \spOUT$. Yet, population-level quantities in such problems are typically unavailable; thus, we approximate them using historical data samples $\Set{D}_{N}$ defined in~\eqref{eq:dataset}.

\paragraph{Empirical Risk Minimization}
 A standard approach \citep{Kostic2022LearningSpaces, kostic2024learningGenerator, novelli2024operator, bevanda2025nonparametric} to obtain an estimator $G$ is to construct an empirical risk formulation. We first define the action of the infinitesimal generator \(\mathcal G\) on the canonical feature map. The RKHS-valued generator representer evaluated at the samples \((\bx^{(i)},\dot\bx^{(i)})\) is
\begin{align}
    \mathrm{d} &\featx(\bx^{(i)}; \dot \bx^{(i)} ) \defeq \left(\mathcal{G} \featx \right)(\bx^{(i)}) \notag\\
&= \lilsum_{k=1}^{n_s} \dot s_k^{(i)}  \partial_{s_k} \featx (\bxi) + \epsilon \partial^2_{s_k s_k} \featx(\bxi).
\end{align}

\begin{algorithm*}[t]
\caption{\textsc{\textbf{O}perator-Based \textbf{C}ontinuous-\textbf{T}ime {o}ffline \textbf{R}einforcement \textbf{L}earning ($\textsf{O-CTRL}$)}}
\label{alg:O2RL}
\newcommand{\ip}[2]{\langle #1,#2\rangle}
\newcommand{\KS}{\bm{K}_{\mathsf{S}}}
\newcommand{\KA}{\bm{K}_{\mathsf{A}}}
\newcommand{\kS}{k_{\mathsf{S}}}
\newcommand{\kA}{k_{\mathsf{A}}}
\newcommand{\fx}{\phi_{\mathsf{S}}}
\newcommand{\Reg}{\gamma}
\newcommand{\Tr}{\operatorname{Tr}}

\begin{algorithmic}
\vspace{0.25em}
{
\Require $\{\dot{\bx}^{(i)},(\bx^{(i)},\bu^{(i)}),r_{\bx}(\bx^{(i)})\}_{i=1}^{N}$;\;
state kernel $\kS(\bx,\bx')=\ip{\fx(\bx)}{\fx(\bx')}$ with $\fx:\mathbb{S}\!\to\!\spIN$;\;
regularization~$\Reg>0$; diffusion ~$\epsilon>0$; discount ~$\rho>0$; timestep $\Delta t > 0$; $\textsc{tol}>0$; $k_{\max}\!\in\!\mathbb{N}$; dual $\bu^\star(\cdot)$.
}

\begin{algoblock}{\textsc{representation}}
  \Statex \textbf{set:} $(\bm{k}_{\mathsf{S}}(\bx))_i:=\kS(\bx,\bxi)$,\; $(\Kx)_{ij}:=\kS(\bx^{(i)},\bx^{(j)})$, \; $\bm{U} := [\bu^{(1)},\dots,\bu^{(N)}]^\top$ 
\end{algoblock}
\begin{algoblock}{\textsc{world model}}
  \Statex \textbf{compute:} $\bm{K}_\Reg := \Kx+\Kx\odot\bm{U} \bm{U} ^\top+N\Reg\,\bm{I}$ \Comment{Gram}
  \Statex \textbf{compute:} $(\Ktarget)_{ij}:=\innerprod{\dot{\bx}^{(i)}, \nabla_{\bxi} k(\bxi, \bxj)} + \epsilon \operatorname{Tr} \, \nabla_{\bxi}^2\left( k(\bxi, \bxj) \right)$ \Comment{target}
  \Statex \textbf{compute:} $\SFA:=\bm{K}_\Reg^{-1}\Ktarget$,\quad $\SFB:=[\mathrm{diag}(\bm{U} \bm{e}_{i})\SFA]_{i \in [n_a]}$ \Comment{dynamics}
  \Statex \textbf{compute:} $\bm{r}:=\bm{K}_\Reg^{-1}\bm{y}_r$, where $\bm{y}_r:=[r_\bx(\bx^{(1)}),\ldots,r_\bx(\bx^{(N)})]^\top$ \Comment{reward}
\end{algoblock}

\begin{algoblock}{\textsc{dynamic programming}}
  \Statex \textbf{set:} $\bm{\lambda}(\bx):=\ip{\bm{I}_{n_a}\otimes\bm{k}_{\mathsf{S}}(\bx)}{\bm{\lambda}}, \quad \bm{M}\defeq(\bm{I}+\Delta t(\rho\,\bm{I}-\bm{A}))^{-1}$ \Comment{costate \& propagator}
  \Statex \textbf{set:} $\bm{D}_\bu(\bl):=\bm{K}_\Reg^{-1}\!\big[\ip{\bu^\star(\bm{\lambda}(\bxi))}{\bm{\lambda}(\bxi)}-c_{\bu}(\bxi,\bu^\star(\bm{\lambda}(\bxi)))\big]_{i=1}^{N}$ \Comment{Fenchel dual}
  \State  $\bm{v}^{(0)}\gets\bm{0}$,\; $k\gets0$
  \Repeat
    \State $\bm{v}^{(k+1)}\gets \bm{M}\big[\bm{v}^{(k)}+\Delta t\big(\bm{r}+\bm{D}_\bu(\bm{B}\bm{v}^{(k)})\big)\big]$ \Comment{IMEX}
    \State $k\gets k+1$
  \Until{$\|\bm{v}^{(k)}-\bm{v}^{(k-1)}\|\le\textsc{tol}$ \textbf{ or } $k=k_{\max}$}
\end{algoblock}

\Ensure $\widehat{V}_k(\bx)=\ip{\bm{v}^{(k)}}{\bm{k}_{\mathsf{S}}(\bx)}$ \; and \;
$\widehat{\bm{\pi}}_k(\bx):=\bu^\star\!\big(\ip{\bm{I}_{n_a}\otimes\bm{k}_{\mathsf{S}}(\bx)}{\SFB\,\bm{v}^{(k)}}\big)$ \Comment{value \& policy}
\end{algorithmic}
\end{algorithm*}

 We first introduce sampling operators 
\begin{align}
\ES_{\mathsf{Z}} &: \spOUT \to \mathbb{R}^N, &
(\ES_{\mathsf{Z}} \featz)_i &\defeq \featz((\bxi, \bui)), \notag\\
\ES_{\mathsf{S}} &: \spIN \to \mathbb{R}^N, &
(\ES_{\mathsf{S}} \featx)_i &\defeq \featx(\bx^{(i)}), \notag \\
\ES_{\mathrm d} &: \spIN \to \mathbb{R}^N, &
(\ES_{\mathrm d}\featx)_i &\defeq \mathrm d\featx\!\bigl(\bx^{(i)};\dot\bx^{(i)}\bigr), \notag \\
\widehat U &: \mathbb{R}^{n_a} \to \mathbb{R}^N, &
(\widehat U \bm a)_i &\defeq \langle \bm a^{(i)}, \bm a\rangle. \notag
\end{align}
The estimator can be obtained by minimizing the squared loss, known as the empirical risk. It is given by
\begin{align}
    \widehat{\mathcal{R}}(G) 
    & \defeq  \frac{1}{N}\sum^{N}_{i=1} \norm{\mathrm{d} \featx(\bx^{(i)}; \dot \bx^{(i)} )  - G^* \featz((\bxi, \bui))}^2_{\spIN} \notag\\
        & = \norm{\ES_{\mathsf{d}} - \ES_{\mathsf{Z}} G}^2_{\mathrm{HS}}, \quad \text{for} ~~  G \in  \mathrm{HS} (\spIN, \spOUT). \label{eq:EmpRisk}  
\end{align}
 To ensure stability and prevent overfitting in this typically ill-posed estimation problem, is to add a Tikhonov regularization term to \eqref{eq:EmpRisk}
 \begin{align}\label{eq:regfFPKonHestim}
\!\!\!\!\EEstim_{\reg}{\defeq}\argmin_{G  \in \mathrm{HS}}\widehat{\mathcal{R}}(G){+} \gamma\| G\|^2_{} = \aES_\mathsf{Z}\bm{K}^{-1}_{\gamma}{\ES}_{\mathsf{d}} = \concat{\EAEstim_{{\reg}}}{\EBEstim_{{\reg}}},
\end{align}  
where $\bm{K}_{\gamma} = \ES_\mathsf{Z}\aES_\mathsf{Z}+N\gamma \bm{I}$ denotes the regularized gram matrix.
\Cref{eq:regfFPKonHestim} is the \textit{Kernel Ridge Regression} (KRR) approximation of $\mathcal{G}$ over $\spIN \to \spOUT$.

Ultimately, we construct a tractable approximation to \eqref{eq:hjb_evolution}, using the world model \ref{eq:jointUpdate}, derived from \eqref{eq:regfFPKonHestim}, to obtain a value function estimate $\widehat{V} \in \spIN$ via 
\begin{align}
    \widehat{\mathcal T}(\EV)=-(\rho I-\EAEstim_{\reg})\EV+\widehat{r}_\bx+\widehat{\mathcal D}_{\bu}(\EBEstim_{\reg}\EV),
    \label{eq:approxHJB}
\end{align}
where $\widehat{\mathcal{T}}$ can be interpreted as an approximation of the HJB operator $\mathcal{T}$ in \eqref{eq:HJB}, with the data-based approximations $\widehat{r}_\bx$ and $ \widehat{\mathcal{D}}_\bu$ defined in the proposition below.

\begin{restatable}{proposition}{approxHJBflow}
\label{prop:approxHJBflow}
Let $\widehat{r}_\bx=\widehat{S}_{\mathsf{S}}^{*}\,\bm{r}, \widehat{\mathcal{D}}_\bu = \widehat{S}_{\mathsf{S}}^{*}\bm{D}_\bu(\cdot)$, where $\widehat{S}_{\mathsf{S}}^{*}:\mathbb{R}^{N}\to\spIN$ is the adjoint of the sampling operator $\widehat{S}_{\mathsf{S}}$. Let the transition dynamics be described by \eqref{eq:regfFPKonHestim}. Then the flow of  \eqref{eq:approxHJB} resides in a finite-dimensional subspace
\begin{align}
\big\langle \dot{\bv}(t,\cdot),\,\bm{k}_{\mathsf S}(\bx)\big\rangle
=
\big\langle \bm{T}\!\big(\bv(t,\cdot)\big),\,\bm{k}_{\mathsf S}(\bx)\big\rangle,
\label{eq:finite_hjb_evol}
\end{align}
with
\begin{align}
\bm{T}(\bv) \defeq -(\rho\bm{I}-\SFA)\,\bv + \big(\bm{r} + \bm{D}_{\bu}\!(\SFB\,\bv)\big),
\end{align}
where $\ES_{\mathsf{S}}\featx(\bx)=\bm{k}_{\mathsf S}(\bx)$ is the sampled canonical map, $\SFA\defeq \bm{K}_{\gamma}^{-1}\,{\ES}_{\mathsf d}\,\aES_\mathsf{S}$, 
$\SFB\defeq\big[\mathrm{diag}(\widehat{U}\,\bm{e}_k)\,\SFA\big]_{k\in[n_a]}$. The reward/Fenchel-dual terms in the RKHS are
$\bm{r}=\bm{K}_{\gamma}^{-1}\,[\,r_{\bx}(\bx^{(1)}),\dots,r_{\bx}(\bx^{(N)})\,]^\top$ and
$\bm{D}_{\bu}(\bl)=\bm{K}_{\gamma}^{-1}\, \big[ \mathcal{D}_\bu(\bl(\bxi))\big]_{i \in [N]}$, respectively.
\end{restatable}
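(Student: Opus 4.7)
The plan is to show that the flow $\dot{\widehat V}(t,\cdot)=\widehat{\mathcal T}(\widehat V(t,\cdot))$ leaves the finite-dimensional subspace $\spIN_N\defeq\mathrm{range}(\ES_{\mathsf S}^{*})=\mathrm{span}\{k(\cdot,\bx^{(i)})\}_{i=1}^{N}$ invariant, and then to extract the claimed coefficient ODE via the parametrization $\widehat V=\ES_{\mathsf S}^{*}\bv$.

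\textbf{Invariance of $\spIN_N$.} I would verify that every summand on the right-hand side of \eqref{eq:approxHJB} lies in $\spIN_N$. The reward and Fenchel-dual terms $\widehat r_\bx=\ES_{\mathsf S}^{*}\bm r$ and $\widehat{\mathcal D}_\bu(\cdot)=\ES_{\mathsf S}^{*}\bm D_\bu(\cdot)$ are in $\spIN_N$ by definition. For the dynamics, insert the KRR estimator $\EEstim_{\gamma}=\aES_{\mathsf Z}\bm K_\gamma^{-1}\ES_{\mathsf d}$ from \eqref{eq:regfFPKonHestim} and use the identity $\ES_{\mathsf d}\ES_{\mathsf S}^{*}=\Ktarget$, which follows from the definition of $\mathrm d\featx$ together with $\ES_{\mathsf S}^{*}\bm e_j=k(\cdot,\bx^{(j)})$. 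This yields $\EEstim_{\gamma}\widehat V=\aES_{\mathsf Z}\SFA\bv$ with $\SFA=\bm K_\gamma^{-1}\Ktarget$. The action-affine output kernel $k_{\mathsf Z}((\bx,\bu),(\bx',\bu'))=k_{\mathsf S}(\bx,\bx')(1+\bu^{\top}\bu')$, which is consistent with the Gram matrix $\bm K_\gamma=\Kx+\Kx\odot\bm U\bm U^{\top}+N\gamma\bm I$ in the algorithm, then splits $\aES_{\mathsf Z}\SFA\bv$ into a $\bu$-independent component $\EAEstim_{\gamma}\widehat V=\ES_{\mathsf S}^{*}\SFA\bv$ and a $\bu$-linear component whose $n_a$ coefficient vectors on the basis $\{k(\cdot,\bx^{(i)})\}$ are precisely $\SFB\bv=[\mathrm{diag}(\bm U\bm e_k)\SFA\bv]_{k\in[n_a]}$, as stated in the algorithm. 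Hence $\widehat{\mathcal T}(\widehat V)\in\spIN_N$, so starting from $\widehat V(0)\in\spIN_N$ the flow remains in $\spIN_N$ for all $t$.

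\textbf{Reduction to coefficients.} Writing $\widehat V(t,\cdot)=\ES_{\mathsf S}^{*}\bv(t)$, the reproducing property gives $\widehat V(t,\bx)=\langle\bv(t),\bm k_{\mathsf S}(\bx)\rangle$. Substituting the expressions from the previous step into \eqref{eq:approxHJB} and collecting coefficients yields $\widehat{\mathcal T}(\widehat V)=\ES_{\mathsf S}^{*}\bm T(\bv)$ with $\bm T(\bv)=-(\rho\bm I-\SFA)\bv+\bm r+\bm D_\bu(\SFB\bv)$. The flow therefore reduces to $\ES_{\mathsf S}^{*}\dot\bv=\ES_{\mathsf S}^{*}\bm T(\bv)$, and pairing with $\bm k_{\mathsf S}(\bx)$, equivalently evaluating the representer at $\bx$, recovers exactly \eqref{eq:finite_hjb_evol}.

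\textbf{Main obstacle.} The most delicate step is extracting the explicit closed-form $\EAEstim_{\gamma}$/$\EBEstim_{\gamma}$ decomposition of the joint KRR estimator, which requires tracing how the product structure of $k_{\mathsf Z}$ carries through the adjoint $\aES_{\mathsf Z}$ to produce the $\mathrm{diag}(\bm U\bm e_k)$ weighting in $\SFB$. Once this decomposition is in place, the rest is straightforward bookkeeping in the RKHS.
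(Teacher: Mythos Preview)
Your proposal is correct and follows essentially the same route as the paper: substitute the KRR estimators $(\EAEstim_\gamma,\EBEstim_\gamma)$ and the ansatz $\widehat V=\ES_{\mathsf S}^{*}\bv$ into \eqref{eq:approxHJB}, use the action-affine kernel structure (via the Khatri-Rao product of $\aES_{\mathsf Z}$) to obtain $\widehat{\mathcal T}(\widehat V)=\ES_{\mathsf S}^{*}\bm T(\bv)$, and then pair with $\featx(\bx)$ to get \eqref{eq:finite_hjb_evol}. The paper isolates the $\EAEstim_\gamma/\EBEstim_\gamma$ split of $\aES_{\mathsf Z}\bm K_\gamma^{-1}\ES_{\mathsf d}$ in a separate proposition before this proof, so your identification of that decomposition as the main obstacle is exactly right.
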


This proposition follows by substituting the estimators \eqref{eq:regfFPKonHestim} and the representation $\EV(t,\cdot)=\widehat{S}_{\mathsf{S}}^{*}\bv(t,\cdot)$ into~\eqref{eq:approxHJB} and test against $\featx(\bx)$ to obtain \eqref{eq:finite_hjb_evol}.

The last step toward obtaining a tractable dynamic-programming recursion~\ref{eq:Ope} is to discretize \eqref{eq:finite_hjb_evol} in time. To this end, we treat the stiff linear part $(\rho\bm I-\SFA)$ \textit{implicitly} to ensure unconditional numerical stability of the linear part \citep{he2013euler}, while the nonlinear term $\bm D_{\bu}(\SFB\,\bv)$ is worked out \textit{explicitly} to avoid costly nonlinear solves.

\begin{restatable}{corollary}{IMEX}
\label{cor:IMEX}
Let the step-size $\Delta t > 0$. Then the implicit-explicit IMEX flow \citep{he2013euler, koto2008imex, sebastiano2023high} update reads
    \begin{align}
\bv^{(k+1)}
&= \bm{M}\left[\bv^{(k)}
+ \Delta t\big(\bm{r} + \bm D_{\bu}(\SFB \bv^{(k)})\big)\right],
    \end{align}
    with $\bm{M}= (\bm{I} + \Delta t (\rho \bm{I} - \SFA ))^{-1}$  naturally following from the implicit discretization.
\end{restatable}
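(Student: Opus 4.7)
The plan is to view the finite-dimensional semi-discrete evolution \eqref{eq:finite_hjb_evol} from Proposition \ref{prop:approxHJBflow} as a system $\dot{\bv}=\bm{T}(\bv)$ and split its right-hand side additively into a stiff linear part and a non-stiff nonlinear part,
\begin{equation*}
\bm{T}(\bv) \;=\; \underbrace{-(\rho \bm{I}-\SFA)\bv}_{\defeq \bm{L}(\bv)\ \text{(stiff, linear)}} \;+\; \underbrace{\bm{r}+\bm{D}_{\bu}(\SFB\bv)}_{\defeq \bm{N}(\bv)\ \text{(non-stiff, nonlinear)}}.
\end{equation*}
This split is exactly the form targeted by first-order IMEX Euler schemes \citep{he2013euler,koto2008imex,sebastiano2023high}, in which the stiff part is integrated by the backward (implicit) Euler rule while the nonlinear part is treated by the forward (explicit) Euler rule.

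Next, I would write the IMEX Euler update applied to this split,
\begin{equation*}
\bv^{(k+1)} \;=\; \bv^{(k)} + \Delta t\,\bm{L}(\bv^{(k+1)}) + \Delta t\,\bm{N}(\bv^{(k)}),
\end{equation*}
then substitute $\bm{L}$ and $\bm{N}$, collect all terms containing $\bv^{(k+1)}$ on the left, and obtain
\begin{equation*}
\bigl(\bm{I}+\Delta t(\rho\bm{I}-\SFA)\bigr)\,\bv^{(k+1)} \;=\; \bv^{(k)} + \Delta t\bigl(\bm{r}+\bm{D}_{\bu}(\SFB\bv^{(k)})\bigr).
\end{equation*}
Defining $\bm{M}\defeq(\bm{I}+\Delta t(\rho\bm{I}-\SFA))^{-1}$ and left-multiplying by $\bm{M}$ yields the stated update.

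The only non-routine point is justifying the invertibility of $\bm{I}+\Delta t(\rho\bm{I}-\SFA)$, i.e.\ that $\bm{M}$ is well-defined. I would argue this by noting that the eigenvalues of $\SFA$ correspond to the empirical spectrum of the (regularized) generator approximation, whose real parts are bounded above; then, for any admissible $\Delta t>0$ with $\rho>0$, the matrix $\bm{I}+\Delta t(\rho\bm{I}-\SFA)$ has no zero eigenvalue (this is the standard A-stability reason for choosing implicit treatment of the stiff part, and is the main benefit of the IMEX discretization \citep{he2013euler}). Equivalently, one may simply remark that the invertibility is generic and can be enforced on the regularization parameter $\gamma$, since $\SFA$ is built from $\bm{K}_\gamma^{-1}$. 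I expect this invertibility argument to be the only place requiring care; the remainder is an algebraic rearrangement of the IMEX Euler recipe.
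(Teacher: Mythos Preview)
Your proposal is correct and follows essentially the same route as the paper's own proof: split the finite-dimensional flow from Proposition~\ref{prop:approxHJBflow} into the stiff linear part $-(\rho\bm{I}-\SFA)\bv$ (implicit) and the nonlinear part $\bm{r}+\bm{D}_{\bu}(\SFB\bv)$ (explicit), write the IMEX Euler difference quotient, and rearrange. Your additional discussion of the invertibility of $\bm{I}+\Delta t(\rho\bm{I}-\SFA)$ is not present in the paper's proof (which only gestures at the stable eigenvalues of the linearization), so in that respect your argument is slightly more complete.
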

This yields the operator-theoretic recursion used in \Cref{alg:O2RL}; each step requires one linear solve with $(\bm{I} + \Delta t (\rho \bm{I} - \SFA ))$ and one evaluation of $\bm D_{\bu}(\SFB \bv^{(k)})$.

\section{END-TO-END LEARNING RATES}\label{sec:end-to-end-bounds}
In this section, we aim to bound $\|V^\star-\widehat V_{k}\|_{\LIN}$, 
where $V^\star$ is the optimal value function \eqref{eq:discountHJB1} and $\EV_{k}$, the output of \Cref{alg:O2RL}, is a numerical approximation 
(e.g., IMEX, \Cref{cor:IMEX}) of $\EV_T$. The latter is obtained by evolving the approximated HJB flow using $\widehat{\mathcal{T}}$  \eqref{eq:approxHJB}
for a large time $T>0$. To capture the \emph{end-to-end} pipeline 
from data to value function approximation, we also define $\widehat{V}^* = \lim_{T \to \infty} \widehat{V}_T$, the steady state of $\widehat{\mathcal{T}}$ satisfying $\widehat{\mathcal{T}}(\EV^\star)=0$.
Despite our nonlinear HJB setting, convergence of the full HJB can still be analyzed using linear operator convergence analysis
\citep{li2022optimal,talwai2022sobolev,Kostic2023KoopmanEigenvalues,kostic2024consistent}. We use operator-norm error analysis \citep{Kostic2023KoopmanEigenvalues}, which captures worst-case behavior essential for reliable policies, unlike the average-case nature of 
Hilbert-Schmidt bounds. In particular, the operator norm error can be written as $\error(\widehat{G}_\reg):=\norm{\IG - \widehat{G}_\reg}_{\spIN \to \LOUT }$ and for every $\delta>0$ there exists a \textit{finite–rank} $G_\reg \in \HS{\spIN,\spOUT}$ with
    $$\error(\widehat{G}_\reg) < {\mathsfit{B}(\spOUT)} + \delta.$$
If $\spIN$ is chosen as $C_0$-universal RKHS, we can find arbitrarily good finite-rank approximations of the infinitesimal generator, and thus the representation bias vanishes ${\mathsfit{B}(\spOUT)} = 0$ \citep{mollenhauer2020nonparametric,Kostic2023KoopmanEigenvalues,bevanda2025nonparametric}, which we therefore assume below. 
\begin{assumption}
\label{asm:ForThm}
 We assume that $\spIN$ is a $C_0$-universal RKHS, $\mathcal{D}_\bu$ is twice continuously differentiable and globally Lipschitz on $\spIN$, and the state reward satisfies $r_{\bx}\in\spIN$. Moreover, we assume that the exact discounted HJB \eqref{eq:discountHJB1}, with $\rho > 0$, has a well-defined solution $\Vstar \in H^1(\mathbb S)$ that coincides with the optimal expected discounted return, as discussed in Section~\ref{sec:background-and-problem}.
\end{assumption}
 Naturally, under \Cref{asm:ForThm}, it follows that the operators $\EAEstim_\reg, \EBEstim_\reg$ satisfy $\max\{\error(\EAEstim_\reg), \error(\EBEstim_\reg)\} \leq \error{(\widehat{G}_\reg)} = \delta$. Furthermore, the requirements on $\mathcal{D}_\bu$ and $r_{\bx}$ ensure that the iterates of \Cref{alg:O2RL} satisfy $ \EV_k  \in \spIN$. This, in turn, allows us to bound the $L^2$-norm of the approximation error $\| \EV_k - \Vstar \|_{L^2}$.
 
\begin{restatable}{theorem}{EtoEError}
\label{thm:end-to-end-error} 
Suppose that \Crefrange{asm:dynamics}{asm:ForThm} and the conditions of 
\Cref{prop:approxHJBflow} and \Cref{cor:IMEX} hold. 
Then under the zero-initial condition $\EV_0 = 0$, and provided that 
$\delta = \error(\widehat{G}_\reg)$ is sufficiently small, 
\begin{subequations}
    \begin{align}
\!\!\!\!\norm{\Vstar - \EV_k}_\LIN
& \leq
\textstyle \underbrace{ (\widehat{\lambda}_{\mathrm{gap}}{+}\rho)^{-1}\delta}_{ \mathrm{learning}} \,  +  \, \underbrace{\textstyle \mathcal{O}((\Delta t)^p)}_{\mathrm{discretization}}\label{eq:end2enderror_rate}  \\
& \quad + \, \underbrace{\kappa \, e^{-(\widehat{\lambda}_{\mathrm{gap}}{+}\rho) k \Delta t } \textstyle \|\EV^\star\|_\LIN}_{\mathrm{convergence}},
\end{align}
\label{eq:end2enderror}
\end{subequations}
where $p$ is the discretization order, and $\kappa>0$ a constant. 
Here, $\widehat{\lambda}_{\mathrm{gap}}$ denotes the spectral gap of the estimated closed-loop generator $P^{\widehat{\bm{\pi}}^\star} \EEstim_{\reg}$ in \eqref{eq:infgen_rl} under the stationary policy $\widehat{\bm{\pi}}^\star$.
\end{restatable}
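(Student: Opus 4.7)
The plan is a three-way triangle-inequality split
\begin{align*}
\|\Vstar - \EV_k\|_{\LIN}
&\leq
\|\Vstar - \EV^\star\|_{\LIN}
+ \|\EV^\star - \EV_T\|_{\LIN} \\
&\quad + \|\EV_T - \EV_k\|_{\LIN},
\end{align*}
with $T \defeq k\Delta t$, matching the three contributions in \eqref{eq:end2enderror}. Each piece is handled by different machinery: resolvent calculus for the learning bias, exponential stability of the HJB flow for the convergence transient, and IMEX truncation analysis for the discretization. All three are tied together by the closed-loop generator $P^{\widehat{\bm{\pi}}^\star}\EEstim_\reg$ and its spectral gap $\widehat{\lambda}_{\mathrm{gap}}$.

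For the learning term I would use that $\Vstar$ and $\EV^\star$ are fixed points of $\mathcal{T}$ and $\widehat{\mathcal{T}}$, so the inner maximum defining $\mathcal{D}_\bu$ is attained at $\bpi^\star$ and $\widehat{\bm{\pi}}^\star$ respectively. Danskin's theorem then collapses the HJB identities to linear resolvent equations $\Vstar = (\rho I - P^{\bpi^\star}\mathcal G)^{-1} P^{\bpi^\star} r$ and $\EV^\star = (\rho I - P^{\widehat{\bm{\pi}}^\star}\EEstim_\reg)^{-1} P^{\widehat{\bm{\pi}}^\star} r$. A resolvent-difference identity then isolates an operator-norm perturbation of size $\|P^{\bpi^\star}\mathcal G - P^{\widehat{\bm{\pi}}^\star}\EEstim_\reg\|_{\spIN\to\LIN} \leq C\delta$, with the policy-mismatch contribution being $o(\delta)$ by optimality. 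Combining with $\|(\rho I - P^{\widehat{\bm{\pi}}^\star}\EEstim_\reg)^{-1}\|_{\spIN \to \LIN} \leq (\widehat{\lambda}_{\mathrm{gap}} + \rho)^{-1}$ from the spectral gap yields the first term of \eqref{eq:end2enderror_rate}.

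For the convergence and discretization terms I would linearize $\widehat{\mathcal T}$ around $\EV^\star$; using the $C^2$ assumption on $\mathcal D_\bu$, the Jacobian is $-[\rho I - P^{\widehat{\bm{\pi}}^\star}\EEstim_\reg]$, whose spectrum sits in $\{\Re z \leq -(\widehat{\lambda}_{\mathrm{gap}}+\rho)\}$. Since $\mathcal D_\bu$ is globally Lipschitz, a Grönwall estimate in $\spIN$ transfers the linear contraction to the nonlinear flow, and after an RKHS-to-$\LIN$ embedding absorbed into $\kappa$ yields $\|\EV^\star - \EV_T\|_{\LIN} \leq \kappa e^{-(\widehat{\lambda}_{\mathrm{gap}}+\rho)T}\|\EV^\star\|_{\LIN}$ from $\EV_0 = 0$. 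The IMEX step in \Cref{cor:IMEX} has local truncation $\mathcal O((\Delta t)^{p+1})$, and for $\delta$ small its amplification operator inherits a contraction factor $e^{-(\widehat{\lambda}_{\mathrm{gap}}+\rho)\Delta t}(1+\mathcal O((\Delta t)^{p+1}))$; summing $k$ local errors against this geometric Green's function gives the uniform $\mathcal O((\Delta t)^p)$ bound.

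The main obstacle is the rigorous reduction of the nonlinear HJB dynamics to a linear analysis built on the closed-loop generator. Three ingredients must work in concert: (i) the argmax $\bu^\star(\bl) = \nabla \mathcal D_\bu(\bl)$ must be Lipschitz-differentiable, which requires strong convexity of $c_\bu$ together with the $C^2$ hypothesis on $\mathcal D_\bu$; (ii) the spectral gap $\widehat{\lambda}_{\mathrm{gap}}$ must survive the perturbation of $\mathcal G$ by the estimator, which is exactly what the smallness condition on $\delta$ supplies via a bounded-perturbation (Neumann-series) lemma for the resolvent; and (iii) the Grönwall control must be uniform along the whole trajectory from $\EV_0 = 0$, not merely near equilibrium, where the global Lipschitz property of $\mathcal D_\bu$ in \Cref{asm:ForThm} is indispensable. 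Without these, the prefactor $(\widehat{\lambda}_{\mathrm{gap}}+\rho)^{-1}$ and the exponential rate in \eqref{eq:end2enderror} would lose meaning.
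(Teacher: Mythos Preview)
Your three-term split and the treatment of the convergence and discretization pieces match the paper essentially line for line: the paper also linearizes $\widehat{\mathcal T}$ around $\EV^\star$, identifies the Jacobian with $-(\rho I-P^{\widehat{\bm\pi}^\star}\EEstim_\reg)$, invokes a Gr\"onwall-type argument to propagate the exponential rate from the exact to the approximate flow, and handles the IMEX term by a standard order-$p$ global error estimate in the finite-dimensional coordinates $\bv$.

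Where you diverge is the \emph{learning} term. You attack $\|\Vstar-\EV^\star\|$ via Danskin plus a second-resolvent identity for $(\rho I-P^{\bpi^\star}\mathcal G)^{-1}$ and $(\rho I-P^{\widehat{\bm\pi}^\star}\EEstim_\reg)^{-1}$; the paper instead bounds $\|\mathcal T(\Vstar)-\widehat{\mathcal T}(\Vstar)\|\le(\error(\EAEstim_\reg)+L_{\mathcal D}\error(\EBEstim_\reg))\|\Vstar\|$, shows the Fr\'echet derivatives $D_F^\star(\Vstar)$ and $\widehat D_F(\EV)$ differ by $O(\delta+\|\EV-\Vstar\|)$, uses Lipschitz continuity of the spectral gap under bounded perturbations (Kato/Kloeckner) to keep $\widehat\lambda_{\mathrm{gap}}+\rho>0$, and then closes with Banach's fixed-point theorem to obtain $\|\EV^\star-\Vstar\|\le C\delta/(\widehat\lambda_{\mathrm{gap}}+\rho)$. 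Both routes arrive at the same linear object $\rho I-P^{\widehat{\bm\pi}^\star}\EEstim_\reg$, but the paper's fixed-point framing automatically handles the implicit policy--value coupling.

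That coupling is the soft spot in your sketch. The claim that ``the policy-mismatch contribution is $o(\delta)$ by optimality'' is not correct at the level of the \emph{operator norm} $\|P^{\bpi^\star}\mathcal G-P^{\widehat{\bm\pi}^\star}\EEstim_\reg\|$: since $\widehat{\bm\pi}^\star=\bu^\star(\EBEstim_\reg\EV^\star)$ and $\bpi^\star=\bu^\star(\BbIG\Vstar)$, the policy gap is $O(\delta+\|\EV^\star-\Vstar\|)$, which is circular rather than higher order. An envelope-theorem cancellation can make the policy mismatch second order \emph{in the value}, but only after you have already localized $\EV^\star$ near $\Vstar$; you still need a contraction or implicit-function step to break the loop. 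Adding exactly that Banach/implicit-function argument (with the spectral-gap perturbation bound to guarantee invertibility of the linearization for small $\delta$) would make your resolvent route complete and equivalent to the paper's.
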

\begin{proof}[Proof sketch.]
By the triangle inequality, $\|V^\star-\widehat V_{k}\|_{\LIN}\le \norm{\Vstar - \EV^\star}_\LIN+ \|\EV^\star-\EV_T\|_{\LIN}+\|\EV_T-\widehat V_{k}\|_{\LIN}$, for the learning, convergence and discretization error respectively. The discretization error, for a method of order $p$, is $\mathcal{O}((\Delta t)^p)$. For the learning and convergence error terms, the exact HJB operator $\mathcal{T}$ in \eqref{eq:hjb_evolution} and its approximation $\widehat{\mathcal{T}}$ \eqref{eq:approxHJB} are Lipschitz in their norms ($\LIN$ and $\spIN$ respectively), and the exact HJB flow is globally exponentially convergent (Appendix). Using local Lipschitz continuity of the spectral gap under small perturbations~\citep{kloeckner2018,kato2013perturbation}, the approximate HJB remains locally exponentially convergent for small $(\error(\EAEstim_\reg), \error(\EBEstim_\reg))$, which yields the result.
\end{proof}

While a universal RKHS guarantees there are arbitrarily accurate value function approximations, the convergence to the true value function can be arbitrarily slow without additional assumptions. For that, we will use classical source conditions on the regularity of the inverse problems \citep{engl2015regularization} to provide a convergence result. 

\begin{restatable}{corollary}{Rate}
\label{cor:RateEst}
Let $\rpar \in (1,2]$ denote the regularity of $\IG$ and $\spar \in (0,1]$ 
the spectral decay rate of $\spOUT$, and choose 
$\reg \asymp N^{-\frac{1}{\rpar + \spar}}$. 
Then, for any $\xi \in (0,1)$ there exists $c>0$ such that, with probability 
at least $1-\xi$ over an i.i.d. sample $\Set{D}_N$, the learning error in 
\eqref{eq:end2enderror_rate} satisfies a finite-sample bound.
    \begin{align}
       \norm{\Vstar - \EV^\star}_\LIN \leq  c \, (\widehat{\lambda}_{\mathrm{gap}}{+}\rho)^{-1}  N^{-\frac{\rpar}{2(\rpar + \spar)}}\,\ln \xi^{-1}.
    \end{align}
\end{restatable}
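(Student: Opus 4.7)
The plan is to isolate the learning term in \Cref{thm:end-to-end-error}, which is the only part of the bound \eqref{eq:end2enderror} that is stochastic, and reduce the problem to a high-probability operator-norm bound for the KRR estimator $\widehat{G}_\reg$ of the infinitesimal generator. Since \Cref{thm:end-to-end-error} already gives
\[
\norm{\Vstar-\EV^\star}_{\LIN}\le (\widehat{\lambda}_{\mathrm{gap}}+\rho)^{-1}\,\error(\widehat{G}_\reg),
\]
it suffices to prove that, with probability at least $1-\xi$,
\[
\error(\widehat{G}_\reg)\;=\;\bigl\|\mathcal G-\widehat G_\reg\bigr\|_{\spIN\to\LOUT}
\;\le\; c\,N^{-\rpar/(2(\rpar+\spar))}\ln\xi^{-1},
\]
and then invoke $\max\{\error(\EAEstim_\reg),\error(\EBEstim_\reg)\}\le\error(\widehat G_\reg)$ from \Cref{asm:ForThm} to conclude.

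First I would set up the standard bias–variance decomposition in operator norm: introduce the population regularized operator $G_\reg$ and write
\[
\widehat G_\reg - \mathcal G \;=\; (\widehat G_\reg - G_\reg) \;+\; (G_\reg - \mathcal G),
\]
so that $\error(\widehat G_\reg)$ is controlled by an estimation (variance) term and an approximation (bias) term. For the bias, I would invoke the source condition of regularity $\rpar\in(1,2]$: under the standard Hölder-type assumption $\mathcal G\in\mathrm{Range}(C^{\rpar/2})$ for the input covariance $C=\aES_\mathsf{S}\ES_\mathsf{S}$ (population version), one has $\|G_\reg-\mathcal G\|_{\mathrm{op}}\lesssim \reg^{\rpar/2}$ by functional calculus. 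For the variance, I would use the spectral-decay/effective-dimension condition with exponent $\spar\in(0,1]$ together with a Bernstein inequality for self-adjoint Hilbert-space-valued random variables (e.g.\ \citep{Kostic2022LearningSpaces,Kostic2023KoopmanEigenvalues,bevanda2025nonparametric}), yielding
\[
\|\widehat G_\reg - G_\reg\|_{\mathrm{op}} \;\lesssim\; \frac{\ln\xi^{-1}}{\sqrt{N}\,\reg^{\spar/2}}
\]
with probability at least $1-\xi$, provided $N\reg^{\spar+1}\gtrsim \ln\xi^{-1}$.

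Balancing the two terms, $\reg^{\rpar/2}\asymp N^{-1/2}\reg^{-\spar/2}$, gives the prescribed choice $\reg\asymp N^{-1/(\rpar+\spar)}$ and the pooled rate $N^{-\rpar/(2(\rpar+\spar))}\ln\xi^{-1}$. Substituting this into \Cref{thm:end-to-end-error} yields the claimed corollary, with the constant $c$ absorbing RKHS-capacity and kernel-boundedness constants.

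The main obstacle is twofold. (i) The analysis must be carried out in \emph{operator norm} (not Hilbert–Schmidt), which is crucial for plugging into \Cref{thm:end-to-end-error} but more delicate: the variance bound requires a Bernstein-type inequality that controls $\|\cdot\|_{\mathrm{op}}$ directly via the effective dimension $\mathcal N(\reg)\lesssim \reg^{-\spar}$, rather than the easier Hilbert–Schmidt concentration. I would rely on the versions tailored to generator learning in \citep{Kostic2023KoopmanEigenvalues,bevanda2025nonparametric}. (ii) The factor $(\widehat{\lambda}_{\mathrm{gap}}+\rho)^{-1}$ is itself data-dependent; one must verify that on the same high-probability event the estimated spectral gap remains bounded away from $-\rho$. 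This follows from the local Lipschitz continuity of the spectral gap under small operator perturbations \citep{kloeckner2018,kato2013perturbation}, already invoked in the proof of \Cref{thm:end-to-end-error}, and the fact that, for $N$ large enough, $\error(\widehat G_\reg)$ lies below the perturbation threshold; this is why the statement requires $\delta$ to be \emph{sufficiently small}, equivalently $N$ sufficiently large.
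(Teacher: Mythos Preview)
Your proposal is correct and follows essentially the same route as the paper. The paper's proof is even terser: it simply substitutes the operator-norm rate for $\error(\widehat G_\reg)$, quoted as a black-box result from \cite{Kostic2023KoopmanEigenvalues} (their Theorem~4 in the appendix), into the learning-error term of \Cref{thm:end-to-end-error}; your bias--variance decomposition, source condition, and Bernstein-type concentration are exactly the ingredients of that cited theorem, so you are sketching its proof rather than citing it.
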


This follows directly from the non-asymptotic error bound of \citep{Kostic2023KoopmanEigenvalues} for the KRR estimator $\EEstim_\reg$, which yields a \emph{finite-sample rate of convergence} to the optimal value function in \Cref{thm:end-to-end-error}. Together with \Cref{cor:RateEst}, this result highlights interpretable quantities that govern the difficulty of continuous-time offline RL, such as horizon length, discretization accuracy, and the amount of data. It also links environment properties and world-model complexity \ref{eq:jointUpdate} to errors in attaining global optimality. Informally, a larger $\alpha$ indicates better RKHS alignment with the true operator image, while a smaller $\beta$ reflects faster spectral decay. Both effects improve sample complexity, and in the favorable limit, the error reaches the $N^{-1/2}$ rate.

\section{NUMERICAL EXAMPLES}\label{sec:implementation-examples}
\paragraph{Implementation}
We evaluate our learning error rates on linear and nonlinear process dynamics (Figure \ref{fig:rate}). While these are often studied using different policy classes, linear and nonlinear \citep{tu2019gap}, our convergence analysis covers both on equal footing -- without any parametric assumptions \citep{recht2019tour}. We run our proposed algorithm \eqref{alg:O2RL} over 8 seeds and gather i.i.d. data to form the quantiles and means in Figure \ref{fig:rate}. In all cases, we use a squared exponential (SE) kernel $k(\bm{x},\bm{y}) = \exp{(\textstyle -\nicefrac{\|\bm{x} -\bm{y}\|^ 2}{2\sigma^2})}$ while the data samples are drawn randomly from the set $[3,3] \times [3.5,3.5]$ for the two first examples.
\paragraph{Linear SDE with Additive Action} 
Linear-quadratic (LQ) control problems play an important role in the control literature. They provide explicit solutions and, often, nonlinear ones can be approximated by LQ ones~\citep{zhao2023policy}.
To validate our findings, we study the value function convergence for an Ornstein-Uhlenbeck process $dS_t = (-S_t+a_t)dt + \sqrt{2\epsilon}~dW_t$ \citep{houska2025convex} where the optimal value function is $V_{\infty}(s) = s^2$ when setting $\rho = 0$, for any $\epsilon$, we set $\epsilon=0.01$, and a reward function $r(s, a)= - 3 s^2-a^2$. The hyperparameter for the used SE kernel is $\sigma=10$.
\paragraph{Nonlinear SDE with Affine Action} 
We transfer to a nonlinear setting, using an action-affine benchmark system $dS_t = (f(S_t)+g(S_t)a_t)dt+ \sqrt{2\epsilon}~dW_t$, where  $g(s)=\frac{1}{2} + \sin{(s)}$ and $f(s)=-\frac{1}{2}(1-g(s)^2)$~\citep{Doyle1996}. Here, the reward is $r(s, a)= - s^2-a^2$ and, practically, the optimal value function approaches $V_{\infty}=s^2$ as $\epsilon$ tends to zero. The hyperparameter for the used SE kernel is $\sigma=1$ and we set $\epsilon=0.01$.
\begin{figure*}[t!]
    \centering
    \includegraphics[width=0.7\textwidth]{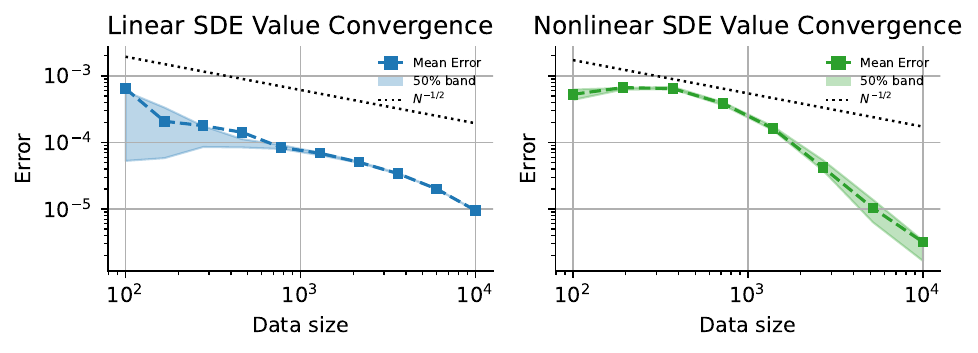}
    \vspace{-1em}
    \caption{%
        Value function learning. \textit{Left:} A linear SDE with quadratic costs. 
        \textit{Right:} Convergence for a nonlinear SDE with quadratic costs. 
        The error convergence confirms our worst-case analysis.
    }
    \label{fig:rate}
\end{figure*}

\paragraph{Pendulum-Gym}
We evaluate \Cref{alg:O2RL} on Gymnasium \texttt{Pendulum-v1} \citep{towers2024gymnasium}. The action is a torque \(a\in[-2,2]\), and the observation is \(\bx=(\cos\theta,\sin\theta,\dot\theta)\) with \(\cos\theta,\sin\theta\in[-1,1]\) and \(\dot\theta\in[-8,8]\). Episodes truncate at 200 steps. Following the official reward, we split the state term and action cost as \(r_{\bx}(\theta,\dot\theta)=-(\theta^{2}+0.1\,\dot\theta^{2})\) and \(c_{\bu}(a)=0.001\,a^{2}\), so the maximum achievable reward is \(0\) (upright, zero velocity, zero torque). We run \Cref{alg:O2RL} with \(\rho=0.1\), \(\sigma=3\), \(\gamma=10^{-7}\), and \(k_{\max}=1000\). For training, we use two offline \texttt{d3rlpy} datasets (``Random'', ``Replay''), subsample \(8000\) \((\text{state},\text{action},\text{next state})\) tuples, and obtain infinitesimal samples via finite differences. The resulting value function is shown in \Cref{fig:pendulumPlot}. \Cref{tab:pendulum} reports mean \(\pm\) standard deviation of episode return over 50 i.i.d.\ rollouts of length 200. Baselines were used largely with default settings and limited tuning; our method was likewise not heavily tuned and trained on the same \(8\mathrm{k}\) samples per dataset.

\paragraph{Discussion}
Offline RL faces distribution shift and extrapolation error, where policies trained on fixed data may query actions outside the data support, causing overestimation and instability. \emph{CQL} \citep{kumar2020conservative} uses a pessimistic objective suited to suboptimal data and, in our results, does better on Random, where the experience is collected with a random policy, than on Replay (\Cref{tab:pendulum}). \emph{TD3+BC} \citep{fujimoto2021minimalist} anchors the policy to the behavior data and is sensitive to dataset quality, resulting in overall underperformance, as neither dataset is expert-level, although it improves on Replay. \emph{IQL} \citep{kostrikov2022offline} favors actions that look better than average within the dataset, consistent with stronger results on Replay. These methods provide a practical baseline suite for evaluating
\(\textsf{O-CTRL}\) under identical offline data and evaluation (Table~\ref{tab:pendulum}). As additional context, we also report a Soft Actor Critic expert trained online and evaluated with the same protocol. \(\textsf{O-CTRL}\) is competitive on Random and surpasses the expert on Replay. Note that, unlike value-based offline RL methods tied to the behavior distribution, \(\textsf{O-CTRL}\) first learns a reward-free world model from fixed data and then optimizes the task objective in that model. This decoupling makes performance less sensitive to whether the dataset is expert, medium, or random, and allows reusing the same learned model for different rewards without additional data collection. These properties make \(\textsf{O-CTRL}\) a promising direction for offline RL.

\begin{table}[t]
\centering
\small
\setlength{\tabcolsep}{6pt}
\begin{tabular}{lcc}
\toprule
& \multicolumn{2}{c}{Mean Episode Reward ($\pm$ std)} \\
\cmidrule(lr){2-3}
Algorithm & Random & Replay \\
\midrule
\textbf{Ours} & $\mathbf{-141.71 \pm 84.23}$ & $\mathbf{-104.36 \pm 83.00}$ \\
CQL           & $-293.09 \pm 206.68$         & $-451.00 \pm 530.91$ \\
IQL           & $-313.00 \pm 222.91$         & $-284.89 \pm 196.59$ \\
TD3+BC        & $-1181.08 \pm 347.40$        & $-748.11 \pm 420.18$ \\
\midrule
Expert        & \multicolumn{2}{c}{$-130.98 \pm 79.40$} \\
\bottomrule
\end{tabular}
\caption{Pendulum results on \texttt{d3rlpy} offline datasets (Random, Replay). 
Higher (less negative) is better. The expert policy is a Stable-Baselines3 SAC 
agent (Pendulum-v1)\protect\footnotemark\ trained online (dataset N/A).}
\label{tab:pendulum}
\end{table}
\footnotetext{\url{https://huggingface.co/sb3/sac-Pendulum-v1}}
\section{CONCLUSION}\label{sec:discussion}
We studied offline reinforcement learning with continuous-time dynamics and continuous state-action spaces. We introduced a model-based, reward-free algorithm that splits value function learning into two steps. The first step is learning the generator of the controlled diffusion process from data through operator regression in RKHS. The second step is to solve the Hamilton–Jacobi–Bellman PDE in the RKHS. We analyze the method by propagating operator learning errors through the empirical HJB and prove that the resulting time-stepping scheme is consistent and convergent. Operator-norm error analysis yields explicit learning rates that relate data size and kernel smoothness to value function suboptimality. The bounds reveal how the generator's spectral gap, discount factor, and regularity influence the offline RL problem. Examples confirm the predicted rates. Our work presents the first end-to-end learning theory for continuous-time offline reinforcement learning, highlighting the power of operator methods as an analytical tool.

\paragraph{Limitations and Open Questions}
Although we obtain novel insights into continuous-time offline RL, some key questions remain open. 

\textit{Data~} The proposed generator learning scheme requires independent data, which, depending on the application, might be costly to obtain. Extending the analysis to dependent (trajectory) data as recently proposed in~\cite{mirzaei2025empirical} would increase its applicability.
    The datasets we use include state measurements. A natural question is whether similar results still hold for partially observed data.
    
\textit{Analysis~} In general, we provide upper bounds on the errors, thus providing a pessimistic analysis of the error. Lower bounds could help uncover different effects and lead to novel algorithms. In particular, data-dependent bounds could shed light on the maximum suboptimality of concrete actions and help collect new data points in episodic learning.
    
\textit{Representations~} In our analysis, we assume a bound on the RKHS norm of the exact value function and resort to a universal RKHS. While this approach allows for elegant analysis, learning finite-dimensional features for a specific problem and dealing with misspecification is often practical.

\textit{Algorithm~} Our upper bound requires $\rho+\lambda_{\mathrm{gap}}>L_D\mathcal{E}(\widehat{G})$ -- sufficient stability, and discount to overcome learning errors for consistency within the infinite horizon problem. Ensuring this condition algorithmically remains an open question as we don't know the value of $\lambda_{\mathrm{gap}}$ a priori.




\subsubsection*{Acknowledgements}
This work was supported by the DAAD program Konrad Zuse Schools of Excellence in Artificial Intelligence, sponsored by the Federal Ministry of Education and Research, and by the European Union’s Horizon Europe innovation action program under grant agreement No. 101093822,
”SeaClear2.0”.

\clearpage
\appendix
\thispagestyle{empty}

\onecolumn
\aistatstitle{Supplementary Materials}

The supplementary material is organized as follows:

\begin{itemize}
    \item Section \ref{sec:app:notation} summarizes important notations used throughout the paper and the appendix, in the form of a reference table in \Cref{tab:app:notation}.
    \item Section \ref{sec:app:relatedWork} provides a comparison of this work with the studies we consider most relevant to the analysis of offline RL.
    \item Section \ref{sec:app:OCP} provides a detailed review of key concepts from stochastic optimal control that are used throughout the paper and are essential for proving \Cref{thm:end-to-end-error}.
    \item Section \ref{sec:app:OPMODELS} offers background information on operator regression and kernel-based learning, leading to the derivations of our world model. Additionally, it reviews recent results on operator-norm error analysis, which can be incorporated into \Cref{thm:app:error_bound} to derive an explicit and interpretable finite-sample error bound.
    \item Section \ref{sec:app:proofs} presents the proofs for \Cref{prop:approxHJBflow} and \Cref{cor:IMEX}, which are necessary for our $\textsf{O-CTRL}$ algorithm outlined in \Cref{alg:O2RL}. In the second part, we prove \Cref{thm:end-to-end-error} and \Cref{cor:RateEst}.

\end{itemize}

\newpage
\section{Extended Notation}
\label{sec:app:notation}
Table~\ref{tab:app:notation} summarizes the notation used throughout the paper. 
\begin{table}[!b]
\centering
\caption{Summary of relevant notation}
\label{tab:app:notation}
\begin{tabular}{c|c}
\hline
		\toprule
		Notation & Meaning \\ \midrule
		$[n]$ & Interval Set $\{1,\dots,n\}$ for an integer $n$ \\ \hline
		$C^{k}(\IN)$ & $k$-times continuously differentiable functions on $\IN$ \\ \hline
		$L_{\mu}^{k}(\IN)$ & $L^{k}$-integrable functions w.r.t. a measure $\mu$ \\ \hline
		$H_{\mu}^{k}(\IN)$ & $k$-times weakly differentiable functions with $L_{\mu}^{2}(\IN)$-integrable derivatives \\ \hline
        $M_+(\IN)$ & Set of Borel probability measures on $\IN$ \\ \hline
		$\odot$ & Hadamard (element-wise) product \\ \hline
		$\circledcirc$ & Khatri-Rao product \\ \hline
		$\nabla, \nabla^2, \Delta $ & Gradient operator, Hessian Matrix and Laplacian respectively  \\ \hline
		$\operatorname{Tr}$ & Trace operator \\ \hline
            $ \rho$ & Discount \\ \hline
            $r,\, r_s, \, c_\bu$ & Reward, state reward and action penalty  \\ \hline
            $\mathcal{D}_\bu(\cdot)$ & Fenchel conjugate of the action penalty \\ \hline
            $\IG$ & Infinitesimal generator of the SDE \\ \hline
		$\AbFPK$ & Autonomous part of $\IG$ \\ \hline
		$\BbFPK$ &  Controlled part of $\IG$ \\ \hline 
            $P^\bpi$ & Substitution operator for a policy $\bpi$ \\ \hline
		$\mathcal{T}$ & Infinitesimal Operator for the HJB flow\\ \hline
		$\HJB(t, \cdot)$ & Nonlinear semigroup associated with the HJB such that $\HJB(t, V_0) = V(t, \cdot)$ \\ \hline
		$\Vstar$ & Optimal value function (steady state satisfying $\mathcal{T}(\Vstar)=0$)  \\ \hline
            $\widehat{\mathcal{T}}$ & Empirical equivalent of $\mathcal{T}$ \\ \hline
            $\EV_T$  & Approximated value function obtained by evolving the HJB flow with $\widehat{\mathcal{T}}$ for a large time $T$  \\ \hline
             $\EV_k$  & Numerical approximation of $\EV_T$, output of the $\textsf{O-CTRL}$ Algorithm at time step $k$  \\ \hline
            $\EV^\star$ & Approximated optimal value function (steady state satisfying $\widehat{\mathcal{T}}(\EV^\star)=0$) \\ \hline 
		$\HS{\mathcal{F},\mathcal{Y}}$ &
		Hilbert–Schmidt operators $A:\mathcal{F}\!\to\!\mathcal{Y}$ \\ \hline
		$\spIN,\spOUT$ & Input/output RKHSs on $\IN$ and $\OUT$ \\ \hline
		$k_\mathsf{S}$ &
		Symmetric, bounded, positive-definite kernel $\IN\times\IN\!\to\!\mathbb{R}$ associated with $\spIN$ \\ \hline
		$\featx(\bx)$ &
		Canonical feature map associated to $\bx \in \IN$ (analogously $k_\mathsf{S}(\,\cdot\,,\bx) \in \spIN$)\\ \hline
		$\featz(\bz)$ &
		Canonical feature map associated to $\bz \in \OUT$ (analogously $k_\mathsf{Z}(\,\cdot\,,\bz) \in \spOUT$) \\ \hline
		${S}_{{\mu}}$ &
		Canonical inclusion $\spIN\hookrightarrow H_{\mu}^{1}(\IN)$ of the input RKHS into  $H_{\mu}^{1}(\IN)$ \\ \hline
		${S}_{{\eta}}$ &
		Canonical inclusion $\spOUT\hookrightarrow L_{\eta}^{k}(\OUT)$ of the output RKHS into $L_{\eta}^{k}(\OUT)$ \\ \hline
		
		$C_{\mathsf{ZZ}}, C_{Z\mathrm{d}}$ & Covariance and Cross-covariance operators \\ \hline
		$G_\reg, A_\reg, B_\reg$ & Population KRR estimators \\ \hline
		$\ES_\mathsf{Z}$ &
		State-action sampling operator  $\spOUT\!\to\!\mathbb{R}^{N}$  \\ \hline
		$\ES_\mathsf{S}$ &
		State sampling operator  $\spIN\!\to\!\mathbb{R}^{N}$ \\ \hline
		$\ES_{\mathrm d}$ &
		Target sampling operator  $\spIN\!\to\!\mathbb{R}^{N}$ \\ \hline 
		$\EEstim_{\reg}, \EAEstim_\reg, \EBEstim_{{\reg}}$ & Empirical KRR estimators \\ \hline
		$\bm{K}_\reg $ &  Regularized Gram  matrix $ \in \R^{N \times N}$\\ \hline
		$\bm{K}_\mathrm{d}$ & Target kernel matrix $ \in \R^{N \times N}$ \\ \hline 
		$\error(\cdot) $ & Operator norm error \\ \hline
            $\delta $ & Upper bound on the Operator norm $\error(\EEstim_{\reg})$ \\ \hline
            $\widehat{\lambda}_{\mathrm{gap}}$ & Spectral gap of the estimated closed loop generator under the approximated optimal policy    \\
\bottomrule  
\end{tabular}
\vspace{1em}
\end{table}

\section{Extended Related Work}
\label{sec:app:relatedWork}
In this section, we focus on situating our end-to-end learning theory for continuous-time offline RL within the landscape of related theoretical results in offline RL. Rather than providing a complete survey of offline RL, we focus on works with formal guarantees and direct readers to \cite{levine2020offline, prudencio2023survey} for broader overviews. A particularity of our approach is that, unlike standard offline RL, it separates the learning of system dynamics from the downstream task defined by the reward, enabling modular and reward-free task optimization. This separation enables the subsequent optimization of arbitrary task objectives without requiring additional environment interaction, highlighting a \emph{reward-free}, \emph{modular learning} paradigm within the offline RL setting. At present, to the best of our knowledge, no related work offers a non-asymptotic end-to-end analysis for continuous-time, offline, reward-free RL. By end-to-end, we mean: from dynamical system data to value function approximation that holds for any continuously differentiable state reward in the RKHS hypothesis. In other words, our end-to-end guarantees aim to quantify the gap between an approximated optimal value function produced by a reinforcement learning algorithm and the optimal value function. Below and in \Cref{tab:operator_rl_comparison} is a comparison of theoretical RL results in continuous and discrete time.
 
\cite{jia2022policya, jia2022policyb, jia2023q} analyze the convergence, but provide no insight into the convergence rate of RL with respect to data and system properties. 
For example, while the works of \cite{jia2022policya, jia2022policyb, jia2023q} explicitly handle time discretization and prove convergence to an underlying solution as $\Delta t\rightarrow 0$, they do not analyze the errors made by choosing a parametric function class for approximation and having limited data.
Further, their statements are asymptotic and do not include quantitative rates, including constants related to the general offline RL problem.
In contrast, our proposed operator learning framework enables the joint analysis of data, environment, and task, allowing for the derivation of non-asymptotic rates.
Nonparametric analysis, such as that for Neural Tangent Kernels \citep{jacot2018neural}, has been proven useful for understanding overparameterized networks. Extending such analysis to general infinite-dimensional settings is more subtle, as it involves two distinct spaces: $L^2$ (the regression space) and $\spIN$ (encoding the nonlinear representation) \citep{meunier2023nonlinear}. A key advantage of nonparametric approaches is that, by using suitable infinite-dimensional RKHS \citep{IngoSteinwart2008SupportMachines}, one can approximate arbitrary operators to arbitrary accuracy, providing flexibility beyond fixed parametric models. Building on this perspective, in the discrete-time setting, \cite{novelli2024operator} provides convergence rates with data, but cannot handle continuous action spaces and requires stronger assumptions.

\begin{table}[!ht]
\centering
\renewcommand{\arraystretch}{1.4} 
\setlength{\tabcolsep}{8pt} 
\begin{tabularx}{\textwidth}{>{\raggedright\arraybackslash}p{1.0cm}  
                            >{\centering\arraybackslash}p{0.5cm}     
                            >{\centering\arraybackslash}p{1.25cm}     
                            >{\centering\arraybackslash}p{2.0cm}     
                            X  
                            X  
                            X} 
\hline
Paper & Time Model & $\spX / \spU$ & Data setting & Learning Method & Guarantee & Assumptions \\
\hline
\textbf{Ours} & \textbf{(C)} & \textbf{(C)} & \textbf{Offline} & HJB solution with learned operator model & Sub-optimality dependent on (data, environment, task, and discretization) jointly & Continuous dynamics, data density on $\spX\!\times\!\spU$, reward in (RKHS) $\spIN$ \\
\cite{jia2022policya, jia2022policyb, jia2023q} & \textbf{(C)} & \textbf{(C)} & Online / Episodic & TD policy evaluation, Actor‑critic & Convergence with time-discretization, entropy reg. & Continuous dynamics, Markov Diffusion Process \\
\cite{novelli2024operator} & (D) & \textbf{(C)}/Finite & \textbf{Offline} / Episodic & Policy mirror‑descent with learned operator model & Convergence of Primal‑dual gap given CME error rate with data & Linear MDP, data density on $\spX\!\times\!\spU$, reward in (RKHS) $\spIN$ \\
\hline
\end{tabularx}
\caption{Comparison of theoretical results for related RL frameworks. Time model: (C) = Continuous, (D) = Discrete.}
\label{tab:operator_rl_comparison}
\end{table}

Other offline RL methods providing guarantees or rates are in discrete-time and continuous state/action spaces with excess risk bounds guarantees \citep{farahmand2011regularization}, sup-norm error \citep{antos2007fitted}, or are restricted to finite state/action spaces \citep{kidambi2020morel, chen2019information, ayoub2024switching}. Below is a table summarizing the settings and learning methods of those works.

\begin{table}[!ht]
\centering
\renewcommand{\arraystretch}{1.4} 
\setlength{\tabcolsep}{8pt}       
\begin{tabularx}{\textwidth}{
  >{\raggedright\arraybackslash}p{3.3cm}  
  >{\centering\arraybackslash}p{1.8cm}    
  >{\centering\arraybackslash}p{1.8cm}    
  >{\centering\arraybackslash}p{2.4cm}    
  X                                       
}
\hline
Paper & Time Model & $\spX / \spU$ & Data setting & Learning Method \\
\hline
\cite{farahmand2011regularization}\ 2011 & (D) & \textbf{(C)} & Online \& \textbf{Offline} & FQI/LSTD with regularization \\
\cite{antos2007fitted} & (D) & \textbf{(C)} & \textbf{Offline} & Continuous-action fitted-Q \\
\cite{chen2019information} & (D) & Finite & \textbf{Offline} & Distribution alignment w/ FQI \\
\cite{ayoub2024switching} & (D) & Finite & \textbf{Offline} & FQI via log-loss \\
\cite{kidambi2020morel} & (D) & Finite & \textbf{Offline} & Pessimistic model-based policy \\
\hline
\end{tabularx}
\caption{Comparison of related RL frameworks, (C) = Continuous, (D) = Discrete. FQI = Fitted Q-Iteration, LSTD = Least Squares Temporal Difference Learning.}
\label{tab:operator_rl_comparison_trimmed}
\end{table}

\section{CONTINUOUS-TIME OPTIMAL CONTROL} 
\label{sec:app:OCP}

We begin by recalling fundamental results from continuous-time optimal control, with emphasis on the Hamilton–Jacobi–Bellman (HJB) equation. We derive the spectral gap of the infinitesimal generator of the optimally controlled Markov process and establish exponential convergence of the HJB flow under our assumptions, a key step in proving \Cref{thm:end-to-end-error} in Section~\ref{sec:app:proofs}.

\subsection{Stationary HJB}
Recall that our objective is to find the optimal value function maximizing the expected cumulative discounted rewards in \eqref{eq:expected-return}, such that
\begin{equation}
        \Vstar \defeq  \max_{{
    \|\bpi\|_{L^\infty} < \infty}} \big( \rho I -P^\bpi \mathcal{G} \big)^{-1} P^\bpi r. \notag
\end{equation}
We also defined the operator $P^\bpi: \LOUT(\spX \times \spU) \to \LIN (\spX)$. substituting open-loop actions with the output of a policy $\bu = \bpi(\bx)$, e.g.
$[P^\bpi r](\bx) = r(\bx, \bpi (\bx))$, for a measurable $\bpi \in L^\infty(\spX)$. This allows us to rewrite the generator associated to~\eqref{eq:rl_sde} under a policy as
\begin{equation}\label{eq:app:infgen_under_policy}
  [P^\bpi\IG \phi](\bx) = [(\AbIG + P^\bpi \BbIG) \phi](\bx) = [\AbIG \phi](\bx)+ [P^\bpi \BbIG \phi] (\bx)
  , \notag
\end{equation}
for $\phi \in D(\IG) \subseteq H^1(\spX)$, a space with sufficient regularity and the shorthand definitions
\begin{align*}
    [\AbIG \phi](\bx)
    = \nabla_{\bx} \phi (\bx)^\top \bm{f}(\bx) + \epsilon\, \Delta_{\bx}\phi(\bx), \qquad \text{and} \qquad [P^\bpi \BbIG \phi] (\bx) =  \nabla_{\bx}\phi(\bx)^\top \bm{G}(\bx)\,\bpi(\bx).
\end{align*}

Now, we will consider action-affine dynamics (cf. \eqref{eq:rl_sde}), covering a broad range of cyber-physical systems \citep{Nijmeijer96} and most tasks relevant to robotic Foundation Models \citep{tolle2025towards}.

By the principle of optimality, under Assumption~\ref{asm:dynamics}, the optimal value function $\Vstar  \in H^1(\spX)$---if it exists---satisfies the discounted HJB equation on $\spX = \R^{n_s}$~\citep{fleming2006controlled, doya2000reinforcement, lutter2020hjb},
\begin{equation}
	\max_{{
			\|\bpi\|_{L^\infty} < \infty}}\Big\{\mathcal [P^\bpi\mathcal{G}\Vstar](\bx) + r(\bx,\bpi(s))\Big\}=\rho\Vstar(\bx)
	,
	\label{eq:appendix:discountHJB1}
\end{equation}
Recall that the system \eqref{eq:rl_sde} is affine in the actions, and suppose that the reward structure in Assumption \ref{asm:reward} holds.

We define the $\spU$-valued operator $\mathsf{B}: D(\mathcal{G)} \to L^2(\spX; \spU)$ such that $[\mathsf{B} \phi](\bx) \defeq \nabla_{\bx}\phi(\bx)^\top \bm{G}(\bx) $, where $L^2(\spX; \spU)$ is the Bochner $L^2$ space of $\spU$-valued, square-integrable function on $\spX$. In particular, with $\spU = \mathbb{R}^{n_a}$, we have the natural isomporphism $L^2(\spX; \mathbb{R}^{n_a}) \cong (L^2(\spX))^{n_a} \cong \mathbb{R}^{n_a} \otimes L^2(\spX) $. Using the inner product in $\mathbb{R}^{n_a}$, we “curry” $\mathsf{B}$ to obtain a scalar map on $\spX \times \mathbb{R}^{n_a}$:
\begin{align*}
    [\BbIG \phi] (\bx, \bu) = \innerprod{[\mathsf{B} \phi](\bx), \bu}_{\Set{R}^{n_a}}
\end{align*}
Setting $\bu = \bpi(\bx)$ yields
\begin{align*}
     [P^\bpi \BbIG \phi] (\bx)  = \innerprod{[\mathsf{B} \phi](\bx), \bpi(\bx)}_{\Set{R}^{n_a}} = \nabla_{\bx}\phi(\bx)^\top \bm{G}(\bx)\,\bpi(\bx), \quad \text{a.e.} 
\end{align*}
Note that if $D(\mathcal{G}) \subset H^y(\spX)$ with $y > \frac{n_s}{2} +1$, then by Sobolev Embedding $H^y(\spX) \hookrightarrow C^1(\spX)$, $\mathsf{B}\phi$ has a continuous representative and the identities above hold pointwise.
Finally, \eqref{eq:appendix:discountHJB1} can be rewritten as
\begin{align}
    [\AbIG \Vstar](\bx) + r_\bx(\bx)  +	\max_{{
            \|\bpi\|_{L^\infty} < \infty}}  \left \{\innerprod{\bpi(\bx), [\mathsf{B} \Vstar](\bx) }  - c_\bpi(\bx, \bpi(\bx)) \right \} = \rho\Vstar(\bx)
    \label{eq:appendix:discountHJB2}
\end{align}
To facilitate readability, we will henceforth write $\BbIG$ instead of $\mathsf{B}$, implicitly using the isomorphism to switch between the $\mathbb{R}^{n_a}$-valued and the scalarized view, when no confusion can arise.
Next, we identify the maximization term with the Fenchel conjugate of the action penalty, which under Assumption \ref{asm:reward} is well defined,
	\begin{align}\label{eq:appendix:Fenchel}
		\mathcal{D}_\bu(\bl)  \coloneqq \max_{\bu} \{\langle \bu, \bm{\lambda} \rangle- c_{\bu}(\bx, \bu)\},
	\end{align}
	and admits an unique maximizer $\bu^\star(\bm{\lambda}) = \nabla \mathcal{D}_\bu(\bl)$ \citep{boyd2004convex}. 
	Note that this also follows directly from the Fenchel-Young inequality \citep{boyd2004convex}, which states:
	\begin{align} \label{eq:FYineq}
		c_{\bu}(\bx, \bu) + \mathcal{D}_\bu(\bl)  \ge \innerprod{\bu,\bl}
	\end{align}
	Equality holds if and only if $\bu = \nabla  \mathcal{D}_\bu(\bl)$. 
Furthermore, on a compact convex set $\spU $, $\mathcal{D}_{\bu \in \spU}$ satisfies the following property.
	\begin{proposition}[Fenchel Lipschitzness]
		\label{prop:LipschitzFenchel}
		 $\mathcal{D}_{\bu \in \spU}$ is globally Lipschitz continuous, with $L_{\mathcal{D}} \leq \sup_{\bu \in \mathbb A} \norm{\bu}$. 
	\end{proposition}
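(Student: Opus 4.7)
The plan is to derive Lipschitz continuity directly from the envelope-type inequality intrinsic to any pointwise maximization, without needing to differentiate $c_\bu$ or exploit strong convexity beyond ensuring a maximizer exists. Under Assumption~\ref{asm:reward}, the map $\bu \mapsto \innerprod{\bu, \bl} - c_\bu(\bx, \bu)$ is continuous, and compactness of $\spU$ guarantees that the supremum in \eqref{eq:appendix:Fenchel} is attained at some $\bu^\star(\bl) \in \spU$ for every costate $\bl$.

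First, I would pick two arbitrary $\bl_1, \bl_2$ with respective maximizers $\bu_i := \bu^\star(\bl_i)$ and invoke suboptimality of $\bu_2$ at $\bl_1$,
\begin{align*}
\mathcal{D}_\bu(\bl_1) - \mathcal{D}_\bu(\bl_2) \,\geq\, \big(\innerprod{\bu_2, \bl_1} - c_\bu(\bx,\bu_2)\big) - \big(\innerprod{\bu_2, \bl_2} - c_\bu(\bx,\bu_2)\big) \,=\, \innerprod{\bu_2, \bl_1 - \bl_2},
\end{align*}
where the penalty terms $c_\bu(\bx, \bu_2)$ cancel identically on both sides. Swapping the roles yields the symmetric lower bound $\mathcal{D}_\bu(\bl_2) - \mathcal{D}_\bu(\bl_1) \geq \innerprod{\bu_1, \bl_2 - \bl_1}$. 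Combining the two estimates and applying Cauchy--Schwarz together with the trivial bound $\norm{\bu_i} \leq \sup_{\bu \in \spU} \norm{\bu}$ collapses the two-sided inequality into the claim.

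No serious obstacle is anticipated, as this is a standard convex-analysis argument. A conceptually cleaner, equivalent route is to appeal to Danskin's theorem to identify $\bu^\star(\bl) \in \partial \mathcal{D}_\bu(\bl)$: every subgradient of $\mathcal{D}_\bu$ then lies in $\spU$, hence has norm at most $\sup_{\bu \in \spU} \norm{\bu}$, and the standard subgradient characterization of Lipschitz functions on $\mathbb{R}^{n_a}$ delivers the global estimate. The only subtlety worth flagging is that the bound becomes vacuous if $\spU$ is unbounded; compactness (or at least boundedness) of the admissible action set is essential for the stated constant to be finite, and strong convexity of $c_\bu$ is used only to render the maximizer, and hence $\nabla \mathcal{D}_\bu$, unique.
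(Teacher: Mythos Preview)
Your proposal is correct. The paper's own proof is precisely your second, ``cleaner'' route: it notes that the maximizer satisfies $\nabla \mathcal{D}_\bu(\bl)=\bu^\star(\bl)\in\spU$, bounds the gradient by $\sup_{\bu\in\spU}\|\bu\|$, and concludes Lipschitzness. Your primary argument via the envelope inequality is a slightly more elementary variant that avoids invoking differentiability of the conjugate altogether---it works even if $c_\bu$ were merely lower semicontinuous with a maximizer in $\spU$---whereas the paper implicitly relies on strong convexity of $c_\bu$ to make $\mathcal{D}_\bu$ differentiable and identify the gradient with the unique maximizer. Both yield the identical constant, and your remark on the necessity of boundedness of $\spU$ is accurate.
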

	\begin{proof}
		Given action constraints defined by the compact convex set $\spU$, the gradient of the Fenchel conjugate satisfies $\nabla \mathcal{D}_\bu(\bl) \in \spU$ and is bounded. Thus, $\norm{\nabla \mathcal{D}_\bu(\bl)}_2   \leq \sup_{\bu \in \spU} \norm{\bu}$, and the Lipschitzness of $\mathcal{D}_\bu$ follows. 
	\end{proof}

	\begin{remark}[Smooth Approximations]
		In cases where \( c_{\bu} \) is not sufficiently smooth, or box constraints are imposed via an indicator function, smooth approximations of \( \mathcal{D}_{\bu \in \spU} \) can be constructed using regularization techniques such as the Moreau envelope or Nesterov smoothing~\citep{moreau1965proximite, nesterov2005smooth} enabling perturbation analysis of the HJB.
	\end{remark}
\begin{remark}[Explicit Fenchel conjugate]
    Note that if $c_\bu$ is a Linear Quadratic Regulator (LQR)-style tracking objective, with quadratic state and action penalty terms, we can explicitly work out the Fenchel conjugate. Another explicit example is obtained for quadratic action penalties with a diagonal weight matrix and simple action bounds, yielding simple saturation functions for the components of $\bu^\star$.
\end{remark}

	\subsection{THE HAMILTON-JACOBI-BELLMAN PDE} \label{sec:app:HJBPDE}
For each \(t\ge 0\), we define the nonlinear semigroup
	\(\HJB(t,\cdot):\LIN\to\LIN\) by $\HJB(t, V_0) \;=\;V(t,\cdot),$
	where $V_0$ is the initial cost function and $V(t,\cdot) \in \LIN$ is the unique solution of 
	\begin{equation}
			\dot{ V}(t,\bx) = - 
		 (\rho I - 	\mathcal A) V(t,\bx) 
			+\,r_\bx
			+\,D_{\bu}\bigl(\mathcal B V(t,\bx)\bigr),
			\quad
			V(0,\bx)=V_0(\bx)
		\label{eq:app:EvolutionHJB_true}
	\end{equation}
	on $\spX = \mathbb R^{n_s}$ and understood in the Bochner space $W(0,T;\LIN(\spX),\HIN(\spX))$ \citep[Chapter 1.3]{Hinze2009}. 
	We introduce the shorthand $\mathcal T: \HIN (\spX) \to \LIN(\spX)$ to denote the infinitesimal generator of $\mathfrak T$ -- the right-hand side operator in the first equation in~\eqref{eq:app:EvolutionHJB_true} and this notation makes the sign explicit to distinguish from $\mathcal{T}$.

We are now ready to characterize the spectral gap $\lambda_\mathrm{gap}$, which is essential to prove the exponential convergence of the HJB discussed in the proof sketch of \Cref{thm:end-to-end-error}. The optimal policy $\bpi^\star$ can be recovered as
$\bpi^\star(t,\bx)=\bu^\star(\mathcal B V(t,\bx))$. Hence, along optimal
trajectories,
\begin{align}
\mathcal T (V)&= (\mathcal A - \rho I)V + r_{\bx} + \mathcal D_{\bu}\big(\mathcal B V\big) = (\mathcal A^*+\mathcal B^*\bpi^\star)^* V -\rho V + r_{\bx} - c_{\bu}(\cdot,\bpi^\star).
\end{align}
Here, the operator $\mathcal A^*+\mathcal B^*\bpi^\star$ can be interpreted as
the Fokker--Planck--Kolmogorov (FPK) operator of the optimally controlled SDE;
see~\cite{houska2025convex}. Moreover, if
Assumptions~\ref{asm:reward} and~\ref{lem:app:expConvHJB} hold, we obtain (since
$\bpi^\star$ is optimal and thus stationary) that the Fréchet derivative of
$\mathcal T$ at $\Vstar$ satisfies
\begin{align}
\frac{\partial \mathcal T(\Vstar)}{\partial V}
&= (\mathcal A^*+\mathcal B^*\bpi^\star)^* - \rho I,
\label{eq:app:linearization}
\end{align}

where $\bar{\lambda}$ denotes the complex conjugate of $\lambda$, since \eqref{eq:app:linearization} uses the adjoint of $(\mathcal A^*+\mathcal B^*\bpi^\star)$.
Consequently, with ,
\begin{align}
\sigma\!\bigl( \tfrac{\partial \mathcal T(\Vstar)}{\partial V}\bigr)
&=\bigl\{-\rho+\overline{\lambda}\;:\;\lambda\in\sigma(\mathcal A^*+\mathcal B^*\bpi^\star)\bigr\}.
\label{eq:app:stableEig}
\end{align}
where $\sigma(\cdot)$ denotes the spectrum of a linear operator or matrix. As seen in \eqref{eq:app:stableEig}, the positive discount $\rho > 0$ shifts the spectrum, thereby acting as a stabilizing term. This also explains why we run the HJB equation using the descent dynamics $\dot{V} = \mathcal{T}(V)$, which exhibits stable eigenvalues~\eqref{eq:app:stableEig}. Under conditions ensuring the HJB equation is well-posed (existence and uniqueness of viscosity solutions)~(see \cite{houska2025convex, fleming2006controlled} for details), the operator $\mathcal{A}^* + \mathcal{B}^* \pi^\star$ is the infinitesimal generator of the optimally controlled Markov process and therefore has non-positive eigenvalues, i.e., $\mathrm{Re}(\lambda) \leq 0$. 

\begin{remark}
\label{rem:app:theHJB}
The HJB is generally written as a final–value problem in the physical time 
$\tau$,
$$-\dot V(\tau,\bx)= \mathcal T\!\big(V(\tau,\cdot)\big)(\bx),\qquad V(T,\bx)=V_T(\bx),$$
and is solved from $T$ down to $0$. Equivalently, reparametrizing with the time–to–go $t=T-\tau$
and (reusing $V$ for the reparametrized function) gives the initial-value problem in \eqref{eq:app:EvolutionHJB_true} with $V(0,\bx)=V_T(\bx)$. This “descent” evolution shares the same stationary solution (satisfying $\mathcal T(V^\star)=0$) and preserves the correct parabolic sign, enabling \textbf{stable} forward integration in $t$ (c.f. \eqref{eq:app:stableEig}.
\end{remark}

We denote the spectral gap of the operator $\mathcal A^* + \mathcal B^* \bpi^\star$  by
\begin{align}
\lambda_{\mathrm{gap}}
\defeq  -\mathrm{Re}\!\bigl(\lambda_2(\mathcal A^*+\mathcal B^*\bpi^\star)\bigr) \geq 0,
\label{eq:app:SpectralGap}
\end{align}
 the negative real part of the second largest eigenvalue of the FPK operator $\mathcal A^* + \mathcal B^* \bpi^\star$.

\begin{remark}
Note that our definition of the spectral gap is always based on the second eigenvalue of an operator, as the first eigenvalue of $\mathcal A^* + \mathcal B^* \bpi^\star$ is equal to $0$.
\end{remark}

 As discussed in~\cite[Theorem~2]{houska2025convex}, and under \Crefrange{asm:dynamics}{asm:ForThm} ,
$\lambda_{\mathrm{gap}}$ coincides with the global exponential convergence rate
of the undiscounted HJB. Therefore, under strictly positive discount $\rho > 0$, we get $\lambda_\mathrm{gap} + \rho > 0$ and the following result. 

\begin{lemma}[Exponential convergence of the HJB]
    \label{lem:app:expConvHJB}
    Suppose the discounted HJB equation $\mathcal{T}(\Vstar) = 0$ admits a solution $\Vstar \in H^1(\spX)$ such that
    \[
    \lim_{t \to \infty} \mathfrak{T}(t,V)  = \Vstar \quad \text{in} \ H^1(\spX). 
    \]
    Then there exist constants $C < \infty$, $\lambda_\mathrm{gap} \geq 0$, and $\rho > 0$  such that
    \begin{equation}
        \bigl\|\mathfrak{T}(t, V_0)-\Vstar\bigr\|_{\LIN}
        \;\le\; 
        C\,e^{-(\lambda_\mathrm{gap} +\rho)\,t}\,
        \bigl\|V_0-\Vstar\bigr\|_{\LIN},
        \label{eq:expConvergenceHJB}
    \end{equation}
    with $\lambda_\mathrm{gap} + \rho > 0$.
\end{lemma}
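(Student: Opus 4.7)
My plan is to obtain the exponential decay by linearizing the nonlinear HJB semigroup $\mathfrak T(t,\cdot)$ at its fixed point $V^\star$ and then transferring the spectral gap of the closed-loop Markov generator to a norm estimate on the nonlinear flow. Concretely, I would invoke the linearization identity~\eqref{eq:app:linearization}, according to which the Fréchet derivative of $\mathcal T$ at $V^\star$ equals $L \defeq (\mathcal A^*+\mathcal B^*\bpi^\star)^* - \rho I$, i.e., the closed-loop Markov generator shifted by $-\rho I$. Combined with the spectral description~\eqref{eq:app:stableEig} and the definition of $\lambda_{\mathrm{gap}}$ in~\eqref{eq:app:SpectralGap}, the spectrum of $L$ lies in the half-plane $\{z\in\mathbb{C}:\mathrm{Re}\, z \leq -(\rho+\lambda_{\mathrm{gap}})\}$, save for a possible principal mode supported on constants, which does not contribute in $\LIN(\mathbb R^{n_s})$ because constant functions are not square-integrable on the unbounded state space. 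Applying the Gearhart--Prüss / Hille--Yosida theorem to the strongly continuous semigroup generated by $L$ on $\LIN$ would then yield a linear bound of the form $\|e^{Lt}\|_{\LIN\to\LIN}\leq C_0\, e^{-(\rho+\lambda_{\mathrm{gap}})t}$ for all $t\geq 0$.

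Next, I would transfer this linear bound to the nonlinear flow by variation of constants. Writing $\delta V(t) \defeq \mathfrak T(t,V_0) - V^\star$, a Taylor expansion of $\mathcal T$ around $V^\star$ (justified by the $C^2$ smoothness of $\mathcal D_\bu$ from Assumption~\ref{asm:ForThm}) gives $\dot{\delta V}(t)=L\,\delta V(t) + \mathcal N(\delta V(t))$ with a quadratic remainder $\|\mathcal N(\delta V)\|_\LIN = O(\|\delta V\|_\LIN^2)$ whenever $\|\delta V\|_\LIN$ is small. Inserting the linear semigroup bound into Duhamel's formula and performing a standard Gronwall bootstrap would then produce exponential decay at rate $\rho+\lambda_{\mathrm{gap}}$ inside an explicit neighborhood of $V^\star$.

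Finally, to globalize this local estimate I would exploit the standing hypothesis $\mathfrak T(t,V_0)\to V^\star$ in $H^1(\spX)$: for any admissible $V_0$ the trajectory enters the local-exponential regime in some finite time $t_0 = t_0(V_0)$. On the transient interval $[0,t_0]$, the global Lipschitz continuity of $\mathcal D_\bu$ (Proposition~\ref{prop:LipschitzFenchel}) together with Gronwall's inequality provides a crude bound $\|\delta V(t)\|_\LIN \leq C_1 e^{L_{\mathcal T} t_0}\|V_0-V^\star\|_\LIN$, and this transient factor can be absorbed into a sufficiently large prefactor $C$ in~\eqref{eq:expConvergenceHJB}. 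The principal obstacle I anticipate is controlling the quadratic remainder $\mathcal N$ uniformly on the unbounded domain $\spX=\mathbb R^{n_s}$, where compactness arguments fail; leveraging the dissipativity of the closed-loop Markov semigroup (implicit in~\eqref{eq:app:stableEig}) and the Bochner setting $W(0,T;\LIN,\HIN)$ of~\eqref{eq:app:EvolutionHJB_true} should allow one to propagate the Taylor remainder estimate through the flow, and thus to close the argument with the claimed rate $\rho+\lambda_{\mathrm{gap}}$.
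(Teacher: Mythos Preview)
Your linearization-plus-spectral-gap strategy matches the paper's route: the paper computes the Fr\'echet derivative~\eqref{eq:app:linearization}, reads off the spectrum~\eqref{eq:app:stableEig}, defines $\lambda_{\mathrm{gap}}$ via~\eqref{eq:app:SpectralGap}, and then obtains the lemma essentially by invoking \cite[Theorem~2]{houska2025convex}, which supplies the \emph{global} exponential rate of the (undiscounted) HJB flow; the discount $\rho$ enters additively. The paper does not carry out the Duhamel/Gronwall bootstrap you sketch but delegates globality to that external convexity-based result.

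Two points in your more detailed sketch deserve correction. First, dismissing the principal (constant) mode on the grounds that constants are not in $\LIN(\mathbb R^{n_s})$ is incorrect here: $\mu$ is a \emph{probability} measure (Section~\ref{sec:app:OPMODELS}), so constants \emph{are} square-integrable. Under the discount the zero eigenvalue of the closed-loop generator shifts to $-\rho$, and the question of why the effective rate is $\rho+\lambda_{\mathrm{gap}}$ rather than merely $\rho$ is precisely what the cited result in \cite{houska2025convex} handles via the convex structure of the HJB. Second, your globalization absorbs $e^{L_{\mathcal T}\,t_0(V_0)}$ into $C$, but $t_0$ depends on $V_0$, so the resulting prefactor is not uniform; the lemma requires a single $C$ valid for all initial $V_0$. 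A purely local linearization plus a transient Gronwall bound cannot deliver this without additional structure (monotonicity/convexity of the HJB semigroup), which is again exactly what the paper imports from \cite{houska2025convex} rather than proving in situ.
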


\begin{remark}
The statement of \Cref{lem:app:expConvHJB} can be further strengthened if additional regularity assumptions are satisfied. For instance, on open bounded convex domains $\spX$, with Neumann boundary conditions for $V$, $\nabla V(\cdot,t) {\bf \eta} \mid_{\partial \Set{S}} = 0$, for an outer normal ${\bf \eta}$ of $\partial \Set{S}$, the forward Fokker–Planck–Kolmogorov operator $\fFPK$ (the adjoint of \eqref{eq:infgen_rl}) under the stationary policy $\bpi^\star$ is exponentially ergodic with a positive spectral gap $\lambda_\mathrm{gap} > 0$; see~\cite{pinsky2005spectral} and ~\cite[Lemma~3]{houska2025convex}. Moreover, on unbounded domains $\spX$, we can sometimes still establish the existence of a strictly positive spectral gap. For example, if a Bakry–Emery-type curvature condition hold and if a suitable Wonham–Hasminskii–Lyapunov function exists, we also have $\lambda_\mathrm{gap} > 0$ on potentially unbounded (but convex) domains, see~\citep{Wonham1966,Hasminskii1960,houska2025convex}.
\end{remark}

\section{OPERATOR MODELS}
\label{sec:app:OPMODELS}
\subsection{Operator Regression and Kernel-Based Learning}
\renewcommand{\LOUT}{{L_{{\eta}}^2}}
\renewcommand{\LIN}{{L_{{\mu}}^2}}
\renewcommand{\HOUT}{{H_{{\eta}}^1}}
\renewcommand{\HIN}{{H_{{\mu}}^1}}
\renewcommand{\spIN}{{\mathcal{H}_{\IN}}}
\renewcommand{\spOUT}{{\mathcal{H}_{\OUT}}}
\renewcommand{\injectIN}{{S}_{{\mu}}}
\renewcommand{\injectOUT}{{S}_{\eta}}

We now define the state measure $\mu \in M_+(\spX)$ and the joint state-action measure $\eta \in M_+(\spX \times \spU)$, where $M_+(\spX)$ denotes the space of Borel probability measures on $\spX$. We assume that both measures have full support on their respective domains, i.e., $\mathrm{supp}(\mu) = \spX$ and $\mathrm{supp}(\eta) = \spX \times \spU$.

To simplify the notation, we also introduce $\bm{z} = [\bx^\top, \bu^\top]^\top$. To approximate the operator $\IG: D(\IG) \to  \LOUT$, with $D(\IG)$ a space with sufficient regularity such as $\HIN$, we look for an RKHS approximation $G: \spIN \to \spOUT$ based on its RKHS restriction $\IG\!\mid_\spIN: \spIN \to \LOUT$. Even though $\spIN \subset \HIN, \spOUT \subset \LOUT$, they have a different norm, which we account for via the \textit{inclusions} 
\begin{subequations}
	\begin{align}
		\injectOUT:\spOUT\hookrightarrow\LOUT \qquad \text{and} \qquad
		\injectIN:\spIN\hookrightarrow\HIN. \notag 
	\end{align}  
\end{subequations}

\newcommand{\kZ}{k_{\mathsf{Z}}}
\newcommand{\kS}{k_{\mathsf{S}}}

Let us recall the definition of the state symmetric positive definite kernel function $\kS\in C^{2,2}(\spX\times\spX)$, satisfying 
$\kS(\bx, \bx') = \innerprod{\featx(\bx),\featx({\bx'})}_{\spIN}=\innerprod{\kS(\cdot,\bx),k(\cdot,{\bx'})}_{\spIN}$ for all $\bx, \bx' \in \spX$. Then we can define the action-affine kernel $\kZ: \spZ \times \spZ \to \R$.
\begin{restatable}{proposition}{caKernel}[Action-affine kernel, {\citep[cf.][]{bevanda2025nonparametric}}]\label{prop:app:CAkern}
	Let $\kZ: \OUT\times \OUT \to \mathbb{R}$ be a continuous, bounded kernel with RKHS $\spOUT$.  
	Then the tensor product space $\spOUT \coloneqq \Set{V} \otimes \spIN$ with $\bm{v} = (1,\bm{a}) \in \Set{V}$ has reproducing kernel ${k}(\bm{z},\bm{z}^\prime) = {k}(\bx^\prime,\bx)(1{+}\innerprod{\bm{a}, \bm{a}^\prime})$.
\end{restatable}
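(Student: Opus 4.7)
The plan is to recognize the claim as a direct application of the tensor-product-of-RKHS theorem, once we identify the auxiliary factor $\Set{V}$ as an RKHS in its own right over lifted action vectors. First I would define $\Set{V}\subset\mathbb{R}^{n_a+1}$ as the range of the lift $\bm{a}\mapsto\bm{v}=(1,\bm{a})$ equipped with the ambient Euclidean inner product. Then the map $k_{\Set{V}}(\bm v,\bm v')\defeq\langle\bm v,\bm v'\rangle_{\mathbb{R}^{n_a+1}}=1+\langle\bm a,\bm a'\rangle$ is symmetric and positive semi-definite (Gram matrices are PSD by construction), so by Moore–Aronszajn it uniquely determines an RKHS; identifying this RKHS with $\Set{V}$ amounts to checking the reproducing property $\langle f,k_{\Set{V}}(\cdot,\bm v)\rangle_{\Set{V}}=f(\bm v)$ for linear functionals $f(\bm v)=\langle\bm w,\bm v\rangle$, which is immediate.

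Next I would invoke the standard result (see, e.g., \citet[Sec.~4.6]{IngoSteinwart2008SupportMachines}) that if $\Set{V}$ and $\spIN$ are RKHSs with kernels $k_{\Set{V}}$ and $\kS$ on their respective domains, then the Hilbert tensor product $\Set{V}\otimes\spIN$ is itself an RKHS on the product domain with product kernel
\begin{equation*}
    k_{\spOUT}\!\bigl((\bm v,\bx),(\bm v',\bx')\bigr)=k_{\Set{V}}(\bm v,\bm v')\,\kS(\bx,\bx').
\end{equation*}
Substituting the explicit expression for $k_{\Set{V}}$ and using the identification $\bm z\leftrightarrow(\bm v,\bx)=((1,\bm a),\bx)$ yields exactly $\kZ(\bm z,\bm z')=\kS(\bx,\bx')\bigl(1+\langle\bm a,\bm a'\rangle\bigr)$. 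The continuity and boundedness of $\kZ$ follow from the continuity of $\kS$ (inherited from $\kS\in C^{2,2}$) and the continuity of the Euclidean inner product on bounded action sets $\spU$; likewise positive-definiteness is preserved because the pointwise product of PSD kernels is PSD (Schur product theorem).

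To finish, I would verify the reproducing property directly on elementary tensors $f\otimes g$ with $f\in\Set{V}$ and $g\in\spIN$, using $\langle f\otimes g,\,k_{\Set{V}}(\cdot,\bm v)\otimes\kS(\cdot,\bx)\rangle_{\Set{V}\otimes\spIN}=f(\bm v)\,g(\bx)$, and then extend by linearity and density to all of $\Set{V}\otimes\spIN$. The only delicate point I anticipate is notational rather than mathematical: keeping the lifted variable $\bm v=(1,\bm a)$ distinct from the raw action $\bm a\in\spU$, so that the "1" in the kernel is correctly attributed to the constant coordinate of $\Set{V}$ rather than appearing by fiat. No new analytic machinery is needed beyond Moore–Aronszajn and the Schur product theorem.
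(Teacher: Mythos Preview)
Your proposal is correct and is the standard route to this result. Note, however, that the paper does not actually supply its own proof of this proposition: it is stated with a citation to \citet{bevanda2025nonparametric} and used as a known fact. Your argument---identifying $\Set{V}$ as the RKHS of the linear kernel $k_{\Set{V}}(\bm v,\bm v')=\langle\bm v,\bm v'\rangle=1+\langle\bm a,\bm a'\rangle$ via Moore--Aronszajn, then invoking the tensor-product RKHS theorem to obtain the product kernel---is exactly the canonical proof one would expect, and nothing in your plan is missing or in error.
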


We impose the following requirements on the previously defined RKHSs and kernels:
\begin{enumerate}
	\item $\spOUT,\spIN$ are separable: this is satisfied if $\Set{\IN}$ and $\Set{\OUT}$ are Polish spaces and the kernels defining $\spOUT,\spIN$ are continuous;
	\item $\featx$ and $\featz$ are measurable for ${{{{\mu^{\prime}}}}}$-almost all $\bx \in \Set{\IN}$ and $\eta$-almost all $\bm{z} \in \Set{\OUT}$;
	\item $\kS(\bx ,\bx)$ and $\kZ(\bm{z} ,\bm{z})$ are bounded for ${{{{\mu^{\prime}}}}}$-almost all $\bx \in \Set{\IN}$ and $\eta$-almost all $\bm{z} \in \Set{\OUT}$, respectively.
\end{enumerate}

\subsection{Operator Model Risk Objective}
We define the risk for an approximation $G \in HS(\spIN, \spOUT)$ as: 
\begin{align}\label{eq:app:risk}
	\mathcal{R}(G)=\|\target -\injectOUT G\|^2_{\HS{\spIN ,\LOUT}},
\end{align}
where $\target : \spIN \to \LOUT$ is the target operator, i.e., the restriction of the infinitesimal generator to $\spIN$. For $\mathcal{R}(G)$ to be well-defined, $\target$ must be a Hilbert-Schmidt operator in $\text{HS}(\spIN, \LOUT)$. Due to the submultiplicativity of the operator norm, we have that $\norm{\target}_{\spIN \to \LOUT} \leq \norm{\IG}_{\HIN \to \LOUT}\norm{S_{\mu}}_{\spIN \to \HIN}$. Since $\IG$ is bounded on $(\HIN, \LOUT)$, we require both $\injectOUT$ and $\injectIN$ to be Hilbert–Schmidt.  While $\injectOUT$ is known to be Hilbert–Schmidt \citep{IngoSteinwart2008SupportMachines, Kostic2022LearningSpaces}, ensuring that $\injectIN$ is Hilbert–Schmidt motivates the following compatibility assumption. \footnote{By Maurin's theorem \citep{adams2003sobolev}, $H^y \hookrightarrow H^1$ is Hilbert–Schmidt if $y>n_s/2+1$, a mild assumption met by common kernels (Gaussian, Matérn).}

\begin{assumption}\label{asm:app:RKHS}
	The RKHS $\spIN$ is norm-equivalent to \(H^{y}(\IN)\) with $y > \frac{n_s}{2}+1$.
\end{assumption}

\paragraph{Regularized Risk Minimization}

By the Pythagorean theorem, \eqref{eq:app:risk} decomposes into two parts
\begin{align}\label{eq:riskDeco}
    \underbrace{\norm{[I{-}P_{\spOUT}]\mathcal{G}}^{2}_{\mathrm{HS}(\spIN{,}\LOUT)}}_{\text{representation risk}} {+} \underbrace{\norm{P_{\spOUT}\mathcal{G}{-}{G}}^{2}_{\mathrm{HS}(\spIN{,}\LOUT)}}_{\text{projected risk}},
\end{align}
where $P_{\spOUT}$ is the orthogonal projector in $\LOUT$ onto $\spOUT$. 
Using suitably defined infinite-dimensional RKHSs \citep{IngoSteinwart2008SupportMachines}, the \textit{representation risk} can vanish, leaving the \textit{projected risk} depending on the learned $G \in \mathrm{HS}(\spIN,\spOUT)$. By identifying the canonical orthogonal projection with $\injectOUT(\injectOUT^*\injectOUT)^\dagger\injectOUT^*$, we see the projected risk is equivalent to
$$
\norm{P_{\spOUT}\mathcal{G}{-}{G}}^{2}_{\mathrm{HS}(\spIN{,}{L}_{\nu}^2)} = \norm{\injectOUT(\injectOUT^*\injectOUT)^\dagger\injectOUT^*\target-\injectOUT{G}}^{2}_{\mathrm{HS}} 
$$
whose unique minimizer $G^\dagger = (S^*_{\eta}\injectOUT)^\dagger S^*_{\eta} \, \target$ can be understood as a Galerkin projection onto $\spOUT$. Yet, to ensure stability and prevent overfitting in this typically ill-posed problem, a natural approach is to add a Tikhonov regularization term to \eqref{eq:app:risk}, so that
\begin{align}\label{eq:app:regfFPKonH}
	G_{\reg} \defeq  \argmin_{G  \in \mathrm{HS}(\spIN, \spOUT)} \mathcal{R}(G) + \gamma\| G\|^2_{\mathrm{HS}} =  (C_{\mathsf{Z}\mathsf{Z}} + \gamma \operatorname{Id} )^{-1} {C_{\mathsf{Z} \mathrm{d}}}
	, \quad \mathsf{\gamma}> 0 
\end{align}
which corresponds to the \textit{Kernel Ridge Regression} (KRR) approximation of $\IG$ over $\spIN \to \spOUT$. The covariance and cross-covariance operators are defined as $C_{\mathsf{Z}\mathsf{Z}} \coloneqq S^*_{\eta} \injectOUT : \spOUT \to \spOUT$ and $C_{\mathsf{Z} \mathrm{d}} \coloneqq S^*_{\eta} \IG\!\mid_{\spIN} : \spIN \to \spOUT$ respectively. Formally,
\begin{subequations}
	\begin{align}
		C_{\mathsf{Z}\mathsf{Z}}  \defeq S^*_{\eta}\injectOUT =  \displaystyle\int_{\Set{Z}} \featz  \otimes  \featz \eta(d \bm{z}): \spOUT {\to} \spOUT, \quad \text{and} \quad  
		{C_{\mathsf{Z} \mathrm{d}}}   \defeq S^*_{\eta} \, \IG\!\mid_\spIN  = \int_{\Set{Z}} \featz \otimes \mathrm{d} \featx \eta(d \bm{z}): \spIN {\to} \spOUT. \notag
	\end{align}
\end{subequations}
where $\mathrm{d} \featx: \spX \to \spIN$ is the embedding of the generator in the RKHS $\spIN$, satisfying the reproducing property $\innerprod{\mathrm{d} \featx, h}_\spIN = [\IG\!\mid_{\spIN} h](\bx)$ for observables $h \in \spIN$.

\paragraph{Empirical Risk Minimization}
Population-level quantities in~\eqref{eq:app:regfFPKonH} are typically unavailable; thus, we approximate them using data samples $\Set{D}_{N}$ defined in~\eqref{eq:dataset}. To construct the regularized empirical risk, we introduce the sampling operators 
\begin{align}
\ES_{\mathsf{Z}} &: \spOUT \to \mathbb{R}^N, \quad
(\ES_{\mathsf{Z}} \featz)_i \defeq \featz((\bxi, \bui)), \quad \text{and} \quad 
\ES_{\mathsf{S}}: \spIN \to \mathbb{R}^N, \quad
(\ES_{\mathsf{S}} \featx)_i \defeq \featx(\bx^{(i)}), \notag \\
\ES_{\mathrm d} &: \spIN \to \mathbb{R}^N, \quad
(\ES_{\mathrm d}\featx)_i \defeq \mathrm d\featx\!\bigl(\bx^{(i)};\dot\bx^{(i)}\bigr), \quad \text{and} \quad
\widehat U: \mathbb{R}^{n_a} \to \mathbb{R}^N, \quad
(\widehat U \bm a)_i \defeq \langle \bm a^{(i)}, \bm a\rangle. \notag
\end{align}
with the adjoints $\aES_\mathsf{S}:\mathbb{R}^N\to\spIN$, $\aES_\mathsf{Z}:\mathbb{R}^N\to\spOUT$,  $\ES_{\mathrm d} :\mathbb{R}^N\to\spIN$ and $\ES_{\mathrm d} :\mathbb{R}^N\to\mathbb{R}^{n_a}$, called the sampled embedding operators~\citep{smale2007learning}.

Thus, the empirical risk approximation of~\eqref{eq:app:regfFPKonH} reads
\begin{align}\label{eq:app:regfFPKonHestim}
	\EEstim_{\reg} &\defeq  \argmin_{G  \in \mathrm{HS}(\spIN, \spOUT)} \widehat{\mathcal{R}}(G) + \gamma\| G\|^2_{\mathrm{HS}} = \Creg^{-1}  \widehat{C}_{\mathsf{Z} \mathrm{d}}
\end{align}  
with $\Creg  = \widehat{C}_{\mathsf{Z}\mathsf{Z}} + \gamma \operatorname{Id}$. Although infinite-dimensional RKHSs make direct computations infeasible, finite data enable finite-rank approximations of \eqref{eq:regfFPKonHestim} and make the application of $\EEstim_{\reg}$ to an observable computational. Hence, we introduce the kernel (Gram) matrices $\Ka = \widehat{U}\adjoint{\widehat{U}}$, $\Kx\defeq \ES_\mathsf{S}\aES_\mathsf{S}= [\kS(\bx^{(i)},\bx^{(j)})]_{i,j\in[N]}$, $\bm{K}_\mathsf{Z}\defeq \ES_\mathsf{Z} \ES_\mathsf{Z}^* = \Kx+\Ka{\odot} \Kx$ and $\bm{K}_{\gamma} = \bm{K}_\mathsf{Z}+N\gamma \bm{I}$. Moreover, we introduce the Khatri-Rao Product $\adjoint{\widehat{U}}{\circledcirc} \adjoint{\ES_\mathsf{S}}$  defined by \((\adjoint{\widehat{U}}\circledcirc \adjoint{\ES_\mathsf{S}})e_i=(\adjoint{\widehat{U}} e_i)\otimes(\adjoint{\ES_\mathsf{S}}e_i)\) for an orthonormal basis \(\{e_i\}\), where $\otimes$ denotes the elementary tensor product defined on the associated Hilbert spaces.

We now present the closed-form solution to the regularized empirical risk minimization problem. 
\begin{proposition}[Minimizer of \eqref{eq:app:regfFPKonHestim} ]\label{prop:app:estimators}
The minimizer of \eqref{eq:app:regfFPKonHestim}, denoted by $\EEstim_{\reg} \in \HS{\spIN,\spOUT}$, is given by
	\begin{align} \label{eq:app:krrIG}
		 \EEstim_{\reg} &=\aES_\mathsf{Z} \bm{K}_{\gamma}^{-1} \ES_{\mathrm d}= \begin{bmatrix}
  \aES_{\mathsf S}\bm{K}_{\gamma}^{-1}\ES_{\mathrm d} \\
  (\adjoint{\widehat{U}}{\circledcirc}\,\adjoint{\ES_{\mathsf S}})\bm{K}_{\gamma}^{-1}\ES_{\mathrm d}
\end{bmatrix}
=
\begin{bmatrix}
  \EAEstim_{\reg} \\
  \EBEstim_{\reg}
\end{bmatrix}
	\end{align}
	with $\EAEstim_{{\reg}} \in \HS{\spIN,\spIN}$, $\EBEstim_{{\reg}} \in \HS{\spIN,\Set{R}^{n_a} \otimes \spIN}$ and $\bm{K}_{\gamma}^{-1}{\defeq}(\bm{K}_Z{+}N{\reg}\bm{I})^{-1} \in \Set{R}^{N \times N}$. 
\end{proposition}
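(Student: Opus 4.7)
The plan is to prove this by the standard four-step recipe for operator-valued Tikhonov regression in a reproducing kernel Hilbert space: (i) establish existence and uniqueness of the minimizer, (ii) derive the normal equation via a Fr\'echet derivative, (iii) apply the operator push-through identity to obtain the finite-dimensional kernel form $\aES_{\mathsf Z}\bm K_{\gamma}^{-1}\ES_{\mathrm d}$, and (iv) read off the block decomposition into $\EAEstim_{\reg}$ and $\EBEstim_{\reg}$ using the action-affine factorization of $\spOUT$ from Proposition~\ref{prop:app:CAkern}.

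First, I would observe that $G\mapsto\widehat{\mathcal R}(G)+\gamma\|G\|^2_{\mathrm{HS}}$ is strictly convex and coercive on $\HS{\spIN,\spOUT}$ for $\gamma>0$, so a unique minimizer exists in $\HS{\spIN,\spOUT}$. Taking the Fr\'echet derivative of \eqref{eq:app:regfFPKonHestim} in the direction $H\in\HS{\spIN,\spOUT}$ and using the adjoint $\aES_{\mathsf Z}:\mathbb R^N\to\spOUT$, while tracking the implicit $1/N$ factor in the empirical risk, yields the normal equation
\[
\bigl(\aES_{\mathsf Z}\,\ES_{\mathsf Z}+N\gamma I\bigr)\,\EEstim_{\reg} \;=\; \aES_{\mathsf Z}\,\ES_{\mathrm d}.
\]

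Second, I would invoke the operator push-through identity $(A^*A+\lambda I)^{-1}A^* = A^*(AA^*+\lambda I)^{-1}$, valid for any bounded $A$ between Hilbert spaces and any $\lambda>0$, with $A=\ES_{\mathsf Z}$ and $\lambda=N\gamma$. This immediately gives the first claimed equality $\EEstim_{\reg} = \aES_{\mathsf Z}\bm K_{\gamma}^{-1}\ES_{\mathrm d}$, with $\bm K_{\gamma}=\ES_{\mathsf Z}\aES_{\mathsf Z}+N\gamma\bm I$. Expanding the entries via the action-affine kernel identity of Proposition~\ref{prop:app:CAkern}, $k_{\mathsf Z}(\bz^{(i)},\bz^{(j)}) = k_{\mathsf S}(\bx^{(i)},\bx^{(j)})\bigl(1+\langle\bu^{(i)},\bu^{(j)}\rangle\bigr)$, and recognizing $(\bm U\bm U^\top)_{ij}=\langle\bu^{(i)},\bu^{(j)}\rangle$ recovers the explicit form $\bm K_{\gamma}=\Kx+\Kx\odot\bm U\bm U^\top+N\gamma\bm I$ used throughout the paper.

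Third, to extract the block structure, I would use the canonical identification $\spOUT\cong \spIN\oplus(\mathbb R^{n_a}\otimes\spIN)$ induced by $\featz(\bz)=\featx(\bx)\oplus\bigl(\bu\otimes\featx(\bx)\bigr)$. Computing $\aES_{\mathsf Z}e_i$ componentwise on the canonical basis of $\mathbb R^N$ and using $\aES_{\mathsf S}e_i=\featx(\bx^{(i)})$, $\adjoint{\widehat U}e_i=\bu^{(i)}$, together with the defining property of the Khatri--Rao product, gives
\[
\aES_{\mathsf Z}e_i \;=\; \aES_{\mathsf S}e_i \;\oplus\; (\adjoint{\widehat U}\circledcirc\adjoint{\ES_{\mathsf S}})e_i.
\]
Substituting this block form into $\aES_{\mathsf Z}\bm K_{\gamma}^{-1}\ES_{\mathrm d}$ immediately produces $\EAEstim_{\reg} = \aES_{\mathsf S}\bm K_{\gamma}^{-1}\ES_{\mathrm d}$ and $\EBEstim_{\reg} = (\adjoint{\widehat U}\circledcirc\adjoint{\ES_{\mathsf S}})\bm K_{\gamma}^{-1}\ES_{\mathrm d}$, which is exactly the asserted stacked form.

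The subtlest point is less analytic than notational: one must keep the $1/N$ normalization of $\widehat{\mathcal R}$ consistent with the $N\gamma$ factor that appears inside $\bm K_{\gamma}$, and be precise about the isometric identification $\Set V\otimes\spIN\cong\spIN^{1+n_a}$ so that the Khatri--Rao block of $\aES_{\mathsf Z}$ can be read off unambiguously. Once these are pinned down, the rest reduces to standard Hilbert-space manipulations.
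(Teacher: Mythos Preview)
Your proposal is correct and follows essentially the same route as the paper: both obtain $\aES_{\mathsf Z}\bm K_{\gamma}^{-1}\ES_{\mathrm d}$ via the push-through (Woodbury) identity applied to the empirical normal equation, and then exploit the action-affine decomposition $\spOUT\cong\spIN\oplus(\mathbb R^{n_a}\otimes\spIN)$ together with the Khatri--Rao structure of $\aES_{\mathsf Z}$ to read off the blocks $\EAEstim_{\reg}$ and $\EBEstim_{\reg}$. The only cosmetic difference is that the paper extracts the blocks by evaluating $[\EEstim_{\reg}y](\bz)$ as an inner product and splitting the result, whereas you decompose $\aES_{\mathsf Z}e_i$ directly on basis vectors; these are the same computation.
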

\begin{proof}
	Recall the expression in \eqref{eq:app:regfFPKonH} and substitute the empirical covariances expressions for $\Creg$ and $\widehat{C}_{\mathsf{Z} \mathrm{d}}$, then it follows that
	$$
	\EEstim_{\reg} \defeq \left(\textstyle\frac{1}{N}\aES_\mathsf{Z} \ES_\mathsf{Z}+\gamma \operatorname{Id}\right)^{-1}\left(\textstyle\frac{1}{N}\aES_\mathsf{Z} \ES_{\mathrm d} \right) =  \aES_\mathsf{Z}\bm{K}_{\gamma}^{-1}\ES_{\mathrm d}
	$$
    by using the push-through identity (derived via the Woodbury formula). Applying $\EEstim_{\reg}$ to an observable $y \in \spIN$ and evaluating it at $\bm{z}$ via RKHS inner product yields
	\begin{align}
		\dot{\widehat{y}}(\bm{z}) & \defeq[ \EEstim_{\reg} y](\bm{z})= \innerprod{\EEstim_{\reg} y,\featz(\bm{z})}_{\spOUT}{=}\innerprod{\aES_\mathsf{Z}\bm{K}_{\gamma}^{-1}\ES_{\mathrm d}y , \featz(\bm{z})}_{\spOUT} \notag \\
		&=  \innerprod{\bm{K}_{\gamma}^{-1}\ES_{\mathrm d}y , \ES_\mathsf{Z} \left( \concat{1}{\bu} \otimes \featx(\bx)\right)}_{\spOUT} = \innerprod{\bm{K}_{\gamma}^{-1}\ES_{\mathrm d}y , \ES_\mathsf{S} \featx (\bx) + \widehat{U} \bm{a} \odot \ES_\mathsf{S}  \featx(\bx)}_{\spIN \oplus (\Set{R}^{n_a} \otimes \spIN)}
		\label{eq:app:SctrlAffine}\\
		&= \innerprod{\bm{K}_{\gamma}^{-1}\ES_{\mathrm d}y , \ES_\mathsf{S}  \featx(\bx)}_\spIN + \innerprod{\bm{K}_{\gamma}^{-1}\ES_{\mathrm d}y , (\widehat{U}^*{\circledcirc}\ES_\mathsf{S}^*)^*  (\bm{a} \otimes \featx(\bx))}_{\Set{R}^{n_a} \otimes \spIN} \label{eq:app:SctrlAffine2}\\
        &= \langle \,
  \underbrace{\ES_\mathsf{S}^*\bm{K}_{\gamma}^{-1}\ES_{\mathrm d}}_{\EAEstim_{\reg}}\, y
  \,,\, \featx(\bx)
\rangle_{\spIN} + \langle \, \underbrace{(\widehat{U}^*{\circledcirc}\ES_\mathsf{S}^*) \bm{K}_{\gamma}^{-1}\ES_{\mathrm d}}_{\EBEstim_{{\reg}}} \;y ,   (\bm{a} \otimes \featx(\bx)) \rangle_{\Set{R}^{n_a} \otimes \spIN} \notag
\end{align}
which follows from the structure of $\spOUT$ and recent results in \cite{bevanda2025nonparametric}. 
In particular, from \eqref{eq:app:SctrlAffine} to \eqref{eq:app:SctrlAffine2}, we used the property of the Khatri-Rao product, namely that \( (\widehat{U}^{*}\circledcirc \ES_\mathsf{S}^{*})^{*}(v\otimes w)=(\widehat{U} v)\odot(\ES_\mathsf{S} w)\).
This yields the infinite-dimensional estimators
	\begin{align*}
		\EAEstim_{{\reg}}&= \ES_\mathsf{S}^*\bm{K}_{\gamma}^{-1}\ES_{\mathrm d} \in \HS{\spIN,\spIN},\\
		\EBEstim_{{\reg}}&=(\widehat{U}^*{\circledcirc}\ES_\mathsf{S}^*) \bm{K}_{\gamma}^{-1}\ES_{\mathrm d} \in \HS{\spIN,\Set{R}^{n_a} \otimes \spIN},
	\end{align*}
\end{proof}

\begin{remark}
	We can also obtain the \textit{Principal Component Regression} (PCR) estimator by projecting the input data onto the \(r\)-dimensional principal subspace of the covariance matrix \(\widehat{C}_{\mathsf{Z}\mathsf{Z}}\) \citep{Kostic2022LearningSpaces, Kostic2023KoopmanEigenvalues}. This yields the estimator
	\(
	\EEstim_{\reg}^r = \SVDr{\Creg}^{-1} \widehat{T} = \ES_\mathsf{Z}^* \mathsfit{U}_r{\mathsfit{V}}^{\intercal}_r \ES_{\mathrm d} 
	\) where $\SVDr{\Ku} = {\mathsfit{V}}_r\bm{\Sigma}_r\adjoint{\mathsfit{V}}_r$ is the {$r$-truncated SVD of $\Ku$}, and $\mathsfit{U}_r=\mathsfit{V}_r\bm{\Sigma}^{\dagger}_r$ \citep{bevanda2025nonparametric}.
\end{remark}

\subsection{Operator Regression - Learning Error Bounds}
\label{sec:app:op-learningBounds}
We are motivated by using sharper error bounds under the operator norm \citep{talwai2022sobolev, Kostic2023KoopmanEigenvalues, kostic2024consistent}, which not only provide stronger theoretical guarantees but are also essential in practice, such as applications in safety-critical systems or for robust RL, as the operator norm characterizes worst-case performance.

\begin{remark}[Well-posedness of the risk and arbitrary accuracy] This remark follows the line of \citep{mollenhauer2020nonparametric,Mollenhauer2022,Kostic2022LearningSpaces,Kostic2023KoopmanEigenvalues, bevanda2025nonparametric}. Recall the bias-variance decomposition of the risk in \eqref{eq:riskDeco} with $P_{\spOUT}$ being the orthogonal projector onto the closure of $\range(\injectOUT) \subseteq L^2_{\eta}(\Set{Z})$.

    \begin{enumerate}[leftmargin = 5ex]
    \item \textrm{Representation bias}: The representation bias $\norm{[I{-}P_{\spOUT}]\mathcal{G}}^{2}_{\mathrm{HS}(\spIN{,}\LOUT)}$ quantifies how well the target operator $\target$ can be approximated by \emph{any} operator within the model class structure and vanishes to zero when choosing a $C_0$-universal RKHS $\spIN$ inducing $\spOUT$, i.e. $\range(\target) \subset \textrm{cl}(\range(\injectOUT))$\citep[Chapter 4]{IngoSteinwart2008SupportMachines}.

    \item \textrm{Arbitrary Accuracy}: The estimation error satisfies $\norm{P_{\spOUT}\mathcal{G}{-}{G}}^{2}_{\mathrm{HS}(\spIN{,}\LOUT)} < \epsilon $. This follows because $\target$ can be approximated arbitrarily well in the HS norm by elements of the form $\injectOUT G$, and such elements can, in turn, be approximated arbitrarily well, since finite-rank operators from $\spIN \to \spOUT$ are dense in $\mathrm{HS}(\spIN, \spOUT)$.
    \item \textrm{Risk well-posedness}: The squared operator norm $\norm{\IG -  G}^2_{\spIN \to \LOUT}$, is bounded by the Hilbert-Schmidt risk i.e.  $\norm{\IG -  G}^2_{\spIN \to \LOUT} \leq  \mathcal{R}(G)=\norm{\IG - G}_{\HS{\spIN ,\LOUT}}^2$. Thus, minimizing the HS risk $\mathcal{R}(G)$ also drives down an upper bound on the operator norm error.
\end{enumerate}
\end{remark}

\paragraph{Bounding the operator norm error for the full estimator}
The operator norm error can be written as $\error(\widehat{G}_\reg):=\norm{\target -\injectOUT \widehat{G}_\reg}_{\spIN \to \LOUT }$ and decomposed into
\begin{equation}\label{eq:app:error_decomp}
\error(\widehat{G}_\reg) \leq \underbrace{\norm{\target -\injectOUT G_ \reg}_{\spIN \to \LOUT }}_{\text{bias due to regularization}} +  \underbrace{\norm{\injectOUT(\Estim_{\reg} - \Estim_{\reg}^{r})}_{\spIN \to \LOUT }}_{\text{rank reduction bias}}    +\underbrace{\norm{\injectOUT(\Estim_{\reg} - \EEstim_{\reg})}_{\spIN \to \LOUT }}_{\text{variance of the estimator}}    
\end{equation}
where $G_\gamma$ is the minimizer of the Tikhonov regularized risk and the population version of the empirical estimator $\EEstim_{\reg}$ and $\Estim_{\reg}^{r}$ the reduced rank population version of the empirical estimator $\EEstim_{\reg}^{r}$ obtained via \emph{Reduced Rank Regression} (RRR) or \emph{Principal Components Regression} (PCR).

We first recall key results from \cite{Kostic2023KoopmanEigenvalues} on the operator norm error of $\EEstim_\reg$, adopting the same assumptions and notation for consistency.

\begin{enumerate}[label={\rm \textbf{(RC)}},leftmargin=7ex,nolistsep]
	\item\label{eq:RC} \emph{Regularity of $\IG$}: for some $\rpar\in(1,2]$ there exists $\rcon>0$ such that ${C}_{\mathsf{Z} \mathrm{d}} {C}_{\mathsf{Z} \mathrm{d}}^* \preceq \rcon^2 C_{\mathsf{Z}\mathsf{Z}}^{1 + \alpha} $;
\end{enumerate}

\begin{enumerate}[label={\rm \textbf{(BK)}},leftmargin=7ex,nolistsep]
	\item\label{eq:BK} \emph{Boundedness.}  There exists $\bcon\,{>}\,0$ such that $\displaystyle{\esssup_{\bx\sim\mu}}\norm{\phi(\bx)}^2\leq \bcon$, i.e.
	$\phi\in L^\infty_{\mu}(\spX,\spIN)$ \\
	\text{ and } $\ccon\,{>}\,0$ such that  $\displaystyle{\esssup_{\bm{z}\sim\eta}}\norm{\psi(\bm{z})}^2\leq \ccon$, i.e.
	$\psi\in L^\infty_\eta(\spZ,\spOUT)$
\end{enumerate}
\begin{enumerate}[label={\rm \textbf{(SD)}},leftmargin=7ex,nolistsep]
	\item\label{eq:SD} \emph{Spectral Decay.} There exists $\spar\,{\in}\,(0,1]$ and 
	$\scon\,{>}\,0$ such that
	$\lambda_j(C_{\mathsf{Z}\mathsf{Z}})\,{\leq}\,\scon\,j^{-1/\spar}$, for all $j\in J$.
\end{enumerate}
We define $J := {1,2,\ldots} \subseteq \mathbb{N}$. Informally, \ref{eq:RC} on $\IG$, adapted from \cite{Kostic2023KoopmanEigenvalues}, ensures that $\IG$ is well-aligned with the RKHS structure and quantifies the relationship between bounded operators in our RKHS hypothesis class and the target operator. Assumption~\ref{eq:BK} ensures that functions under the kernel embedding have bounded norms, controlling the complexity and stability of the estimator. Assumption~\ref{eq:SD} controls the eigenvalue decay of $C_{\mathsf{Z}\mathsf{Z}}$, where faster decay (smaller $\beta$) favors better estimation rates. 

\begin{restatable}{theorem}{ErrorBoundG}[\cite{Kostic2023KoopmanEigenvalues}]
	\label{thm:app:error_bound}
	Let \eqref{eq:SD} and \eqref{eq:RC} hold for some $\spar \in (0,1]$ and $\rpar \in (1,2]$, respectively, and let $\cl(\range(\injectOUT)) = \LOUT(\spZ)$ and \ref{eq:BK} be satisfied. Let the regularization parameter be chosen as $ \reg \asymp N^{-\frac{1}{\rpar + \spar}}$. Then, for any $\xi \in (0,1)$, there exists a constant $c > 0$, depending only on $\spOUT$, such that with probability at least $1 - \xi$ over an i.i.d. sample $\Set{D}_N$  the following holds for the KRR estimator $\EEstim_\reg$:
	\begin{equation}
		\error(\EEstim_\reg) \leq c\, N^{-\frac{\rpar}{2(\rpar + \spar)}}\,\ln \xi^{-1}.
		\label{eq:app:error_bound_krr}
	\end{equation}
\end{restatable}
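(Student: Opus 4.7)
The plan is essentially to invoke the operator-norm learning-rate theorem of \cite{Kostic2023KoopmanEigenvalues}, after checking that our action-affine RKHS setting satisfies its hypotheses. Once the target operator $\target$ and empirical estimator $\EEstim_\reg$ are identified as in Section~\ref{sec:app:OPMODELS}, the matching of hypotheses is mechanical: the source condition \ref{eq:RC}, kernel boundedness \ref{eq:BK}, spectral decay \ref{eq:SD}, and the universality $\cl(\range(\injectOUT))=\LOUT(\spZ)$ are either assumed in the statement or already enforced by our modeling choices (universal kernel and \Cref{asm:app:RKHS}). Substituting $\reg \asymp N^{-1/(\rpar+\spar)}$ then yields \eqref{eq:app:error_bound_krr} verbatim.

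For a self-contained re-derivation, I would follow the bias-variance decomposition \eqref{eq:app:error_decomp}. The regularization bias $\norm{\target-\injectOUT G_\reg}_{\spIN\to\LOUT}$ is bounded via the source condition \ref{eq:RC}: $C_{\mathsf{Z}\mathrm{d}}C_{\mathsf{Z}\mathrm{d}}^*\preceq \rcon^2 C_{\mathsf{ZZ}}^{1+\rpar}$ factorizes the target through a power of the covariance, and functional calculus on the self-adjoint compact $C_{\mathsf{ZZ}}$ gives $\norm{(C_{\mathsf{ZZ}}+\reg I)^{-1}C_{\mathsf{ZZ}}^{\rpar/2}}\lesssim \reg^{(\rpar-1)/2}$, producing a bias of order $\reg^{\rpar/2}$ for $\rpar\in(1,2]$. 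The variance term $\norm{\injectOUT(G_\reg-\EEstim_\reg)}$ is controlled by Bernstein-type concentration of the empirical covariance $\widehat{C}_{\mathsf{ZZ}}$ and cross-covariance $\widehat{C}_{\mathsf{Zd}}$ around their population counterparts under \ref{eq:BK}. The problem complexity is captured by the effective dimension $\mathcal{N}(\reg)=\operatorname{Tr}(C_{\mathsf{ZZ}}(C_{\mathsf{ZZ}}+\reg I)^{-1})$, which scales as $\reg^{-\spar}$ by \ref{eq:SD}, yielding a variance of order $(N\reg^{1+\spar})^{-1/2}$ up to a $\ln \xi^{-1}$ deviation factor.

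Balancing bias $\reg^{\rpar/2}$ against variance $(N\reg^{1+\spar})^{-1/2}$ is minimized by $\reg\asymp N^{-1/(\rpar+\spar)}$, giving the advertised rate $N^{-\rpar/(2(\rpar+\spar))}$. The main subtlety, rather than an obstacle, lies in the switch from the Hilbert-Schmidt norm (where the concentration arguments live naturally) to the operator norm in which the statement is phrased: a naive HS bound is loose, so the variance analysis must be sharpened by viewing $\injectOUT(\Estim_\reg-\EEstim_\reg)$ as a resolvent perturbation restricted to $\range(\injectOUT)$, which is precisely the refinement carried out in \cite{Kostic2023KoopmanEigenvalues} and which I would import without modification.
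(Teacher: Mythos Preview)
Your proposal is correct and follows essentially the same route as the paper's own proof sketch: both invoke the operator-norm result of \cite{Kostic2023KoopmanEigenvalues} through the three-term decomposition \eqref{eq:app:error_decomp}, noting that the rank-reduction term vanishes for KRR, bounding the regularization bias via the source condition \ref{eq:RC} and the variance via concentration together with the effective dimension controlled by \ref{eq:SD}, and then balancing at $\reg\asymp N^{-1/(\rpar+\spar)}$. Your sketch is in fact slightly more detailed than the paper's (it spells out the functional-calculus step and the resolvent-perturbation refinement for the operator norm), but the structure and the imported lemmas coincide.
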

\begin{proof}[Proof Sketch]
	This result follows directly from \cite{Kostic2023KoopmanEigenvalues}, by bounding each term of the error decomposition in \ref{eq:app:error_decomp}. The regularization bias consists of a term depending on the choice of $\gamma$ and a term reflecting the alignment between $\spOUT$ and $\range(\target)$. The latter can be set to zero by choosing a universal kernel for which $\range(\target) \subseteq \textrm{cl}(\range(\injectOUT))$ \citep{Kostic2022LearningSpaces, li2022optimal}. The bias due to rank reduction is $0$ for the KRR estimator, while for the PCR estimator, it can be derived by using an orthogonal projector onto the subspace of the leading $r$ eigenfunctions of $C_{\mathsf{Z}\mathsf{Z}}$, which yields the upper bound $\sigma_{r+1}(\injectOUT)$. Finally, bounding the variance of the estimator follows \cite[Appendix D.3]{Kostic2023KoopmanEigenvalues}. Combining the bias due to regularization and variance terms, for both estimators, we obtain the optimal regularization parameter $\gamma$ and the rates.
\end{proof}

\begin{remark}
    When clear from the context, for conciseness, we drop the explicit inclusions, e.g. for $\error(\EEstim_\reg) = \norm{\target - \injectOUT\widehat{G}_\reg}_{\spIN \to \LOUT }$, we write $\error(\EEstim_\reg) = \norm{\target - \widehat{G}_\reg}_{\spIN \to \LOUT }$ as the mapping is implied from the norm definition.
\end{remark}

We now turn to bounding $\error(\EAEstim_\reg, \EBEstim_\reg)$. Recall that  $\EAEstim_{{\reg}} \in \HS{\spIN,\spIN}$, $\EBEstim_{{\reg}} \in \HS{\spIN,\Set{R}^{n_a} \otimes \spIN}$ and that $\EEstim_\reg \in \HS{\spIN, \spOUT}$. To compare errors in a common output space, we introduce the bounded inclusions $S_A: \spIN \to \LIN$ and $S_B: \mathbb{R}^{n_a} \otimes \spIN \to  \mathbb{R}^{n_a} \otimes\LIN$. We begin from

\begin{align}
		\error(\EEstim_\reg)  \defeq \norm{\target - \injectOUT\widehat{G}_\reg}_{\spIN \to \LOUT } &= \norm{\concat{\AbFPK_{\mid \spIN}}{\BbFPK_{\mid \spIN}} - \injectOUT \concat{\EAEstim_{{\reg}}}{\EBEstim_{{\reg}}}}_{\spIN \to \LOUT }
	= \norm{\concat{\AbFPK_{\mid \spIN} - S_A \EAEstim_{{\reg}} }{\BbFPK_{\mid \spIN} - S_B \EBEstim_{{\reg}}} }_{\spIN \to \LOUT }
\end{align}
Let $\featx  \in \spIN$, $\featz \in \spOUT$ and $\bz = [\bx, \bu]^\intercal$ , then 
\begin{align}
	&  \norm{(\AbFPK_{\mid \spIN} - S_A \EAEstim_{{\reg}}) \featx(\bx) }_{\LIN }^2 + \norm{(\BbFPK_{\mid \spIN} - S_B \EBEstim_{{\reg}}) (\bu \otimes \featx(\bx))}_{\mathbb{R}^{n_a} \otimes\LIN }^2 =  \norm{(\target - \injectOUT\widehat{G}_\reg) \featz(\bz) }_{\LOUT}^2 \\
	\implies   & \norm{(\AbFPK_{\mid \spIN} - S_A \EAEstim_{{\reg}}) \featx(\bx) }_{\LIN }^2 \le \norm{(\target - \injectOUT\widehat{G}_\reg) \featz(\bz) }_{\LOUT}^2   \\  \text{and} \quad & \norm{(\BbFPK_{\mid \spIN} - S_B \EBEstim_{{\reg}}) \featx(\bx)}_{\mathbb{R}^{n_a} \otimes\LIN }^2 \leq \norm{(\target - \injectOUT\widehat{G}_\reg) \featz(\bz)}_{\LOUT}^2 
\end{align}
From the operator-norm definition, it follows that,
\begin{align}
	\error(\EAEstim_\reg) &\defeq  \norm{\AbFPK_{\mid \spIN} - 
		\EAEstim_\reg}_{\spIN \to \LIN }^2 \le  \error(\EEstim_\reg) \notag \\ 
        \error(\EBEstim_\reg) &\defeq \norm{\BbFPK_{\mid \spIN} - S_B \EBEstim_\reg}_{\spIN \to \mathbb{R}^{n_a} \otimes\LIN }^2 \le  \error(\EEstim_\reg)
\end{align}
This leads to the result $\max\{\error(\EAEstim_\reg), \error(\EBEstim_\reg)\}  \leq \error{(\widehat{G}_\reg)}$.

 \section{PROOFS}
\label{sec:app:proofs}

In the first part of this section, we present the derivation and construction of \textsf{O-CTRL} (\Cref{alg:O2RL}), detailing how to obtain a finite-dimensional representation of the value function (\Cref{prop:approxHJBflow}) and a tractable dynamic programming recursion (\Cref{cor:IMEX}) that solves the HJB for the estimated operator world model. In the second part, we provide the proofs of \Cref{thm:end-to-end-error} and \Cref{cor:RateEst}, highlighting how smoothness, the spectral gap, and time discretization shape the difficulty of offline RL in continuous time.

 \subsection{Derivations of O-CTRL Algorithm}

 \approxHJBflow*

 \begin{proof}
     We start by defining $\bl(\bx):= \innerprod{\bm{\lambda}, \bm{I}_{n_a} \otimes \bm{k}_{\mathsf{S}}(\bx)}$. Then we use the definitions for $\widehat{r}_\bx=\widehat{S}_{\mathsf{S}}^{*}\,\bm{r}, \widehat{\mathcal{D}}_\bu(\bl) = \widehat{S}_{\mathsf{S}}^{*} \bm{K}_{\gamma}^{-1} \big[ \mathcal{D}_\bu(\bl(\bxi))\big]_{i \in [N]}$ and the approximated observable $\EV=\widehat{S}_{\mathsf{S}}^{*}\bv$ and substitute them into \eqref{eq:approxHJB}, leading to 
     \begin{align}
             \widehat{\mathcal T}(\EV)&=-(\rho I-\EAEstim_{{\reg}}) \ES_\mathsf{S}^* \bv+ \ES_\mathsf{S}^* \bm{r} + \ES_\mathsf{S}^* \bm{K}_{\gamma}^{-1}  \left[\mathcal{D}_{\bu}\left([\EBEstim_{{\reg}} \ES_\mathsf{S}^* \bv](\bxi)\right) \right]_{i \in [N]} \notag \\
             &=-(\rho I-\aES_\mathsf{S}\bm{K}_{\gamma}^{-1}\ES_{\mathrm d})\ES_\mathsf{S}^* \bv+ \ES_\mathsf{S}^* \bm{r} + \ES_\mathsf{S}^* \bm{K}_{\gamma}^{-1}  \left[\mathcal{D}_{\bu}\left([(\adjoint{\widehat{U}}{\circledcirc} \adjoint{\ES_\mathsf{S}})\bm{K}_{\gamma}^{-1}\ES_{\mathrm d} \ES_\mathsf{S}^* \bv](\bxi)\right) \right]_{i \in [N]} \notag \\
             &= -\ES_\mathsf{S}^* (\rho \bm{I}- \bm{K}_{\gamma}^{-1}\ES_{\mathrm d} \ES_\mathsf{S}^*) \bv+ \ES_\mathsf{S}^* \bm{r} + \ES_\mathsf{S}^* \bm{K}_{\gamma}^{-1}  \left[\mathcal{D}_{\bu}\left([(\adjoint{\widehat{U}}{\circledcirc} \adjoint{\ES_\mathsf{S}})\bm{K}_{\gamma}^{-1}\ES_{\mathrm d} \ES_\mathsf{S}^* \bv](\bxi)\right) \right]_{i \in [N]} \notag \\
             &= -\ES_\mathsf{S}^* (\rho \bm{I}- \SFA) \bv+ \ES_\mathsf{S}^* \bm{r} + \ES_\mathsf{S}^* \bm{K}_{\gamma}^{-1}  \left[\mathcal{D}_{\bu}\left(  \innerprod{[\operatorname{diag}(\widehat{U} \bm{e}_k) \SFA]_{k \in n_a} \bv, \; \bm{I}_{n_a} \otimes \bm{k}_{\mathsf{S}}(\bxi) } \right) \right]_{i \in [N]} \notag\\
             &= -\ES_\mathsf{S}^* (\rho \bm{I}- \SFA) \bv+ \ES_\mathsf{S}^* \bm{r} + \ES_\mathsf{S}^*  \bm{D}_{\bu}(\SFB \bv) \notag
     \end{align}
    where we substituted the expression for the KRR estimators $(\EAEstim_{{\reg}}, \EBEstim_{{\reg}})$ of \eqref{eq:app:krrIG}, and used the Khatri-Rao product, as well as the structure of $\spOUT$ (cf. \Cref{prop:app:estimators}). Finally,
    \begin{align}
         \innerprod{\widehat{\mathcal T}(\EV), \featx(\bx)} &= \innerprod{\ES_\mathsf{S}^* \left(-(\rho \bm{I}- \SFA) \bv+ \bm{r} +  \bm{D}_{\bu}(\SFB \bv)\right), \featx(\bx)} \notag \\
         &= \innerprod{ \; \underbrace{-(\rho \bm{I}- \SFA) \bv+ \bm{r} +   \bm{D}_{\bu}(\SFB \bv)}_{\bm{T}(\bv)} \;, \bm{k}_{\mathsf{S}}(\bx)}       \notag
    \end{align}
     Note that, while the approximate observable $\EV=\widehat{S}_{\mathsf{S}}^{*}\bv \in \spIN$ is in the full RKHS and $\widehat{G}_\reg$ only a finite rank operator in $\HS{\spIN,\spOUT}$, the following results show that we only require finite-dimensional computations through the use of the reproducing property (``kernel-trick'').

    The expression for $\SFA$ and $\SFB$ read
    \begin{align*}
        \SFA = \bm{K}_{\gamma}^{-1} \ES_{\mathrm d}\ES_\mathsf{S}^* = \bm{K}_{\gamma}^{-1} \Ktarget, \quad \text{and} \quad \SFB = [\mathrm{diag}(\bm{U} \bm{e}_{i})\SFA]_{i \in [n_a]}.
    \end{align*}
    The target kernel matrix $\Ktarget  = \ES_{\mathrm d} \ES_\mathsf{S}^*$ is computed via the It$\overline{\text{o}}$ formula and the derivative reproducing property~\citep{arnold1974stochastic,kostic2024learningGenerator, zhang2023continuous}
     \begin{align*}
         (\Ktarget)_{ij}:=\innerprod{\dot{\bx}^{(i)}, \nabla_{\bxi} k(\bxi, \bxj)} + \epsilon \operatorname{Tr} \, \nabla_{\bxi}^2\left( k(\bxi, \bxj) \right).
     \end{align*}
 \end{proof}

\IMEX*
\begin{proof}
    The proof starts with the direcretization of the descent flow $\dot{\widehat{V}} = \widehat{\mathcal T}(\EV)$, whose linearization has stable eigenvalues (c.f. \Cref{sec:app:HJBPDE}), namely
    \begin{align}
        \dot{\bv}(t) = \underbrace{-(\rho \bm{I} - \SFA) \bv(t)}_{\text{implicit}} + \underbrace{\bm{r} + \bm{D}_{\bu}(\SFB \bv(t))}_{\text{explicit}}. \notag
    \end{align}
    Then, 
    \begin{align}
        \frac{\bv^{(k+1)} - \bv^{(k)}}{\Delta t} = - (\rho \bm{I} - \SFA) \bv^{(k+1)} + \bm{r}  + \bm D_{\bu}(\SFB \bv^{(k)}), \notag
    \end{align}
    and it follows that,
    \begin{align}
        (\bm{I} + \Delta t (\rho \bm{I} - \SFA ))\bv^{(k+1)} = \Delta t \left(\bm{r}  + \bm D_{\bu}(\SFB \bv^{(k)}) \right).
    \end{align}
\end{proof}

\subsection{DERIVATIONS OF THE END-TO-END LEARNING RATES}
In this section, we prove \Cref{thm:end-to-end-error} and \Cref{cor:RateEst}. 

\EtoEError*
\begin{remark}
    Regarding \Cref{asm:ForThm}, note that  if $r_\bx \notin \spIN$, we can augment the RKHS with the kernel ${k}^r(\bx,\bx') \coloneqq k(\bx,\bx') + \langle r_\bx, r_{\bx'} \rangle$, ensuring $r_\bx$ lies in the resulting space (still denoted $\spIN$). 
\end{remark}

\begin{proof}

To arrive at the final result, we will apply the tools from Section \ref{sec:app:HJBPDE}. 
Recall that the total error can be bounded by using the triangle inequality, which leads to three terms
\begin{align}
    \|V^\star-\widehat V_{k}\|_{\LIN}\le \underbrace{\norm{\Vstar - \EV^\star}_\LIN}_{\text{learning}}+ \underbrace{\|\EV^\star-\EV_T\|_{\LIN}}_{\text{convergence}}+\underbrace{\|\EV_T-\widehat V_{k}\|_{\LIN}}_{\text{discretization}}.
\end{align}
This notation is introduced in Section \ref{sec:end-to-end-bounds} and summarized again in \Cref{tab:app:notation}. To help visualize the different components of our end-to-end error analysis, Figure~\ref{fig:errorDecomposition} organizes the results and relates them to our upper bound on the value function error, as discussed in Theorem~\ref{thm:end-to-end-error}.
\begin{figure}[!ht]
	\centering
	\begin{tikzpicture}[
  every node/.style={font=\small},
  >={Stealth},
  node distance=3cm and 5cm, 
  auto
]

  \node (EVinf) {$\EV^\star$};
  \node[right=4cm of EVinf] (Vinf) {$\Vstar$};
  \node[right=4cm of Vinf] (G) {$\mathcal{G}$};

  \node[below=2.75cm of EVinf.base, anchor=base] (EVT) {$\EV_T$};
  \node[below=2.75cm of Vinf.base, anchor=base] (EVdtT) {$\EV_{k}$};
  \node[below=2.75cm of G.base, anchor=base] (GN) {$\widehat{G}_\reg$};

  \draw[->] (Vinf) -- node[midway]   {\eqref{eq:app:asymptGuaranteesThm}} (EVinf);
  \draw[->, thick] (Vinf) -- node[midway, text width=2.6cm, align=left]  {Theorem~\ref{thm:end-to-end-error}}(EVdtT);
  \draw[->, dashed] (G)  -- (Vinf);
  \draw[->, dashed] (GN) -- (EVdtT);
  \draw[->] (G) -- node[midway, text width=2.6cm, align=left] {Theorem\ref{thm:app:error_bound}}(GN);
  \draw[->] (EVT) -- node[midway] {\eqref{eq:app:discretError}} (EVdtT);
  \draw[->] (EVinf) -- node[midway, text width=2.5cm, align=right]
                       {\eqref{eq:app:convergence}} (EVT);
\end{tikzpicture}  
	\caption{Error decomposition diagram for bounding the term $\textstyle \norm{\Vstar - \EV_{k}}_\LIN$}
	\label{fig:errorDecomposition}
\end{figure}

We first start to analyze the learning error, $\norm{\Vstar - \EV^\star}_\LIN$.

    \paragraph{Infinite Horizon Learning Error}
 In addition to the conditions stated in \Cref{thm:end-to-end-error}, suppose that \Cref{lem:app:expConvHJB} also holds.
Let $L_{\mathcal{D}}$ denote the Lipschitz constant of ${\mathcal D}_{\bu}$ (see Proposition~\ref{prop:LipschitzFenchel} and Assumption~\ref{asm:ForThm}).

We begin our analysis with an essential property of the exact and approximate HJBs. The exact HJB satisfies $ \| \mathcal{T}(V_1) - \mathcal{T}(V_2) \| \leq   \norm{\mathcal{A} - \rho I}_{\HIN \to \LIN} \norm{V_1 - V_2}_\LIN + \norm{\mathcal{D}_\bu (\mathcal{B} V_1 )- \mathcal{D}_\bu (\mathcal{B} V_2 )}_\LIN $. Thus, using the Lipschitz continuity of $\mathcal{D}_\bu(\cdot)$, the Lipschitz constant $L_{\mathcal{T}}$ of the exact HJB and the Lipschitz constant  $L_{\widehat{\mathcal{T}}}$ of the approximated HJB read
	\begin{align}
		L_{\mathcal{T}} &= \norm{\mathcal{A} - \rho I}_{\HIN \to \LIN} + L_{\mathcal{D}} \norm{\mathcal{B}}_{\HIN \to \R^{n_a} \otimes \LIN}  \notag \\
		\quad L_{\widehat{\mathcal{T}}} &= \norm{\EAEstim_\reg - \rho I}_{\spIN \to \spIN} + L_{\mathcal{D}} \norm{\EBEstim_\reg}_{\spIN \to \R^{n_a} \otimes \spIN}. \notag
	\end{align}
	But then, since the exact HJB operator $\mathcal T$ and its approximation $\widehat{\mathcal T}$ are Lipschitz continuous in their respective norms, $\LIN$ and $\spIN$, their Fréchet derivatives, denoted as 
	\begin{eqnarray*}
		D_F^\star(\Vstar) \defeq  \frac{\partial}{\partial V} \mathcal T(\Vstar) \qquad
		\text{and} \qquad \widehat{D}_F(\EV) \defeq \frac{\partial}{\partial \EV} \widehat{\mathcal T}(\EV)
	\end{eqnarray*}
	are bounded linear operators. Moreover, since we assume that the Fenchel conjugates of the action penalties are twice differentiable, the above Frechet derivatives are themselves Lipschitz continuous, satisfying the bound:
	\begin{equation*}
		\| D_F^\star(\Vstar) - \widehat{D}_F(\EV) \|_\LIN \ \leq \ c_1 (\error(\EAEstim_\reg)+ L_{\mathcal{D}} \error(\EBEstim_\reg)) + c_2 \| \EV - \Vstar \|_\LIN.
	\end{equation*}
	This can be seen using the triangle inequality, which yields
	\begin{eqnarray*}
		\| D_F^\star(\Vstar) - \widehat{D}_F(\EV) \|_\LIN &\leq& \| D_F^\star(\Vstar) - \widehat{D}^F(\Vstar) \|_\LIN + \| \widehat{D}^F(\Vstar) - \widehat{D}_F(\EV) \|_\LIN \\
		&\leq& c_1 (\error(\EAEstim_\reg)+ L_{\mathcal{D}} \error(\EBEstim_\reg)) + c_2 \| \EV - \Vstar \|_\LIN.
	\end{eqnarray*}
	where for the first term of the second inequality, we have used that
	\begin{align*}
		\norm{\mathcal{T}(\Vstar) - \widehat{\mathcal{T}}(\Vstar)}_\LIN \le \norm{\AbFPK - \EAEstim_\reg}\norm{\Vstar} + \norm{\mathcal{D}_\bu(\BbFPK \Vstar) - \mathcal{D}_\bu(\EBEstim_\reg \Vstar)}_\LIN \\
		\le \left( \error(\EAEstim_\reg) + L_D \error(\EBEstim_\reg) \right) \norm{\Vstar}_\LIN
	\end{align*}
	
	Here, $c_1 < \infty$ and $c_2 < \infty$ are the Lipschitz constants of $\widehat D_F$ with respect to the operators and the arguments, respectively.
	Moreover, since the exact HJB is exponentially stable, we know that $D_F^\star$ satisfies $\operatorname{SpectralGap}(D_F^\star) = \lambda_{\text{gap}}+\rho $. Because the spectral gap of a linear operator is locally Lipschitz continuous with respect to small perturbations~\citep {kloeckner2018, kato2013perturbation}, we further find that
	\begin{equation}
		\operatorname{SpectralGap}(\widehat{D}_F(\EV)) = \lambda_{\text{gap}}+\rho - C_2 \left( (\error(\EAEstim_\reg)+ L_{\mathcal{D}} \error(\EBEstim_\reg)) + \| \EV - \Vstar \|_\LIN \right) \notag
		\label{eq:approxRate}
	\end{equation}
	for some constant $C_2 < \infty$ under the additional assumption that the approximation errors $\error(\EAEstim_\reg)$ and $\error(\EBEstim_\reg)$ are sufficiently small. Consequently, Banach's fixed point theorem (or Schauder's extension for bounded operators)~\citep{dontchev2009implicit} implies that there exists a solution of $\EVstar \in \spIN$ of the equilibrium equation $\widehat T(\EVstar) = 0$. It is further known that $\EVstar$ is a locally exponentially stable equilibrium of $\widehat{\mathcal T}$, which must be in a local neighborhood of $\Vstar$. Finally, Banach's fixed-point theorem implies that we have
	\begin{equation}
		\| \EVstar- \Vstar \|_\LIN \leq \frac{C_1 (\error(\EAEstim_\reg)+ L_D \error(\EBEstim_\reg)) }{\lambda_{\text{gap}}+\rho - C_2 \left( (\error(\EAEstim_\reg)+ L_D \error(\EBEstim_\reg)) \right)}
		\label{eq:app:asymptGuarantees}
	\end{equation}
	for some constant or some constants $C_1,C_2 \in (0,\infty)$, recalling that $\error(\EAEstim_\reg)$, $\error(\EBEstim_\reg)$ need to be sufficiently small in order to ensure that the denominator in~\eqref{eq:app:asymptGuarantees} is strictly positive.
    Recall that we showed that $\max\{\error(\EAEstim_\reg), \error(\EBEstim_\reg)\}  \leq \error{(\widehat{G}_\reg)} \le \delta$ in Section \eqref{sec:app:op-learningBounds}. We define
    $$
    \widehat{\lambda}_{\mathrm{gap}} = \lambda_{\mathrm{gap}} - C_2 \left( (\error(\EAEstim_\reg)+ L_D \error(\EBEstim_\reg)) \right)
    $$
    for $C_2 > 0$. Thus, \eqref{eq:app:asymptGuarantees} further simplifies to
    \begin{align}
    \label{eq:app:asymptGuaranteesThm}
        \| \EVstar- \Vstar \|_\LIN \le C (\widehat{\lambda}_{\mathrm{gap}}{+}\rho)^{-1}\delta. \qquad C>0, \tag{\textsc{Learning Error}}
    \end{align}
    where $\delta$ is the upper bound on $\error(\widehat{G}_\reg)$, and recalling that $\max\{\error(\EAEstim_\reg), \error(\EBEstim_\reg)\}  \leq \error{(\widehat{G}_\reg)}$ (\Cref{sec:app:op-learningBounds}).

\paragraph{Exponential Convergence of the Approximate Value Function}
A bound on the term $\|\EV^\star-\EV_T\|_{\LIN}$ can be found by using the above considerations, where we already established local exponential convergence of the approximate HJB. Since the exact HJB is globally exponentially stable (see~\Cref{lem:app:expConvHJB}), the statement of this result follows directly using a variant of Gronwall's lemma for operators~\citep{kostic2024learning} and the result from the previous theorem. Then, given $\EV_0 = \EV(0)$ we have
\begin{equation}
    \| \EVstar- \EV_T \|_{\LIN} \leq M e^{-(\widehat{\lambda}_\mathrm{gap} + \rho)\ T} \| \EVstar- \EV_0 \|_{\LIN},
    \label{eq:app:convergence} \tag{\textsc{Convergence Error}}
\end{equation}
for some constant $M < \infty$ and $\widehat{\lambda}_\mathrm{gap} = \lambda_\mathrm{gap} - C_2 \left( \error(\EAEstim_\reg) + L_{\mathcal{D}}\,\error(\EBEstim_\reg) \right)$.
In detail, since both the exact HJB as well as the approximate HJB are globally Lipschitz continuous, Gronwall's lemma implies that there exists for every given $T' < \infty$ a constant $C_3 < \infty$ such that
\begin{align}
   \| V_{T'} - \EV_{T'} \| \ \leq \ C_3 \left( (\error(\EAEstim_\reg)+ L_D \error(\EBEstim_\reg)) \right), \notag
\end{align}
where $V_{T'} $ is in the local neighborhood of $\Vstar$ in which we have exponential convergence. Here, we recall our assumption that the error on the right-hand side of this inequality is sufficiently small, which then also implies that \eqref{eq:app:convergence} holds.

\paragraph{Discretization Error}

Constructing $\EV_T$ involves infinite-dimensional but finite-rank operators, so that the discretization error satisfies
\begin{align}
	\textstyle \norm{\EV_T - \EV_{k}}_\LIN \leq \textstyle C \norm{\bv_T - \bv_{k}}_2, \qquad \bv_T, \bv_{k} \in \mathbb{R}^N,
    \notag
\end{align}
for some $C > 0$, where we use the bounded operator $\widehat{S}_S^*: \mathbb{R}^N \to \spIN$ to define $\EV_T \coloneqq \widehat{S}_S^* \bv_T$. Following~\cite{frontin2022output, wanner1996solving, viswanath2001global}, we establish the global discretization error at final time $T$, namely
\begin{equation*}
        \norm{\bv_T - \bv_{k}}_2 \leq \mathcal{E}(T,p)\, (\Delta t)^p ,
\end{equation*}
where $p$ is the order of accuracy of the discretization method and $\mathcal{E}(T,p)$ remains bounded uniformly in $T$ for all finite horizons. 
Then, under the assumptions derived in Theorem~\ref{thm:end-to-end-error}, the global discretization error satisfies
\begin{equation}
\label{eq:app:discretError}
    \norm{\EV_T - \EV_{k}}_\LIN \leq C \, \mathcal{E}(T,p)\, (\Delta t)^p  \propto \mathcal{O}((\Delta t)^p). \tag{\textsc{Discretization Error}}
\end{equation}
Finally summing \eqref{eq:app:asymptGuaranteesThm}, \eqref{eq:app:convergence} and \eqref{eq:app:discretError} yields the result of \Cref{thm:end-to-end-error}.
\end{proof}

\Rate*
\begin{proof}
    The result directly follows from substituting the rates derived for $\error(\EAEstim_\reg)$ and $\error(\EBEstim_\reg)$ from Theorem \ref{thm:app:error_bound} into \eqref{eq:app:asymptGuaranteesThm} for $\delta$. Looking at \eqref{eq:app:asymptGuarantees}, we could even extract the factor $(L_D + 1)$ from the constant $c$, where $L_D$ is the Lipschitz constant of the Fenchel conjugate, to obtain a refined bound
    $$
       \norm{\Vstar - \EV^\star}_\LIN \leq  c' \, (L_D + 1) (\widehat{\lambda}_{\mathrm{gap}}{+}\rho)^{-1}  N^{-\frac{\rpar}{2(\rpar + \spar)}}\,\ln \xi^{-1}, \qquad c' > 0.
    $$
\end{proof}

 \section{ADDITIONAL EXPERIMENTS}

\paragraph{Complexity of \Cref{alg:O2RL}} 
The computational cost of the World Model Construction is dominated by the 
$\mathcal{O}(N^3)$ term arising from the factorization of $K_{\gamma}$ and the subsequent 
solves required to obtain $\SFA$. Building the state kernel matrix $K_{\mathsf{S}}$ requires 
$\mathcal{O}(N^2 n_s)$ flops for typical kernels that depend on pairwise 
Euclidean distances or dot products in $\mathbb{R}^{n_s}$ (e.g., squared-exponential or 
Matérn kernels). Forming $K_{\gamma}$ through the interaction term 
$\bm{U}\bm{U}^{\top}$ adds another $\mathcal{O}(N^2 (n_s + n_a))$ operations. Similarly, 
evaluating the target matrix $K_d$ scales as $\mathcal{O}(N^2 n_s)$, provided closed-form 
expressions for the kernel derivatives are available, which is the case for the aforementioned 
popular kernels.

The Dynamic Programming Recursion is likewise dominated by an 
$\mathcal{O}(N^3)$ factorization of $\bm{M}$. Per iteration, evaluating the product 
$\bm{B}\bm{v}$ costs $\mathcal{O}(N^2 n_a)$, while computing the Fenchel operator 
$D_{\bu}$ pointwise over all $N$ samples scales as $\mathcal{O}(N\,C(n_a))$, where 
$C(n_a)$ denotes the cost of evaluating one Fenchel conjugate 
$D_{\bu}(\bl_i)$. Applying the pre-factorized matrix $\bm{M}$ then adds 
$\mathcal{O}(N^2)$ operations per iteration. The resulting overall complexities are 
summarized in \Cref{tab:app:complexities}. These results correspond to dense matrix 
operations. Lower complexity can be achieved when exploiting sparsity. Importantly, the 
computational cost scales linearly with the state dimension (and with the action dimension 
in the case of separable action penalties) and can be further reduced using sketching 
techniques such as Nyström approximations~\citep{rudi2017falkon}.

\begin{table}[!ht]
\centering
\renewcommand{\arraystretch}{1.2}
\begin{tabular}{l|c}
\hline
\textbf{Task} & \textbf{Complexity} \\ 
\hline
Operator World Model Construction & $\mathcal{O}\big(N^3 + N^2 (n_s + n_a)\big)$ \\[3pt]
Dynamic Programming 
& $\mathcal{O}\big(N^3 + k_{\max}(N^2 n_a + N\,C(n_a))\big)$ \\
\hline
\end{tabular}
\caption{Computational complexity of \Cref{alg:O2RL}. $C(n_a)$ denotes the complexity of evaluating the Fenchel conjugate $\bm{D}_\bu(\bl)$ on one element of $\bl$, e.g. $C(n_a)=n_a$ for separable action penalties or $n_a^2$ for quadratic (coupled) action penalties}
\label{tab:app:complexities}
\end{table}

\subsection{Proof-of-Concept Examples}

We evaluate our learning error rates on linear and nonlinear process dynamics (Figure \ref{fig:rate}). While these are often studied using different policy classes, linear and nonlinear \citep{tu2019gap}, our convergence analysis covers both on equal footing -- without any parametric assumptions \citep{recht2019tour}. We run our proposed Algorithm \ref{alg:O2RL} over different seeds and gather i.i.d data to form the quantiles and means in Figure \ref{fig:rate}. In both cases, we use a squared exponential (SE) kernel $k(\bm{x},\bm{y}) = \exp{(\textstyle -\nicefrac{\|\bm{x} -\bm{y}\|^ 2}{2\sigma^2})}$ while the data samples are drawn randomly from a uniform distribution. The test dataset for the value function was from a uniform grid of 1000 points on the interval $[-3,3]$. For all our runs,  Algorithm \ref{alg:O2RL} discretization parameters were set to $k_{max} = 1000$ and $\Delta t = 0.01s$.

\paragraph{Linear SDE with Additive Action} 
Linear-quadratic (LQ) control problems play an important role in the control literature. They provide explicit solutions and, often, nonlinear ones can be approximated by LQ ones~\citep{zhao2023policy}.
To validate our findings, we study the value function convergence for an Ornstein-Uhlenbeck process $dS_t = (-S_t+a_t)dt + \sqrt{2\epsilon}~dW_t$ \citep{houska2025convex} where the optimal value function is $V^\star(s) = s^2$ when setting $\rho = 0$, for any $\epsilon$. We set $\epsilon=0.01$, and a reward function $r(x, a)= - 3 s^2-a^2$. The hyperparameter for the used SE kernel is $\sigma=10$ with regularizer $\gamma = 10^{-10}$. The statistics in Figure \ref{fig:rate} (left) are obtained from 10 iid runs. $[-1,1] \times [-3.5,3.5]$. Figure \ref{fig:policy_valfun_1} shows how well our learned policy and value functions (using 200 training data) compare to the ground truth.
\begin{figure}
	\centering
	\includegraphics[width=0.75\linewidth]{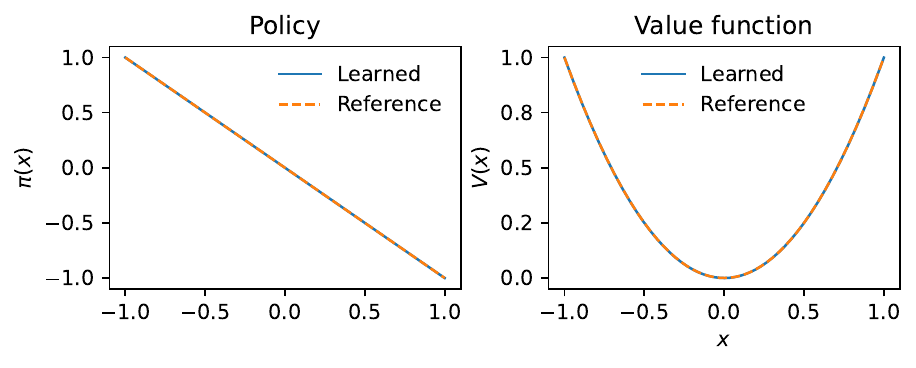}
	\caption{Comparison of learned ($200$ datapoints) and reference (ground truth) value function and policy linear SDE with additive action, demonstrating that we can effectively learn unknown value functions and policies without parametric assumptions.}
	\label{fig:policy_valfun_1}
\end{figure}
\paragraph{Nonlinear SDE with Affine Action} 
We transfer to a nonlinear setting, using an action-affine benchmark system $dS_t = (f(S_t)+g(S_t)a_t)dt+ \sqrt{2\epsilon}~dW_t$, where  $g(s)=\frac{1}{2} + \sin{(s)}$ and $f(s)=-\frac{1}{2}(1-g(s)^2)$~\citep{Doyle1996}. Here, the reward is $r(s, a)= - s^2-a^2$ and, practically, the optimal value function approaches $V_{\infty}=s^2$ as $\epsilon$ tends to zero. The hyperparameter for the used SE kernel is $\sigma=1$ and we set $\epsilon=0.01$  with regularizer $\gamma = 10^{-8}$. The statistics in Figure \ref{fig:rate} (right) are obtained from 8 iid runs. The data is drawn from the state-action set $[-3,3] \times [-3.5,3.5]$. Figure \ref{fig:policy_valfun_2} shows how well our learned policy and value functions (using 200 training data) compare to the reference ground truth.
\begin{figure}
	\centering
	\includegraphics[width=0.75\linewidth]{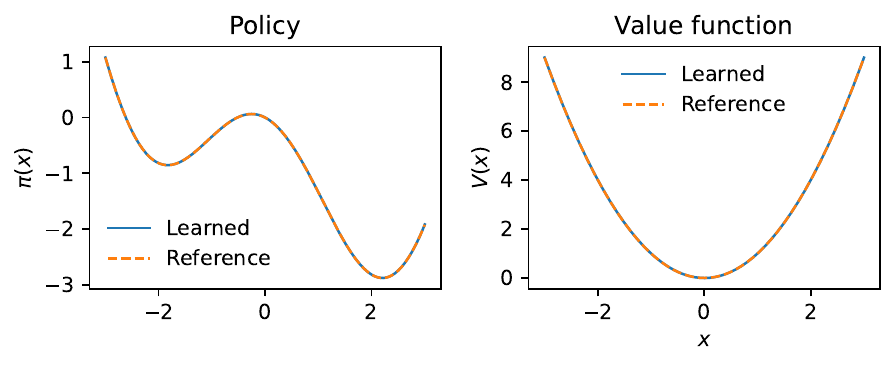}
	\caption{Comparison of learned ($200$ datapoints) and reference (ground truth as $\epsilon$ goes to zero) value function for the nonlinear SDE with affine action, demonstrating that we can effectively learn unknown value functions and policies without parametric assumptions.}
	\label{fig:policy_valfun_2}
\end{figure}

\paragraph{Pendulum-Gym}
We evaluate \Cref{alg:O2RL} on Gymnasium \texttt{Pendulum-v1} \citep{towers2024gymnasium}. The action is a torque \(a\in[-2,2]\), and the observation is \(\bx=(\cos\theta,\sin\theta,\dot\theta)\) with \(\cos\theta,\sin\theta\in[-1,1]\) and \(\dot\theta\in[-8,8]\). Episodes truncate at 200 steps. Following the official reward, we split the state term and action cost as \(r_{\bx}(\theta,\dot\theta)=-(\theta^{2}+0.1\,\dot\theta^{2})\) and \(c_{\bu}(a)=0.001\,a^{2}\), so the maximum achievable reward is \(0\) (upright, zero velocity, zero torque). We run \Cref{alg:O2RL} with \(\rho=0.1\), \(\sigma=3\), \(\gamma=10^{-7}\), and \(k_{\max}=1000\). The resulting value function and policy are shown in \Cref{fig:pendulum_value_policy} on the offline \texttt{d3rlpy} \citep{d3rlpy} dataset (``Replay''), with \(8000\) subsamples of \((\text{state},\text{action},\text{next state})\) tuples, for which we obtain infinitesimal samples via finite differences.
\begin{figure}
    \centering
    \includegraphics[width=0.75\linewidth]{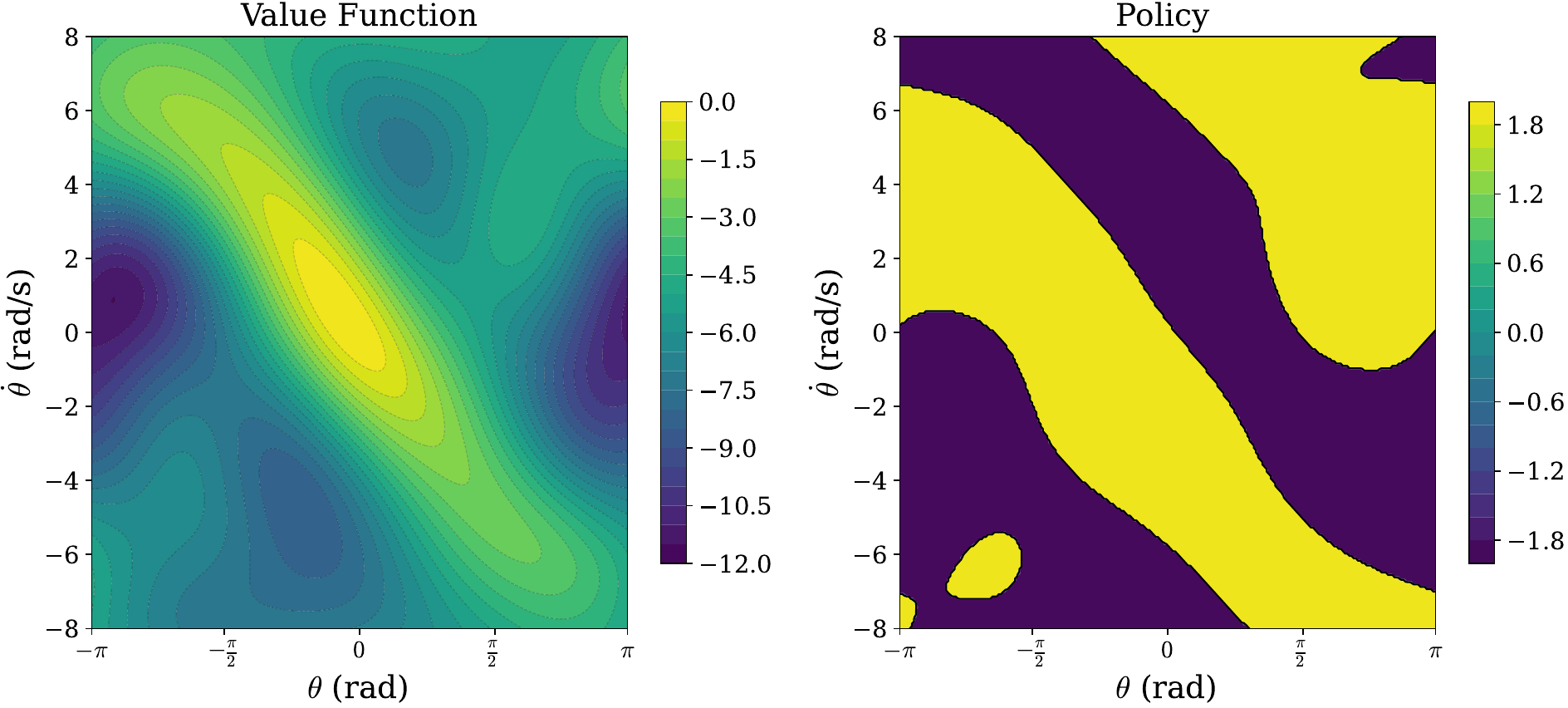}
    \caption{Value function (left) and policy (right) output from \textsf{O-CTRL}, trained on \(8000\) datapoints from the offline \texttt{d3rlpy} dataset (``Replay'').}
    \label{fig:pendulum_value_policy}
\end{figure}

\newpage
\bibliography{references}

\begin{thebibliography}{98}
\providecommand{\natexlab}[1]{#1}
\providecommand{\url}[1]{\texttt{#1}}
\expandafter\ifx\csname urlstyle\endcsname\relax
  \providecommand{\doi}[1]{doi: #1}\else
  \providecommand{\doi}{doi: \begingroup \urlstyle{rm}\Url}\fi

\bibitem[Adams and Fournier(2003)]{adams2003sobolev}
Robert~A Adams and John~JF Fournier.
\newblock \emph{Sobolev spaces}, volume 140.
\newblock Elsevier, 2003.

\bibitem[Anderson and Moore(2007)]{anderson2007optimal}
Brian~DO Anderson and John~B Moore.
\newblock \emph{Optimal control: linear quadratic methods}.
\newblock Courier Corporation, 2007.

\bibitem[Antos et~al.(2007)Antos, Szepesv{\'a}ri, and Munos]{antos2007fitted}
Andr{\'a}s Antos, Csaba Szepesv{\'a}ri, and R{\'e}mi Munos.
\newblock Fitted q-iteration in continuous action-space mdps.
\newblock \emph{Advances in neural information processing systems}, 20, 2007.

\bibitem[Arnold(1974)]{arnold1974stochastic}
Ludwig Arnold.
\newblock \emph{Stochastic differential equations: theory and applications}, volume~2.
\newblock {John Wiley \& Sons}, 1974.

\bibitem[Ayoub et~al.(2024)Ayoub, Wang, Liu, Robertson, McInerney, Liang, Kallus, and Szepesv{\'a}ri]{ayoub2024switching}
Alex Ayoub, Kaiwen Wang, Vincent Liu, Samuel Robertson, James McInerney, Dawen Liang, Nathan Kallus, and Csaba Szepesv{\'a}ri.
\newblock Switching the loss reduces the cost in batch reinforcement learning.
\newblock In \emph{Proceedings of the 41st International Conference on Machine Learning}, pages 2135--2158, 2024.

\bibitem[Bevanda et~al.(2025{\natexlab{a}})Bevanda, Driessen, Iacob, Sosnowski, Tóth, and Hirche]{bevanda2025nonparametric}
Petar Bevanda, Bas Driessen, Lucian~Cristian Iacob, Stefan Sosnowski, Roland Tóth, and Sandra Hirche.
\newblock Nonparametric control {Koopman} operators.
\newblock \emph{arXiv preprint arXiv:2405.07312}, 2025{\natexlab{a}}.

\bibitem[Bevanda et~al.(2025{\natexlab{b}})Bevanda, Hoischen, Wittmann, Brudigam, Hirche, and Houska]{bevanda2025}
Petar Bevanda, Nicolas Hoischen, Tobias Wittmann, Jan Brudigam, Sandra Hirche, and Boris Houska.
\newblock Kernel-based optimal control: An infinitesimal generator approach.
\newblock In \emph{Proceedings of the 7th Annual Learning for Dynamics and Control Conference}, volume 283, pages 1038--1052. PMLR, 04--06 Jun 2025{\natexlab{b}}.

\bibitem[Blessing et~al.(2025)Blessing, Jiang, Kupper, and Liang]{blessing2025convergence}
Jonas Blessing, Lianzi Jiang, Michael Kupper, and Gechun Liang.
\newblock Convergence rates for chernoff-type approximations of convex monotone semigroups.
\newblock \emph{Stochastic Processes and their Applications}, page 104700, 2025.

\bibitem[Boyd(2004)]{boyd2004convex}
Stephen Boyd.
\newblock Convex optimization.
\newblock \emph{Cambridge UP}, 2004.

\bibitem[Brunton et~al.(2022)Brunton, Budiši{\'c}, Kaiser, and Kutz]{brunton2022modern}
Steven~L. Brunton, Marko Budiši{\'c}, Eurika Kaiser, and J.~Nathan Kutz.
\newblock Modern {Koopman} theory for dynamical systems.
\newblock \emph{SIAM Review}, 64\penalty0 (2):\penalty0 229--340, 2022.

\bibitem[Burns et~al.(2023)Burns, Yu, Finn, and Hausman]{burns2023offline}
Kaylee Burns, Tianhe Yu, Chelsea Finn, and Karol Hausman.
\newblock Offline reinforcement learning at multiple frequencies.
\newblock In \emph{Conference on robot learning}, pages 2041--2051. PMLR, 2023.

\bibitem[Caldarelli et~al.(2025)Caldarelli, Chatalic, Colom{\'e}, Molinari, Ocampo-Martinez, Torras, and Rosasco]{Caldarelli2024}
Edoardo Caldarelli, Antoine Chatalic, Adri{\`a} Colom{\'e}, Cesare Molinari, Carlos Ocampo-Martinez, Carme Torras, and Lorenzo Rosasco.
\newblock Linear quadratic control of nonlinear systems with {Koopman} operator learning and the nystr{\"o}m method.
\newblock \emph{Automatica}, 177:\penalty0 112302, 2025.

\bibitem[Chen and Jiang(2019)]{chen2019information}
Jinglin Chen and Nan Jiang.
\newblock Information-theoretic considerations in batch reinforcement learning.
\newblock In \emph{International conference on machine learning}, pages 1042--1051. PMLR, 2019.

\bibitem[Dontchev and Rockafellar(2009)]{dontchev2009implicit}
Asen~L Dontchev and R~Tyrrell Rockafellar.
\newblock \emph{Implicit functions and solution mappings}, volume 543.
\newblock Springer, 2009.

\bibitem[Doya(2000)]{doya2000reinforcement}
Kenji Doya.
\newblock Reinforcement learning in continuous time and space.
\newblock \emph{Neural computation}, 12\penalty0 (1):\penalty0 219--245, 2000.

\bibitem[Doyle et~al.(1996)Doyle, Primbs, Shapiro, and Nevistic]{Doyle1996}
John Doyle, James~A Primbs, Benjamin Shapiro, and Vesna Nevistic.
\newblock Nonlinear games: examples and counterexamples.
\newblock In \emph{35th Conference on Decision and Control}, volume~4, pages 3915--3920. IEEE, 1996.

\bibitem[Engel et~al.(2000)Engel, Nagel, and Brendle]{engel2000one}
Klaus-Jochen Engel, Rainer Nagel, and Simon Brendle.
\newblock \emph{One-parameter semigroups for linear evolution equations}, volume 194.
\newblock Springer, 2000.

\bibitem[Engl and Ramlau(2015)]{engl2015regularization}
Heinz~W Engl and Ronny Ramlau.
\newblock Regularization of inverse problems.
\newblock In \emph{Encyclopedia of applied and computational mathematics}, pages 1233--1241. Springer, 2015.

\bibitem[Farahmand(2011)]{farahmand2011regularization}
Amir-massoud Farahmand.
\newblock \emph{Regularization in Reinforcement Learning}.
\newblock Phd thesis, University of Alberta, 2011.

\bibitem[Fleming and Soner(2006)]{fleming2006controlled}
Wendell~H. Fleming and H.~Mete Soner.
\newblock \emph{Controlled {Markov} Processes and {Viscosity} Solutions}.
\newblock Springer, New York, 2006.

\bibitem[Frontin and Darmofal(2022)]{frontin2022output}
Cory~V Frontin and David~L Darmofal.
\newblock Output error behavior for discretizations of ergodic, chaotic systems of ordinary differential equations.
\newblock \emph{Physics of Fluids}, 34\penalty0 (10), 2022.

\bibitem[Fujimoto and Gu(2021)]{fujimoto2021minimalist}
Scott Fujimoto and Shixiang~Shane Gu.
\newblock A minimalist approach to offline reinforcement learning.
\newblock \emph{Advances in neural information processing systems}, 34:\penalty0 20132--20145, 2021.

\bibitem[Fujimoto et~al.(2019)Fujimoto, Meger, and Precup]{fujimoto2019off}
Scott Fujimoto, David Meger, and Doina Precup.
\newblock Off-policy deep reinforcement learning without exploration.
\newblock In \emph{International conference on machine learning}, pages 2052--2062. PMLR, 2019.

\bibitem[Gr{\"u}new{\"a}lder et~al.(2012)Gr{\"u}new{\"a}lder, Lever, Baldassarre, Patterson, Gretton, and Pontil]{GrunewalderLBPGPJ2012}
Steffen Gr{\"u}new{\"a}lder, Guy Lever, Luca Baldassarre, Silvana Patterson, Arthur Gretton, and Massimiliano Pontil.
\newblock Conditional mean embeddings as regressors.
\newblock In \emph{International Conference on Machine Learning}, pages 1823--1830. Omnipress, 2012.

\bibitem[Halperin(2024)]{halperin2024distributional}
Igor Halperin.
\newblock Distributional offline continuous-time reinforcement learning with neural physics-informed pdes (sciphy rl for doctr-l).
\newblock \emph{Neural Computing and Applications}, 36\penalty0 (9):\penalty0 4643--4659, 2024.

\bibitem[Has'minskii(1960)]{Hasminskii1960}
R.~Z. Has'minskii.
\newblock Ergodic properties of recurrent diffusion processes and stabilization of the solution of the {Cauchy} problem for parabolic equations.
\newblock \emph{Theory of Probability and its Applications}, 5:\penalty0 179--196, 1960.

\bibitem[He(2013)]{he2013euler}
Yinnian He.
\newblock Euler implicit/explicit iterative scheme for the stationary navier--stokes equations.
\newblock \emph{Numerische Mathematik}, 123\penalty0 (1):\penalty0 67--96, 2013.

\bibitem[Hinze et~al.(2009)Hinze, Pinnau, Ulbrich, and Ulbrich]{Hinze2009}
M.~Hinze, R.~Pinnau, M.~Ulbrich, and S.~Ulbrich.
\newblock \emph{Optimization with PDE Constraints}.
\newblock Springer, 2009.

\bibitem[Holt et~al.(2023)Holt, H{\"u}y{\"u}k, Qian, Sun, and van~der Schaar]{holt2023neural}
Samuel Holt, Alihan H{\"u}y{\"u}k, Zhaozhi Qian, Hao Sun, and Mihaela van~der Schaar.
\newblock Neural laplace control for continuous-time delayed systems.
\newblock In \emph{International Conference on Artificial Intelligence and Statistics}, pages 1747--1778. PMLR, 2023.

\bibitem[Houska(2025)]{houska2025convex}
Boris Houska.
\newblock Convex operator-theoretic methods in stochastic control.
\newblock \emph{Automatica}, 177:\penalty0 112274, 2025.
\newblock ISSN 0005-1098.

\bibitem[Hwangbo et~al.(2019)Hwangbo, Lee, Dosovitskiy, Bellicoso, Tsounis, Koltun, and Hutter]{hwangbo2019learning}
Jemin Hwangbo, Joonho Lee, Alexey Dosovitskiy, Dario Bellicoso, Vassilios Tsounis, Vladlen Koltun, and Marco Hutter.
\newblock Learning agile and dynamic motor skills for legged robots.
\newblock \emph{Science Robotics}, 4\penalty0 (26):\penalty0 eaau5872, 2019.

\bibitem[Jacot et~al.(2018)Jacot, Gabriel, and Hongler]{jacot2018neural}
Arthur Jacot, Franck Gabriel, and Cl{\'e}ment Hongler.
\newblock Neural tangent kernel: Convergence and generalization in neural networks.
\newblock \emph{Advances in neural information processing systems}, 31, 2018.

\bibitem[Jia and Zhou(2022{\natexlab{a}})]{jia2022policya}
Yanwei Jia and Xun~Yu Zhou.
\newblock Policy evaluation and temporal-difference learning in continuous time and space: A martingale approach.
\newblock \emph{Journal of Machine Learning Research}, 23\penalty0 (154):\penalty0 1--55, 2022{\natexlab{a}}.

\bibitem[Jia and Zhou(2022{\natexlab{b}})]{jia2022policyb}
Yanwei Jia and Xun~Yu Zhou.
\newblock Policy gradient and actor-critic learning in continuous time and space: Theory and algorithms.
\newblock \emph{Journal of Machine Learning Research}, 23\penalty0 (275):\penalty0 1--50, 2022{\natexlab{b}}.

\bibitem[Jia and Zhou(2023)]{jia2023q}
Yanwei Jia and Xun~Yu Zhou.
\newblock q-learning in continuous time.
\newblock \emph{Journal of Machine Learning Research}, 24\penalty0 (161):\penalty0 1--61, 2023.

\bibitem[Kallenberg(1997)]{kallenberg1997foundations}
Olav Kallenberg.
\newblock \emph{Foundations of modern probability}.
\newblock Springer, 1997.

\bibitem[Kato(2013)]{kato2013perturbation}
Tosio Kato.
\newblock \emph{Perturbation theory for linear operators}, volume 132.
\newblock Springer Science \& Business Media, 2013.

\bibitem[Kaufmann et~al.(2023)Kaufmann, Bauersfeld, Loquercio, M{\"u}ller, Koltun, and Scaramuzza]{kaufmann2023champion}
Elia Kaufmann, Leonard Bauersfeld, Antonio Loquercio, Matthias M{\"u}ller, Vladlen Koltun, and Davide Scaramuzza.
\newblock Champion-level drone racing using deep reinforcement learning.
\newblock \emph{Nature}, 620\penalty0 (7976):\penalty0 982--987, 2023.

\bibitem[Kidambi et~al.(2020)Kidambi, Rajeswaran, Netrapalli, and Joachims]{kidambi2020morel}
Rahul Kidambi, Aravind Rajeswaran, Praneeth Netrapalli, and Thorsten Joachims.
\newblock Morel: Model-based offline reinforcement learning.
\newblock \emph{Advances in neural information processing systems}, 33:\penalty0 21810--21823, 2020.

\bibitem[Kim et~al.(2021)Kim, Shin, and Yang]{kim2021hamilton}
Jeongho Kim, Jaeuk Shin, and Insoon Yang.
\newblock Hamilton-jacobi deep q-learning for deterministic continuous-time systems with lipschitz continuous controls.
\newblock \emph{Journal of Machine Learning Research}, 22\penalty0 (206):\penalty0 1--34, 2021.

\bibitem[Kloeckner(2018)]{kloeckner2018}
Benoit~R. Kloeckner.
\newblock Effective perturbation theory for simple isolated eigenvalues of linear operators.
\newblock \emph{Journal of Operator Theory}, 81\penalty0 (1):\penalty0 175--194, 2018.

\bibitem[Korda and Mezi{\'c}(2018)]{korda2018convergence}
Milan Korda and Igor Mezi{\'c}.
\newblock On convergence of extended dynamic mode decomposition to the koopman operator.
\newblock \emph{Journal of Nonlinear Science}, 28\penalty0 (2):\penalty0 687--710, 2018.

\bibitem[Kostic et~al.(2022)Kostic, Novelli, Maurer, Ciliberto, Rosasco, and Pontil]{Kostic2022LearningSpaces}
Vladimir Kostic, Pietro Novelli, Andreas Maurer, Carlo Ciliberto, Lorenzo Rosasco, and Massimiliano Pontil.
\newblock Learning dynamical systems via {Koopman} operator regression in {Reproducing} {Kernel} {Hilbert} {Spaces}.
\newblock \emph{Advances in neural information processing systems}, 35:\penalty0 4017--4031, 2022.

\bibitem[Kostic et~al.(2023)Kostic, Lounici, Novelli, and Pontil]{Kostic2023KoopmanEigenvalues}
Vladimir Kostic, Karim Lounici, Pietro Novelli, and Massimiliano Pontil.
\newblock Sharp spectral rates for {Koopman} operator learning.
\newblock \emph{Advances in neural information processing systems}, 36:\penalty0 32328--32339, 2023.

\bibitem[Kostic et~al.(2024{\natexlab{a}})Kostic, Halconruy, Devergne, Lounici, and Pontil]{kostic2024learningGenerator}
Vladimir Kostic, H{\'e}l{\`e}ne Halconruy, Timoth{\'e}e Devergne, Karim Lounici, and Massimiliano Pontil.
\newblock Learning the infinitesimal generator of stochastic diffusion processes.
\newblock \emph{Advances in Neural Information Processing Systems}, 37:\penalty0 137806--137846, 2024{\natexlab{a}}.

\bibitem[Kostic et~al.(2024{\natexlab{b}})Kostic, Novelli, Grazzi, Lounici, and Pontil]{kostic2024learning}
Vladimir Kostic, Pietro Novelli, Riccardo Grazzi, Karim Lounici, and Massimiliano Pontil.
\newblock Learning invariant representations of time-homogeneous stochastic dynamical systems.
\newblock In \emph{International Conference on Learning Representations}, 2024{\natexlab{b}}.

\bibitem[Kostic et~al.(2024{\natexlab{c}})Kostic, Lounici, Inzerilli, Novelli, and Pontil]{kostic2024consistent}
Vladimir~R. Kostic, Karim Lounici, Prune Inzerilli, Pietro Novelli, and Massimiliano Pontil.
\newblock Consistent long-term forecasting of ergodic dynamical systems.
\newblock In \emph{International Conference on Machine Learning}. PMLR, 2024{\natexlab{c}}.

\bibitem[Kostic et~al.(2025)Kostic, Lounici, Halconruy, Devergne, Novelli, and Pontil]{kostic_laplace_2024}
Vladimir~R. Kostic, Karim Lounici, H{\'e}l{\`e}ne Halconruy, Timoth{\'e}e Devergne, Pietro Novelli, and Massimiliano Pontil.
\newblock Laplace transform based low-complexity learning of continuous {Markov} semigroups.
\newblock In \emph{International Conference on Machine Learning}, pages 31560--31589. PMLR, 2025.

\bibitem[Kostrikov et~al.(2021)Kostrikov, Fergus, Tompson, and Nachum]{kostrikov2021offline}
Ilya Kostrikov, Rob Fergus, Jonathan Tompson, and Ofir Nachum.
\newblock Offline reinforcement learning with fisher divergence critic regularization.
\newblock In \emph{International Conference on Machine Learning}, pages 5774--5783. PMLR, 2021.

\bibitem[Kostrikov et~al.(2022)Kostrikov, Nair, and Levine]{kostrikov2022offline}
Ilya Kostrikov, Ashvin Nair, and Sergey Levine.
\newblock Offline reinforcement learning with implicit {Q}-learning.
\newblock In \emph{International Conference on Learning Representations}, 2022.

\bibitem[Koto(2008)]{koto2008imex}
Toshiyuki Koto.
\newblock Imex runge--kutta schemes for reaction--diffusion equations.
\newblock \emph{Journal of Computational and Applied Mathematics}, 215\penalty0 (1):\penalty0 182--195, 2008.

\bibitem[Kumar et~al.(2020)Kumar, Zhou, Tucker, and Levine]{kumar2020conservative}
Aviral Kumar, Aurick Zhou, George Tucker, and Sergey Levine.
\newblock Conservative q-learning for offline reinforcement learning.
\newblock \emph{Advances in neural information processing systems}, 33:\penalty0 1179--1191, 2020.

\bibitem[Levine et~al.(2020)Levine, Kumar, Tucker, and Fu]{levine2020offline}
Sergey Levine, Aviral Kumar, George Tucker, and Justin Fu.
\newblock Offline reinforcement learning: Tutorial, review, and perspectives on open problems.
\newblock \emph{arXiv preprint arXiv:2005.01643}, 2020.

\bibitem[Li et~al.(2022)Li, Meunier, Mollenhauer, and Gretton]{li2022optimal}
Zhu Li, Dimitri Meunier, Mattes Mollenhauer, and Arthur Gretton.
\newblock Optimal rates for regularized conditional mean embedding learning.
\newblock \emph{Advances in Neural Information Processing Systems}, 35:\penalty0 4433--4445, 2022.

\bibitem[Lillicrap et~al.(2015)Lillicrap, Hunt, Pritzel, Heess, Erez, Tassa, Silver, and Wierstra]{lillicrap2015continuous}
Timothy~P Lillicrap, Jonathan~J Hunt, Alexander Pritzel, Nicolas Heess, Tom Erez, Yuval Tassa, David Silver, and Daan Wierstra.
\newblock Continuous control with deep reinforcement learning.
\newblock \emph{arXiv preprint arXiv:1509.02971}, 2015.

\bibitem[Lutter et~al.(2020)Lutter, Belousov, Listmann, Clever, and Peters]{lutter2020hjb}
Michael Lutter, Boris Belousov, Kim Listmann, Debora Clever, and Jan Peters.
\newblock {HJB} optimal feedback control with deep differential value functions and action constraints.
\newblock In \emph{Conference on Robot Learning}, pages 640--650. PMLR, 2020.

\bibitem[Lutter et~al.(2023)Lutter, Belousov, Mannor, Fox, Garg, and Peters]{Lutter2023}
Michael Lutter, Boris Belousov, Shie Mannor, Dieter Fox, Animesh Garg, and Jan Peters.
\newblock Continuous-time fitted value iteration for robust policies.
\newblock \emph{IEEE Transactions on Pattern Analysis and Machine Intelligence}, 45\penalty0 (5):\penalty0 5534--5548, 2023.

\bibitem[Meng et~al.(2024)Meng, Zhou, Mukherjee, Fitzsimmons, Song, and Liu]{Meng2024}
Yiming Meng, Ruikun Zhou, Amartya Mukherjee, Maxwell Fitzsimmons, Christopher Song, and Jun Liu.
\newblock Physics-informed neural network policy iteration: algorithms, convergence, and verification.
\newblock In \emph{International Conference on Machine Learning}, pages 35378--35403. PMLR, 2024.

\bibitem[Meunier et~al.(2025)Meunier, Li, Gretton, and Kpotufe]{meunier2023nonlinear}
Dimitri Meunier, Zhu Li, Arthur Gretton, and Samory Kpotufe.
\newblock Nonlinear meta-learning can guarantee faster rates.
\newblock \emph{SIAM Journal on Mathematics of Data Science}, 7\penalty0 (4):\penalty0 1594--1615, 2025.

\bibitem[Mirzaei et~al.(2025)Mirzaei, Maurer, Kostic, and Pontil]{mirzaei2025empirical}
Erfan Mirzaei, Andreas Maurer, Vladimir~R. Kostic, and Massimiliano Pontil.
\newblock An empirical bernstein inequality for dependent data in hilbert spaces and applications.
\newblock In \emph{International Conference on Artificial Intelligence and Statistics}. PMLR, 2025.

\bibitem[Mollenhauer and Koltai(2020)]{mollenhauer2020nonparametric}
Mattes Mollenhauer and P{\'e}ter Koltai.
\newblock Nonparametric approximation of conditional expectation operators.
\newblock \emph{arXiv preprint arXiv:2012.12917}, 2020.

\bibitem[Mollenhauer et~al.(2022)Mollenhauer, M{\"u}cke, and Sullivan]{Mollenhauer2022}
Mattes Mollenhauer, Nicole M{\"u}cke, and T.~J. Sullivan.
\newblock Learning linear operators: infinite-dimensional regression as a well-behaved non-compact inverse problem.
\newblock \emph{arXiv preprint arXiv:2211.08875}, 2022.

\bibitem[Moreau(1965)]{moreau1965proximite}
Jean-Jacques Moreau.
\newblock Proximit{\'e} et dualit{\'e} dans un espace hilbertien.
\newblock \emph{Bulletin de la Soci{\'e}t{\'e} math{\'e}matique de France}, 93:\penalty0 273--299, 1965.

\bibitem[Munos(2000)]{munos2000study}
R{\'e}mi Munos.
\newblock A study of reinforcement learning in the continuous case by the means of viscosity solutions.
\newblock \emph{Machine Learning}, 40:\penalty0 265--299, 2000.

\bibitem[Nesterov(2005)]{nesterov2005smooth}
Yu~Nesterov.
\newblock Smooth minimization of non-smooth functions.
\newblock \emph{Mathematical programming}, 103:\penalty0 127--152, 2005.

\bibitem[Nijmeijer and van~der Schaft(1996)]{Nijmeijer96}
H.~Nijmeijer and A.~J. van~der Schaft.
\newblock \emph{Nonlinear Dynamic Control Systems}.
\newblock Springer, 1996.

\bibitem[Novelli et~al.(2024)Novelli, Prattic{\`o}, Pontil, and Ciliberto]{novelli2024operator}
Pietro Novelli, Marco Prattic{\`o}, Massimiliano Pontil, and Carlo Ciliberto.
\newblock Operator world models for reinforcement learning.
\newblock \emph{Advances in Neural Information Processing Systems}, 37:\penalty0 111432--111463, 2024.

\bibitem[Pinsky(2005)]{pinsky2005spectral}
Ross~G Pinsky.
\newblock Spectral gap and rate of convergence to equilibrium for a class of conditioned brownian motions.
\newblock \emph{Stochastic processes and their applications}, 115\penalty0 (6):\penalty0 875--889, 2005.

\bibitem[Prudencio et~al.(2023)Prudencio, Maximo, and Colombini]{prudencio2023survey}
Rafael~Figueiredo Prudencio, Marcos~ROA Maximo, and Esther~Luna Colombini.
\newblock A survey on offline reinforcement learning: Taxonomy, review, and open problems.
\newblock \emph{IEEE Transactions on Neural Networks and Learning Systems}, 2023.

\bibitem[Recht(2019)]{recht2019tour}
Benjamin Recht.
\newblock A tour of reinforcement learning: The view from continuous control.
\newblock \emph{Annual Review of Control, Robotics, and Autonomous Systems}, 2\penalty0 (1):\penalty0 253--279, 2019.

\bibitem[Rudi et~al.(2017)Rudi, Carratino, and Rosasco]{rudi2017falkon}
Alessandro Rudi, Luigi Carratino, and Lorenzo Rosasco.
\newblock Falkon: An optimal large scale kernel method.
\newblock \emph{Advances in neural information processing systems}, 30, 2017.

\bibitem[S{\"a}rkk{\"a} and Solin(2019)]{sarkka2019applied}
Simo S{\"a}rkk{\"a} and Arno Solin.
\newblock \emph{Applied stochastic differential equations}, volume~10.
\newblock Cambridge University Press, 2019.

\bibitem[Schulman et~al.(2015)Schulman, Levine, Moritz, Jordan, and Abbeel]{schulman2015trpo}
John Schulman, Sergey Levine, Philipp Moritz, Michael~I. Jordan, and Pieter Abbeel.
\newblock Trust region policy optimization.
\newblock In \emph{International Conference on Machine Learning}, pages 1889--1897. PMLR, 2015.

\bibitem[Schulman et~al.(2017)Schulman, Wolski, Dhariwal, Radford, and Klimov]{schulman2017ppo}
John Schulman, Filip Wolski, Prafulla Dhariwal, Alec Radford, and Oleg Klimov.
\newblock Proximal policy optimization algorithms.
\newblock In \emph{International Conference on Machine Learning}, pages 4651--4660. PMLR, 2017.

\bibitem[Sebastiano(2023)]{sebastiano2023high}
Boscarino Sebastiano.
\newblock High-order semi-implicit schemes for evolutionary partial differential equations with higher order derivatives.
\newblock \emph{Journal of Scientific Computing}, 96\penalty0 (1):\penalty0 11, 2023.

\bibitem[Seno and Imai(2022)]{d3rlpy}
Takuma Seno and Michita Imai.
\newblock d3rlpy: an offline deep reinforcement learning library.
\newblock \emph{Journal of Machine Learning Research}, 23\penalty0 (315):\penalty0 1--20, 2022.

\bibitem[Shilova et~al.(2024)Shilova, Delliaux, Preux, and Raffin]{shilova2024learning}
Alena Shilova, Thomas Delliaux, Philippe Preux, and Bruno Raffin.
\newblock \emph{Learning HJB viscosity solutions with PINNs for continuous-time reinforcement learning}.
\newblock PhD thesis, Inria Lille–Nord Europe, CRIStAL–Centre de Recherche en Informatique, Signal et Automatique de Lille, 2024.

\bibitem[Smale and Zhou(2007)]{smale2007learning}
Steve Smale and Ding-Xuan Zhou.
\newblock Learning theory estimates via integral operators and their approximations.
\newblock \emph{Constructive approximation}, 26\penalty0 (2):\penalty0 153--172, 2007.

\bibitem[Steinwart and Christmann(2008)]{IngoSteinwart2008SupportMachines}
I.~Steinwart and A.~Christmann.
\newblock \emph{Support Vector Machines}.
\newblock Information Science and Statistics. Springer, New York, NY, first edition, 2008.

\bibitem[Tallec et~al.(2019)Tallec, Blier, and Ollivier]{tallec2019making}
Corentin Tallec, L{\'e}onard Blier, and Yann Ollivier.
\newblock Making deep q-learning methods robust to time discretization.
\newblock In \emph{International Conference on Machine Learning}, pages 6096--6104. PMLR, 2019.

\bibitem[Talwai et~al.(2022)Talwai, Shameli, and Simchi-Levi]{talwai2022sobolev}
Prem Talwai, Ali Shameli, and David Simchi-Levi.
\newblock Sobolev norm learning rates for conditional mean embeddings.
\newblock In \emph{International conference on artificial intelligence and statistics}, pages 10422--10447. PMLR, 2022.

\bibitem[Tassa et~al.(2014)Tassa, Mansard, and Todorov]{tassa2014control}
Yuval Tassa, Nicolas Mansard, and Emo Todorov.
\newblock Control-limited differential dynamic programming.
\newblock In \emph{2014 IEEE International Conference on Robotics and Automation (ICRA)}, pages 1168--1175. IEEE, 2014.

\bibitem[Todorov et~al.(2012)Todorov, Erez, and Tassa]{todorov2012mujoco}
Emanuel Todorov, Tom Erez, and Yuval Tassa.
\newblock Mujoco: a physics engine for model-based control.
\newblock In \emph{IEEE/RSJ International Conference on Intelligent Robots and Systems}, pages 5026--5033. IEEE, 2012.

\bibitem[T{\"o}lle et~al.(2025)T{\"o}lle, Gruner, Palenicek, Schneider, G{\"u}nster, Watson, Tateo, Liu, and Peters]{tolle2025towards}
Maximilian T{\"o}lle, Theo Gruner, Daniel Palenicek, Tim Schneider, Jonas G{\"u}nster, Joe Watson, Davide Tateo, Puze Liu, and Jan Peters.
\newblock Towards safe robot foundation models using inductive biases.
\newblock \emph{arXiv preprint arXiv:2505.10219}, 2025.

\bibitem[Tomar et~al.(2022)Tomar, Shani, Efroni, and Ghavamzadeh]{tomar2022mdpo}
Manan Tomar, Lior Shani, Yonathan Efroni, and Mohammad Ghavamzadeh.
\newblock Mirror descent policy optimization.
\newblock In \emph{International Conference on Learning Representations}, 2022.

\bibitem[Towers et~al.(2024)Towers, Kwiatkowski, Terry, Balis, De~Cola, Deleu, Goul{\~a}o, Kallinteris, Krimmel, KG, et~al.]{towers2024gymnasium}
Mark Towers, Ariel Kwiatkowski, Jordan Terry, John~U Balis, Gianluca De~Cola, Tristan Deleu, Manuel Goul{\~a}o, Andreas Kallinteris, Markus Krimmel, Arjun KG, et~al.
\newblock Gymnasium: A standard interface for reinforcement learning environments.
\newblock \emph{arXiv preprint arXiv:2407.17032}, 2024.

\bibitem[Tu and Recht(2019)]{tu2019gap}
Stephen Tu and Benjamin Recht.
\newblock The gap between model-based and model-free methods on the linear quadratic regulator: An asymptotic viewpoint.
\newblock In \emph{Conference on learning theory}, pages 3036--3083. PMLR, 2019.

\bibitem[Viswanath(2001)]{viswanath2001global}
Divakar Viswanath.
\newblock Global errors of numerical ode solvers and lyapunov's theory of stability.
\newblock \emph{IMA journal of numerical analysis}, 21\penalty0 (1):\penalty0 387--406, 2001.

\bibitem[Wang et~al.(2020)Wang, Zariphopoulou, and Zhou]{wang2020reinforcement}
Haoran Wang, Thaleia Zariphopoulou, and Xun~Yu Zhou.
\newblock Reinforcement learning in continuous time and space: A stochastic control approach.
\newblock \emph{Journal of Machine Learning Research}, 21\penalty0 (198):\penalty0 1--34, 2020.

\bibitem[Wanner and Hairer(1996)]{wanner1996solving}
Gerhard Wanner and Ernst Hairer.
\newblock \emph{Solving ordinary differential equations II}, volume 375.
\newblock Springer Berlin Heidelberg New York, 1996.

\bibitem[Wenliang et~al.(2024)Wenliang, Deletang, Aitchison, Hutter, Ruoss, Gretton, and Rowland]{distribBellman}
Li~Kevin Wenliang, Gregoire Deletang, Matthew Aitchison, Marcus Hutter, Anian Ruoss, Arthur Gretton, and Mark Rowland.
\newblock Distributional bellman operators over mean embeddings.
\newblock In \emph{International Conference on Machine Learning}, pages 52839--52868. PMLR, 2024.

\bibitem[Williams et~al.(2017)Williams, Wagener, Goldfain, Drews, Rehg, Boots, and Theodorou]{williams2017information}
Grady Williams, Nolan Wagener, Brian Goldfain, Paul Drews, James~M Rehg, Byron Boots, and Evangelos~A Theodorou.
\newblock Information theoretic mpc for model-based reinforcement learning.
\newblock In \emph{2017 IEEE international conference on robotics and automation (ICRA)}, pages 1714--1721. IEEE, 2017.

\bibitem[Wiltzer et~al.(2024)Wiltzer, Bellemare, Meger, Shafto, and Jhaveri]{wiltzer2024action}
Harley Wiltzer, Marc Bellemare, David Meger, Patrick Shafto, and Yash Jhaveri.
\newblock Action gaps and advantages in continuous-time distributional reinforcement learning.
\newblock \emph{Advances in Neural Information Processing Systems}, 37:\penalty0 47815--47848, 2024.

\bibitem[Wonham(1966)]{Wonham1966}
W.~M. Wonham.
\newblock Liapunov criteria for weak stochastic stability.
\newblock \emph{Journal of Differential Equations}, 2:\penalty0 195--207, 1966.

\bibitem[Yu et~al.(2020)Yu, Thomas, Yu, Ermon, Zou, Levine, Finn, and Ma]{yu2020mopo}
Tianhe Yu, Garrett Thomas, Lantao Yu, Stefano Ermon, James~Y Zou, Sergey Levine, Chelsea Finn, and Tengyu Ma.
\newblock Mopo: Model-based offline policy optimization.
\newblock \emph{Advances in Neural Information Processing Systems}, 33:\penalty0 14129--14142, 2020.

\bibitem[Yu et~al.(2021)Yu, Kumar, Rafailov, Rajeswaran, Levine, and Finn]{yu2021combo}
Tianhe Yu, Aviral Kumar, Rafael Rafailov, Aravind Rajeswaran, Sergey Levine, and Chelsea Finn.
\newblock Combo: Conservative offline model-based policy optimization.
\newblock \emph{Advances in neural information processing systems}, 34:\penalty0 28954--28967, 2021.

\bibitem[Zhang et~al.(2023)Zhang, Mei, and Xu]{zhang2023continuous}
Zhiyue Zhang, Hongyuan Mei, and Yanxun Xu.
\newblock Continuous-time decision transformer for healthcare applications.
\newblock In \emph{International Conference on Artificial Intelligence and Statistics}, pages 6245--6262. PMLR, 2023.

\bibitem[Zhao et~al.(2023)Zhao, Tang, and Yao]{zhao2023policy}
Hanyang Zhao, Wenpin Tang, and David Yao.
\newblock Policy optimization for continuous reinforcement learning.
\newblock \emph{Advances in Neural Information Processing Systems}, 36:\penalty0 13637--13663, 2023.

\end{thebibliography}

\end{document}